\newcommand\stackrqarrow[1]{%
    \mathrel{\stackon[0pt]{$\rightsquigarrow$}{$\scriptscriptstyle#1$}}}
\newcommand\stackrarrow[1]{%
\mathrel{\stackon[0pt]{$\rightarrow$}{$\scriptscriptstyle#1$}}}
\newcommand\indep{\protect\mathpalette{\protect\independenT}{\perp}}
\def\independenT#1#2{\mathrel{\rlap{$#1#2$}\mkern2mu{#1#2}}}
\newcommand*\bigcdot{\mathpalette\bigcdot@{.5}}
\newcommand*\bigcdot@[2]{\mathbin{\vcenter{\hbox{\scalebox{#2}{$\m@th#1\bullet$}}}}}
\newcommand{\veryshortarrow}[1][3pt]{\mathrel{%
   \hbox{\rule[\dimexpr\fontdimen22\textfont2-.2pt\relax]{#1}{.4pt}}%
   \mkern-4mu\hbox{\usefont{U}{lasy}{m}{n}\symbol{41}}}}
\newcommand{\scriptveryshortarrow}[1][3pt]{{%
    \hbox{\rule[\scriptratio\dimexpr\fontdimen22\textfont2-.2pt\relax]
               {\scriptratio\dimexpr#1\relax}{\scriptratio\dimexpr.4pt\relax}}%
   \mkern-4mu\hbox{\let\f@size\sf@size\usefont{U}{lasy}{m}{n}\symbol{41}}}}
\DeclareRobustCommand
\newdimen\arrowsize 
\newtheorem{theorem}{Theorem}
\newtheorem{lemma}{Lemma}
\newtheorem{remark}{Remark}
\theoremstyle{definition}
\newtheorem{definition}{Definition}
\newtheorem{assumption}{Assumption}
\newtheorem{example}{Example}
\titlespacing{\subsection}{0pt}{-.1\parskip}{-.1\parskip}
\titlespacing{\subsubsection}{0pt}{-.1\parskip}{-.1\parskip}
\DeclarePairedDelimiterX\braket[2]{\langle}{\rangle}{#1 \delimsize\vert #2}
\let\emptyset\varnothing
\newcommand{\A}{\mathbf{A}}
\newcommand{\B}{\mathbf{B}}
\newcommand{\I}{\mathbf{I}}
\newcommand{\W}{\mathbf{W}}
\newcommand{\X}{\mathbf{X}}
\newcommand{\Y}{\mathbf{Y}}
\newcommand{\Z}{\mathbf{Z}}
\newcommand{\E}{\mathbf{E}}
\newcommand{\SH}{\mathbf{S}}
\newcommand{\Real}{\mathbb{R}}
\newcommand{\Scal}{\mathcal{S}}
\newcommand{\Xtilde}{\smash{\Tilde{\X}}}
\newcommand{\Ztilde}{\smash{\Tilde{\Z}}}
\newcommand{\Ytilde}{\smash{\Tilde{\Y}}}
\newcommand{\Etilde}{\smash{\Tilde{\E}}}
\newcommand{\Xt}{\smash{\Tilde{X}}}
\newcommand{\Xitilde}{\smash{\Tilde{X}_i}}
\newcommand{\Gtilde}{\smash{\Tilde{G}}}
\newcommand{\darmois}{\text{Darmois–Skitovich theorem}}
\newcommand{\T}{\intercal}
\newcommand{\w}{\omega}
\newcommand{\rank}{\operatorname{rank}}
\newcommand{\Anc}{\operatorname{Anc}}
\newcommand{\Ancout}[1]{\operatorname{Anc}_{\operatorname{out}(#1)}}
\newcommand{\cov}{\operatorname{cov}}
\newcommand{\var}{\operatorname{var}}
\newcommand{\nullspace}{\operatorname{null}}
\newcommand{\GIN}{\operatorname{GIN}}
\newcommand{\IN}{\operatorname{IN}}
\newcommand{\EGIN}{\operatorname{TIN}}
\newcommand{\TIN}{\operatorname{TIN}}
\newcommand{\LC}{\operatorname{LC}}
\newcommand{\OmegaZY}{\Omega_{\Z;\Y}}
\newcommand{\wTY}{\w^\T \Y}
\newcommand{\true}{\textsc{true}}
\newcommand{\false}{\textsc{false}}
\newcommand{\xmark}{\ding{55}}%
\newcommand{\red}[1]{\textcolor{red}{#1}}
\newcommand{\newcontent}[1]{{\color{black}#1}} %
\def\factory{\scalerel*{\includegraphics{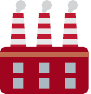}}{\textrm{\textbigcircle}}}
\def\air{\scalerel*{\includegraphics{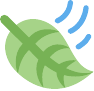}}{\textrm{\textbigcircle}}}
\def\lungs{\scalerel*{\includegraphics{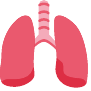}}{\textrm{\textbigcircle}}}
\def\report{\scalerel*{\includegraphics{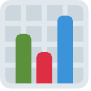}}{\textrm{\textbigcircle}}}
\def\hospital{\scalerel*{\includegraphics{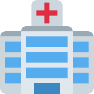}}{\textrm{\textbigcircle}}}
\title{Independence Testing-Based Approach to Causal Discovery under Measurement Error and\\Linear Non-Gaussian Models}
\author{Haoyue Dai$^{1,2}$
\quad
Peter Spirtes$^1$
\quad
Kun Zhang$^{1,2}$\\
$^1$Department of Philosophy, Carnegie Mellon University\\
$^2$Machine Learning Department, Mohamed bin Zayed University of Artificial Intelligence\\
\texttt{hyda@cmu.edu \quad ps7z@andrew.cmu.edu \quad kunz1@cmu.edu}
}
\begin{document}
\maketitle
\vspace{-0.5em}

\begin{abstract}
  \vspace{-1.2em}
Causal discovery aims to recover causal structures generating the observational data. Despite its success in certain problems, in many real-world scenarios the observed variables are not the target variables of interest, but the imperfect measures of the target variables. Causal discovery under measurement error aims to recover the causal graph among unobserved target variables from observations made with measurement error. We consider a specific formulation of the problem, where the unobserved target variables follow a linear non-Gaussian acyclic model, and the measurement process follows the random measurement error model. Existing methods on this formulation rely on non-scalable over-complete independent component analysis (OICA). In this work, we propose the Transformed Independent Noise ($\TIN$) condition, which checks for independence between a specific linear transformation of some measured variables and certain other measured variables. By leveraging the non-Gaussianity and higher-order statistics of data, $\TIN$ is informative about the graph structure among the unobserved target variables. By utilizing $\EGIN$, the ordered group decomposition of the causal model is identifiable. In other words, we could achieve what once required OICA to achieve by only conducting independence tests. Experimental results on both synthetic and real-world data demonstrate the effectiveness and reliability of our method.\footnote{An online demo and codes are available at \url{https://cmu.edu/dietrich/causality/tin}.}

\end{abstract}

\vspace{-0.5em}
\section{Introduction}
\label{sec:introduction}
\begin{wrapfigure}{r}{0.25\textwidth}
\centering
\vspace{0.7em}
\begin{tikzpicture}[->,>=stealth,shorten >=1pt,auto,node distance=1.25cm,
semithick,square/.style={regular polygon,regular polygon sides=4}]
\node[state, fill=lightgray!50, inner sep=0.1pt, minimum size=0.9cm] (F) [] {\LARGE\factory};
\node[state, fill=lightgray!50, inner sep=0.1pt, minimum size=0.9cm] (A) [right of=F] {\LARGE\air};
\node[state, fill=lightgray!50, inner sep=0.1pt, minimum size=0.9cm] (H) [right of=A] {\LARGE\lungs};
\node[state, inner sep=0.1pt, minimum size=0.9cm] (R) [below of=F] {\LARGE\report};
\node[state, inner sep=0.1pt, minimum size=0.9cm] (P) [below of=A] {\footnotesize PM$_{2.5}$};
\node[state, inner sep=0.1pt, minimum size=0.9cm] (L) [below of=H] {\LARGE\hospital};
\path (F) edge node {} (A)
(A) edge node {} (H)
(F) edge node {} (R)
(A) edge node {} (P)
(H) edge node {} (L);
\end{tikzpicture}
\caption{Example of measurement error. Gray nodes are latent underlying variables and white nodes are observed ones.}
\label{fig:illu_example}
\vspace{-0.7em}
\end{wrapfigure}
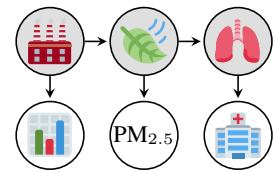Discovery of causal relations is a fundamental goal of science. To identify causal relations from observational data, known as causal discovery, has thus drawn much attention in various scientific fields, including economics, biology, and social science~\citep{spirtes2010automated, pearl2009causality, friedman2000using}. Methods for causal discovery can be roughly categorized into constraint-based ones (e.g., PC~\citep{spirtes1991algorithm}), score-based ones (e.g., Greedy Equivalence Search (GES)~\citep{chickering2002optimal}), and ones based on structural equation models (SEM)~\citep{shimizu2006linear, hoyer2008nonlinear, zhang2012identifiability}. Almost all these methods assume that the recorded values are values of the variables of interest, which however, is usually not the case in real-world scenarios. Some variables may be impossible to observe or quantify, so recorded values are actually a proxy of them (e.g., measure one's mental status by survey questionnaire), and some variables, though quantifiable, may subject to error introduced by instruments (e.g., measure brain signals using functional magnetic resonance (fMRI)). The difference between quantities of interest and their measured value is termed as \textit{measurement error}~\citep{fuller2009measurement}.

Measurement error adversely impairs causal discovery~\citep{pearl2012measurement, kuroki2014measurement, scheines2016measurement}. The measuring process can be viewed as directed edges from underlying variables of interest (unobservable) to measured values (observable), and the d-separation patterns on underlying variables typically do no hold on measured ones. Consider the causal effects from factory emissions $\factory$ to air quality $\air$ then to residents' lung health $\lungs$, as shown in~\cref{fig:illu_example}, while we only have corresponding measured quantities: chimney statistics $\report$, PM$_{2.5}$, and hospital reported cases $\hospital$. Though $\factory$ and $\lungs$ are independent given $\air$, $\report$ and $\hospital$ are however dependent given PM$_{2.5}$. If measurement error is severe, $\report$ and $\hospital$ even tend to be marginally independent~\citep{zhang2017causal, scheines2016measurement}, which makes PM$_{2.5}$ look like a collider (common child). One might thus incorrectly infer that, lung cancer causes air pollution. In fact, such measurement error is always a serious threat in environmental epidemiologic studies~\citep{richardson1993bayesian, edwards2017measurement}.

Denote by $\Xtilde=\{\Xitilde\}_{i=1}^n$ the latent measurement-error-free variables and $\X=\{X_i\}_{i=1}^n$ observed ones. While there are different models for measuring process~\citep{fuller2009measurement,carroll2006measurement,wansbeek2001measurement}, in this paper, we consider the random measurement error model~\citep{scheines2016measurement}, where the observed variables are generated from the latent measurement-error-free variables $\Xitilde$ with additive random measurement errors $\E=\{E_i\}_{i=1}^n$: 
\begin{equation} \label{Eq:def_of_random_measurement_error_model}
X_i = \tilde{X}_i + E_i.
\end{equation}
Measurement errors $\E$ are assumed to be mutually independent and independent of $\Xtilde$. We assume causal sufficiency relative to $\Xtilde$ (i.e., no confounder of $\Xtilde$ who does not have a respective measurement), and focus on the case where $\Xtilde$ is generated by a linear, non-Gaussian, acyclic model (LiNGAM~\citep{shimizu2006linear}, see~\cref{sec:basic_notations}). Note that here w.l.o.g., the linear weights of $\{\Xitilde\rightarrow X_i\}_{i=1}^n$ are assumed to be one (since we do not care about scaling). Generally, if observations are measured by $X_i = c_i\tilde{X}_i + E_i$ with weights $\{c_i\}_{i=1}^n$ not necessarily being one, all results in this paper still hold. \looseness=-1 

The objective of causal discovery under measurement error is to recover causal structure among latent variables $\Xtilde$, denoted as $\Gtilde$, a directed acyclic graph (DAG), from contaminated observations $\X$. As illustrated by~\cref{fig:illu_example}, causal discovery methods that utilize (conditional) independence produce biased estimation (see~\cref{prop:rare_dsep} for details). SEM-based methods also typically fail to find correct directions, since the SEM for $\Xtilde$ usually do not hold on $\X$. Unobserved $\Xtilde$ are actually confounders of $\X$, and there exists approaches to deal with confounders, such as Fast Causal Inference (FCI~\citep{spirtes2000causation}). However, they focus on structure among observed variables instead of the unobserved ones, which is what we aim to recover here. With the interest for the latter, another line of research called \textit{causal discovery with latent variables} is developed, which this paper is also categorized to. However, existing methods~\citep{silva2005generalized, silva2006learning, sullivant2010trek, spirtes2013calculation, anandkumar2013learning, kummerfeld2016causal, xie2020generalized, salehkaleybar2020learning} cannot be adopted either, since they typically require at least two measurements (indicators) for each latent variable, while we only have one for each here (and is thus a more difficult task). Specifically on the measurement error problem, \citep{halpern2015anchored} proposes anchored causal inference in the binary setting. In the linear Gaussian setting, \citep{zhang2017causal} presents identifiability conditions by factor analysis. A main difficulty here is the unknown variances of the measurement errors $\E$, otherwise the covariance matrix of $\Xtilde$ can be obtained and readily used. To this end, \citep{blom2018upper} provides an upper-bound of $\E$ and~\citep{saeed2020anchored} develops a consistent partial correlations estimator. In linear non-Gaussian settings (i.e., the setting of this paper), \citep{zhang2018causal} shows that the \textit{ordered group decomposition} of $\Gtilde$, which contains major causal information, is identifiable. However, the corresponding method relies on over-complete independent component analysis (OICA~\citep{hyvarinen2000independent}), which is notorious for suffering from local optimal and high computational complexity~\citep{shimizu2009estimation, hoyer2008estimation}. Hence, the identifiability results in~\citep{zhang2018causal}, despite the theoretical correctness, is far from practical achievability.

The main contributions of this paper are as follows: \textbf{1)} We define the \textbf{T}ransformed \textbf{I}ndependent \textbf{N}oise ($\TIN$) condition, which finds and checks for independence between a specific linear transformation (combination) of some variables and others. The existing Independent Noise ($\IN$~\citep{shimizu2011directlingam}) and Generalized Independent Noise ($\GIN$~\citep{xie2020generalized}) conditions are special cases of $\TIN$. \textbf{2)} We provide graphical criteria of $\TIN$, which might further improve identifiability of causal discovery with latent variables. \textbf{3)} We exploit $\TIN$ on a specific task, causal discovery under measurement error and LiNGAM, and identify the \textit{ordered group decomposition}. This identifiability result once required computationally and statistically ineffective OICA to achieve, while we achieve it merely by conducting independence tests. Evaluation on both synthetic and real-world data demonstrate the effectiveness of our method.

\vspace{-0.6em}
\section{Motivation: Independence Condition and Structural Information}
\label{sec:motivation}
The example in~\cref{fig:illu_example} illustrates how the (conditional) (in)dependence relations differ between observed $\X$ and latent $\Xtilde$, and thus lead to biased discovery results. To put it generally, we have,
\begin{restatable}[\textit{rare} d-separation]{proposition}{RAREDSEP}\label{prop:rare_dsep}Suppose variables follow random measurement error model defined in~\cref{Eq:def_of_random_measurement_error_model}. For disjoint sets of observed variables $\Z, \Y, \SH$ and their respective latent ones $\Ztilde, \Ytilde, \Tilde{\SH}$, d-separation $\Z \indep_{\mkern-9.5mu d} \Y \vert \SH$ holds, only when marginally $\Ztilde \indep_{\mkern-9.5mu d} \Ytilde$, and $\Ztilde \indep_{\mkern-9.5mu d} \Ytilde\vert \Tilde{\SH}$ hold.%
\end{restatable}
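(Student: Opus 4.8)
The plan is to prove the two necessary conditions separately by contraposition, via a path-lifting argument. I work in the augmented DAG $\Gcal$ obtained from the latent graph $\Gtilde$ by adding, for each $i$, the measurement edge $\Tilde{X}_i \to X_i$; in $\Gcal$ each observed $X_i$ is a childless sink whose only parent is $\Tilde{X}_i$. The first thing I would record is the structural consequence: an observed variable can never be an interior vertex of a path, so every $\Gcal$-path between some $X_a \in \Z$ and some $X_b \in \Y$ has the canonical shape $X_a \leftarrow \Tilde{X}_a \,\cdots\, \Tilde{X}_b \to X_b$, whose middle portion is a $\Gtilde$-path $q$ between the parents $\Tilde{X}_a$ and $\Tilde{X}_b$. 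I would also record the asymmetric effect of conditioning on the observed set $\SH$: since $\SH$ contains no latent node, it can never block a non-collider of a lifted path; but since each $X_s \in \SH$ is a child, hence a descendant, of $\Tilde{X}_s$, conditioning on $\SH$ opens precisely those latent colliders having some member of $\Tilde{\SH}$ among their $\Gtilde$-descendants (itself included).

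For the marginal claim, suppose $\Ztilde \indep_{\mkern-9.5mu d} \Ytilde$ fails. Then $\Gtilde$ contains a collider-free path $q$ from some $\Tilde{X}_a \in \Ztilde$ to some $\Tilde{X}_b \in \Ytilde$. Lifting $q$ to $X_a \leftarrow \Tilde{X}_a \,\cdots\, \Tilde{X}_b \to X_b$ leaves $\Tilde{X}_a$ and $\Tilde{X}_b$ as non-colliders and creates no new collider, so the lifted path is entirely collider-free and all its interior vertices are latent, hence outside $\SH$; it is therefore active given $\SH$, contradicting $\Z \indep_{\mkern-9.5mu d} \Y \mid \SH$.

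For the conditional claim, suppose $\Ztilde \indep_{\mkern-9.5mu d} \Ytilde \mid \Tilde{\SH}$ fails, witnessed by a $\Gtilde$-path $q$ that is active given $\Tilde{\SH}$ (its non-colliders avoid $\Tilde{\SH}$ and each of its colliders has a $\Tilde{\SH}$-descendant). I lift $q$ exactly as above and check that it stays active given $\SH$: its non-colliders are latent, hence never blocked by the observed set $\SH$; and for any collider $\Tilde{X}_c$ of $q$, a latent descendant $\Tilde{X}_d \in \Tilde{\SH}$ supplies an observed descendant $X_d \in \SH$ of $\Tilde{X}_c$ in $\Gcal$, so the collider stays open. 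Hence the lifted path is active given $\SH$, again contradicting $\Z \indep_{\mkern-9.5mu d} \Y \mid \SH$.

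The one step I would treat most carefully is this collider-activation correspondence, i.e.\ the equivalence between ``$\Tilde{X}_c$ has a $\Tilde{\SH}$-descendant in $\Gtilde$'' and ``$\Tilde{X}_c$ has an $\SH$-descendant in $\Gcal$''; it rests on the fact that the only $\Gcal$-descendants of a latent node that can lie in $\SH$ are the observed children of its latent descendants. The remaining ingredients---non-colliders never blocking, and the endpoints $X_a, X_b$ being disjoint from $\SH$---are immediate from the structural observation. This asymmetry also explains the name \emph{rare}: conditioning on observed children can only open latent colliders and never close latent non-colliders, so observed d-separation forces the marginal and the conditional latent d-separations to hold simultaneously, a demanding requirement.
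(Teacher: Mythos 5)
Your proof is correct and rests on exactly the same structural facts as the paper's own argument (observed variables are single-parent sinks, so every path between $\Z$ and $\Y$ lifts a latent path whose non-colliders are all latent and hence unblockable by $\SH$, while collider activation transfers between $\Tilde{\SH}$-descendants in $\Gtilde$ and $\SH$-descendants in the augmented graph); you merely run the argument in contrapositive form, lifting an active latent path, whereas the paper argues directly that every observed path must be blocked by an unactivated collider. This is the same proof up to direction.
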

By \textit{`rare'} we mean that the d-separation patterns among $\Xtilde$ usually do not hold among $\X$ (except for \textit{rare} marginal ones), since the observed variables are not causes of any other (though the latent variables they intend to measure might be). For example, consider the underlying $\Gtilde$ to be chain structure (\cref{fig:chain}) and fully connected DAG (\cref{fig:fully_connected}). There exists no (conditional) independence on either, and PC algorithm will output just a fully connected skeleton on both cases. Then, without (conditional) independence (which is non-parametric) to be directly used, can we \textit{create independence}, by leveraging the parametric assumption (LiNGAM) and benefit from non-Gaussianity? 

Naturally we recall the Independent Noise (IN) condition proposed in Direct-LiNGAM~\citep{shimizu2011directlingam}:
\begin{definition}[$\IN$ condition]\label{def:in_definition}
Let $Y_i$ be a single variable and $\Z$ be a set of variables. Suppose variables follow LiNGAM. We say ($\Z,Y_i$) satisfies $\IN$ condition, denoted by $\IN(\Z,Y_i)$, if and only if the residual of regressing $Y_i$ on $\Z$ is statistically independent to $\Z$. Mathematically, let $\Tilde{\w}$ be the vector of regression coefficients, i.e., $\Tilde{\w}\coloneqq \cov(Y_i,\Z)\cov(\Z,\Z)^{-1}$; $\IN(\Z,Y_i)$ holds iff $Y_i - \Tilde{\w}^\intercal \Z \indep \Z$.
\end{definition}\vspace{-0.3em}
Here ``$\cov$'' denotes the variance-covariance matrix. $\IN$ identifies exogenous (root) variables, based on which the causal ordering of variables can be determined (Lemma 1 in~\citep{shimizu2011directlingam}). However, $\IN$ cannot be applied to measurement error model. With hidden confounders ($\Xtilde$) behind observed $\X$, independence between regressor and residual typically does not exist on any regression among $\X$. In fact, $\X$ follows errors-in-variables models~\citep{griliches1970error,chesher1991effect}, for which the identifiability w.r.t. $\Gtilde$ is not clear.

However, we might still benefit from this idea to leverage non-Gaussianity of exogenous noises. Consider the~\cref{fig:illu_example} example and abstract it to $\smash{\Tilde{X}_1 \stackrarrow{a} \Tilde{X}_2 \stackrarrow{b} \Tilde{X}_3}$ with $\smash{\{\Xitilde\stackrarrow{1}X_i\}_{i=1}^3}$. Although $\IN$ does not hold on any of $\X$, interestingly, there exists a linear transformation of observations $bX_2 - X_3$, which contains only $\{\Tilde{E}_3, E_2, E_3\}$ ($\Tilde{E}_1$ and $\Tilde{E}_2$ are cancelled out) and shares no common non-Gaussian noise term with $X_1$. Hence, by the $\darmois$~\citep{kagan1973characterization}, $bX_2 - X_3$ is independent of $X_1$. This finding is echoed in Generalized Independent Noise ($\GIN$~\citep{xie2020generalized}) condition: %

\begin{definition}[$\GIN$ condition]\label{def:gin_definition}
Let $\mathbf{Z}$ and $\mathbf{Y}$ be two sets of random variables that follow LiNGAM. We say ($\Z,\Y$) satisfies the $\GIN$ condition, denoted by $\GIN(\Z,\Y)$, if and only if the following two conditions are satisfied: \textbf{1)} There exists nonzero solution vectors $\omega\in\Real^{\vert \Y \vert}$ to equation $\cov(\Z,\Y)\omega=\mathbf{0}$, and \textbf{2)} Any such solution $\omega$ makes the linear transformation $\wTY$ independent of $\Z$.
\end{definition}
\vspace{-0.3em}
Here $\vert \Y \vert$ denotes the dimensionality of $\Y$. The intuition of $\GIN$ is that, despite no independent residual by normal regression, it is possible to realize independent \textit{``pseudo-residuals''}~\citep{cai2019triad} by regressing with \textit{``reference variables''}. \citep{xie2020generalized} shows $\IN$ as a special case of $\GIN$ (Proposition 2), and further gives graphical criteria (Theorem 2), based on which a recursive learning algorithm is developed to solve the latent-variable problem. Each latent variable is required to have at least two observations. Interestingly we find that, in measurement error models, if each latent variable $\Xitilde$ has two measurements $\smash{X_{i_1}, X_{i_2}}$, then the $\GIN$ condition can be readily used to \textit{fully identify} the structure of $\Gtilde$, which is already a breakthrough over existing methods~\citep{silva2006learning, sullivant2010trek, spirtes2013calculation, kummerfeld2016causal}. See~\cref{app:gin_two_measurements} for the whole procedure. With regard to our more challenging task where each $\Xitilde$ has only one measurement $X_i$, a natural question is that, can $\GIN$ also help? Given the example in~\cref{fig:illu_example} illustrated above, the answer seems to be affirmative: $\GIN(\{X_i\}, \X\backslash \{X_i\})$ only holds for $i=1$, so the root can be identified. More generally: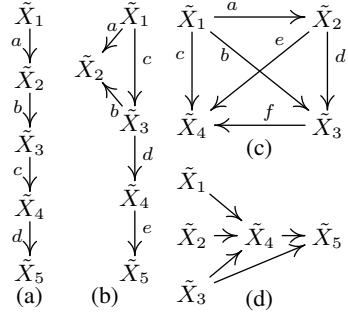
\begin{wrapfigure}[14]{r}{0.35\textwidth}
\vspace{-0.3em}
\begin{minipage}[b][0.30\textwidth][t]{0.05\textwidth}\centering
\begin{tikzpicture}[scale=0.6, line width=0.5pt, inner sep=0.2mm, shorten >=.1pt, shorten <=.1pt]
		\draw (0, 5.7) node(1)  {{\footnotesize\,$\Tilde{X}_1$\,}};
		\draw (0, 4.275) node(2)  {{\footnotesize\,$\Tilde{X}_2$\,}};
		\draw (0, 2.85) node(3)  {{\footnotesize\,$\Tilde{X}_3$\,}};
		\draw (0, 1.425) node(4)  {{\footnotesize\,$\Tilde{X}_4$\,}};
		\draw (0, 0) node(5)  {{\footnotesize\,$\Tilde{X}_5$\,}};
		\draw[-arcsq] (1) -- (2) node[pos=0.4, left=2pt] {\scriptsize{$a$}}; 
		\draw[-arcsq] (2) -- (3) node[pos=0.4, left=2pt] {\scriptsize{$b$}}; 
		\draw[-arcsq] (3) -- (4) node[pos=0.4, left=2pt] {\scriptsize{$c$}}; 
		\draw[-arcsq] (4) -- (5) node[pos=0.4, left=2pt] {\scriptsize{$d$}}; 
	\end{tikzpicture}\vspace{-0.6em}
    \subcaption{} \label{fig:chain} %
\end{minipage}%
\hfill
  \begin{minipage}[b][0.30\textwidth][t]{0.07\textwidth}
  \centering
    \begin{tikzpicture}[scale=0.6, line width=0.5pt, inner sep=0.2mm, shorten >=.1pt, shorten <=.1pt]
		\draw (0, 5.7) node(1)  {{\footnotesize\,$\Tilde{X}_1$\,}};
		\draw (-1, 4.5125) node(2)  {{\footnotesize\,$\Tilde{X}_2$\,}};
		\draw (0, 3.325) node(3)  {{\footnotesize\,$\Tilde{X}_3$\,}};
		\draw (0, 1.6625) node(4)  {{\footnotesize\,$\Tilde{X}_4$\,}};
		\draw (0, 0) node(5)  {{\footnotesize\,$\Tilde{X}_5$\,}};
		\draw[-arcsq] (1) -- (2) node[pos=0.5,above=3pt] {\scriptsize{$a$}}; 
		\draw[-arcsq] (1) -- (3) node[pos=0.4,right=2pt] {\scriptsize{$c$}}; 
		\draw[-arcsq] (3) -- (2) node[pos=0.4,below=2pt] {\scriptsize{$b$}}; 
		\draw[-arcsq] (3) -- (4) node[pos=0.4,right=2pt] {\scriptsize{$d$}}; 
		\draw[-arcsq] (4) -- (5) node[pos=0.4,right=2pt] {\scriptsize{$e$}}; 
		\end{tikzpicture}\vspace{-0.6em}
    \subcaption{} \label{fig:triangle_head_chain} %
  \end{minipage}%
\hfill
  \begin{minipage}[b][0.30\textwidth][t]{0.20\textwidth}
  \centering
    \begin{tikzpicture}[scale=0.6, line width=0.5pt, inner sep=0.2mm, shorten >=.1pt, shorten <=.1pt]
		\draw (0.5, 3.825) node(1)  {{\footnotesize\,$\Tilde{X}_1$\,}};
		\draw (3.5, 3.825) node(2)  {{\footnotesize\,$\Tilde{X}_2$\,}};
		\draw (3.5, 1.425) node(3)  {{\footnotesize\,$\Tilde{X}_3$\,}};
		\draw (0.5, 1.425) node(4)  {{\footnotesize\,$\Tilde{X}_4$\,}};
		\draw[-arcsq] (1) -- (2) node[pos=0.2,above=2pt] {\scriptsize{$a$}}; 
		\draw[-arcsq] (1) -- (3) node[pos=0.15,below=1pt] {\scriptsize{$b$}}; 
		\draw[-arcsq] (1) -- (4) node[pos=0.3, left=2pt] {\scriptsize{$c$}}; 
		\draw[-arcsq] (2) -- (3) node[pos=0.3,right=2pt] {\scriptsize{$d$}}; 
		\draw[-arcsq] (2) -- (4) node[pos=0.3,above=3pt] {\scriptsize{$e$}}; 
		\draw[-arcsq] (3) -- (4) node[pos=0.4,above=1pt] {\scriptsize{$f$}}; 
		\end{tikzpicture}\vspace{-0.7em}
    \subcaption{} \label{fig:fully_connected} %
    \begin{tikzpicture}[scale=0.6, line width=0.5pt, inner sep=0.2mm, shorten >=.1pt, shorten <=.1pt]
		\draw (0.5, 3.75) node(1)  {{\footnotesize\,$\Tilde{X}_1$\,}};
		\draw (0.5, 2.5) node(2)  {{\footnotesize\,$\Tilde{X}_2$\,}};
		\draw (0.5, 1.25) node(3)  {{\footnotesize\,$\Tilde{X}_3$\,}};
		\draw (2, 2.5) node(4)  {{\footnotesize\,$\Tilde{X}_4$\,}};
		\draw (3.5, 2.5) node(5)  {{\footnotesize\,$\Tilde{X}_5$\,}};
		\draw[-arcsq] (1) -- (4);
		\draw[-arcsq] (2) -- (4);
		\draw[-arcsq] (3) -- (4);
		\draw[-arcsq] (3) -- (5);
		\draw[-arcsq] (4) -- (5);
		\end{tikzpicture}\vspace{-1.3em}
    \subcaption{} \label{fig:example_d} %
    
  \end{minipage}
  \vspace{-0.5em}
\caption{Graph structure $\Gtilde$ examples. For simplicity, observed $\X$, measurement edges are omitted.} \label{fig:graph_examples}
\end{wrapfigure}
\begin{example}[$\GIN$ on chain structure]\label{eg:gin_on_chain} Consider cases where the underlying graph $\Gtilde$ is a chain structure with $n$ ($n\geq3$) vertices and directed edges $\smash{\{\Tilde{X}_i\rightarrow \Tilde{X}_{i+1}\}_{i=1}^{n-1}}$. \cref{fig:chain} is an example with $n=5$. We find that $\GIN(\Z=X_1,\Y=X_{2,3,4,5})$ holds (where $X_{2,3,4,5}$ denotes $\{X_2,X_3,X_4,X_5\}$; same below), with solution $\w=[-bx-bcy-bcdz,x,y,z]^\T$, $x,y,z\in\Real$. $\wTY$ cancels noise components in $\Y$ that are also shared by $\Z$, and thus $\wTY\indep\Z$. However, $\GIN(X_i,\X\backslash\{X_i\})$ is violated for any other $i\neq 1$ (see~\cref{app:eg_gin_on_chain_details} for a detailed derivation). With this asymmetry, the latent root $\smash{\Tilde{X}_1}$ can be identified. Furthermore, $\GIN(X_i, X_{i+1,\cdotsshort,n})$ holds for any $i=1,\cdotsshort,n-2$.%
\end{example}
\cref{eg:gin_on_chain} might give us an intuition that by recursively testing $\GIN$ (over the newly found subroot and the remaining variables), we could identify the causal ordering of first $n-2$ variables for any DAG. However, this is over-optimistic thanks to the sparsity of chain structure. Consider a denser structure:
\vspace{0.1em}
\begin{example}[$\GIN$ on fully connected DAG]\label{eg:gin_on_fully_connected} Consider cases where $\Gtilde$ is a fully connected DAG with $n$ ($n\geq3$) vertices and directed edges $\smash{\{\Tilde{X}_i\rightarrow \Tilde{X}_j\}}$ for every $i<j$. \cref{fig:fully_connected} is an example with $n=4$. 
We first find that $\GIN(X_1, X_{2,3,4})$ holds. However, in contrast to the chain structure, $\GIN(X_2,X_{3,4})$ does not hold - there is no way to cancel both $\tilde{E}_{1,2}$ from $X_{3,4}$. More generally we have: $\GIN(X_1, X_{2,\cdotsshort,n})$, $\GIN(X_{1,2}, X_{3,\cdotsshort,n})$, $\cdotsshort$, $\GIN(X_{1,\cdotsshort,k}, X_{k+1,\cdotsshort,n})$, $k=\lfloor (n-1)/2 \rfloor$.
\end{example}\vspace{-0.3em}
Since a fully connected DAG is the densest extreme,~\cref{eg:gin_on_fully_connected} might give us an intuition that $\GIN$ could identify at least the causal ordering of the first half of variables. Unfortunately, this is still over-optimistic, since we could not know beforehand the structure type of $\Gtilde$.

\begin{example}[$\GIN$ on chain structure with \textit{triangular head}]\label{eg:gin_on_triangle_head_chain} \cref{fig:triangle_head_chain} shows a variation of chain structure, with edges $\smash{\Tilde{X}_1 \rightarrow \Tilde{X}_2, \Tilde{X}_1\rightarrow\Tilde{X}_3, \Tilde{X}_3\rightarrow\Tilde{X}_2}$, and $\smash{\{\Tilde{X}_i\rightarrow \Tilde{X}_{i+1}\}_{i=3}^{n-1}}$. We name it ``chain structure with \textit{triangular head}''. Interestingly, $\GIN(X_2, X_{3,\cdotsshort,n})$, which is satisfied on~\cref{fig:chain}, is also satisfied here. E.g., in~\cref{fig:triangle_head_chain} ($n=5$), the solution $\w = [-dx-dey,x,y]^\T$, $x,y\in\Real$. $\wTY$ cancels $\Tilde{E}_{1,3}$ from $\Y$, and thus $\wTY\indep\Z$. Actually, the chain structure with \textit{triangular head} $\Tilde{G}_{\triangledown \mathbb{C}}$ is \textit{unidentifiable} with chain structure $\Tilde{G}_{\mathbb{C}}$ w.r.t. $\GIN$ conditions, i.e., for any two sets of observed variables $\Z,\Y\subseteq\X$, $\GIN(\Z,\Y)$ holds on $\Tilde{G}_{\triangledown \mathbb{C}}$ if and only if $\GIN(\Z,\Y)$ holds on $\Tilde{G}_{\mathbb{C}}$. Consequently, if directly using any recursive algorithm by $\GIN$ as in~\cref{eg:gin_on_chain}, the output causal ordering would still be $\Tilde{X}_1, \Tilde{X}_2, \Tilde{X}_3,\cdotsshort$, which is incorrect due to $\Tilde{X}_3\rightarrow \Tilde{X}_2$ in the triangular head.
\end{example}

The above examples show that it is not as simple as it seems to use $\GIN$ for one-measurement model: only part of the causal ordering can be identified, and worse yet, rather complicated error correction is needed to deal with possible incorrect orderings. However, after a closer look at the unidentifiable examples above, we find that actually more information can be uncovered beyond the $\GIN$ condition:\looseness=-1

\begin{example}[Asymmetry beyond $\GIN$]\label{eg:revisiiting_difficult_gin_examples} \textbf{1)} Consider~\cref{eg:gin_on_fully_connected} where only the root variable $\Tilde{X}_1$ can be identified by $\GIN$ and $\Tilde{X}_{2,3,4}$ are \textit{unidentifiable}, i.e., permutation of the labeling of $X_{2,3,4}$ will still preserve the $\GIN$ condition over any two subsets $\Z,\Y$. However, there actually exists an asymmetry between $X_2$ and $X_{3,4}$: we could construct linear transformation of $X_{1,3,4}$: $\frac{cd-be}{d}X_1 + \frac{df+e}{d}X_3 - X_4$ s.t. it is independent of $X_2$, while there exists no such linear transformation of $X_{1,2,4}$ to be independent of $X_3$, and also for $X_{1,2,3}$ to $X_4$. \textbf{2)} Consider~\cref{eg:gin_on_chain,eg:gin_on_triangle_head_chain} where $\Tilde{G}_{\mathbb{C}}$ and $\Tilde{G}_{\triangledown \mathbb{C}}$ are unidentifiable w.r.t. $\GIN$ conditions. Let $\Z\coloneqq X_4$ and $\Y\coloneqq X_{1,2,3}$, $\GIN(\Z,\Y)$ is violated on both graphs. However, an asymmetry actually exists: on $\Tilde{G}_{\triangledown \mathbb{C}}$, we could construct $aX_1-X_2+bX_3$ (which cancels $\Tilde{E}_{1,3}$) to be independent to $X_4$, while this is impossible on $\Tilde{G}_{\mathbb{C}}$.
\end{example}

To put simply, the motivation of independent \textit{``pseudo-residual''} behind $\GIN$ actually limits the power of non-Gaussianity, with the coefficients vector $\w$ only characterized from variance-covariance matrix (2nd-order). There are actually two cases for $\GIN(\Z,\Y)$ to be violated: 1) though \textit{not all solution} $\w$ makes $\wTY\indep\Z$, there \textit{exists} non-zero $\w$ s.t. $\wTY\indep\Z$, and 2) there naturally exists \textit{no} non-zero $\w$ s.t. $\wTY\indep\Z$. The original $\GIN$ cannot distinguish between these two cases. Hence in this paper, we first aim to distinguish between the two, generalizing $\GIN$ condition to $\EGIN$ condition.
\vspace{-1em}

\section{With Transformed Independent Noise Condition}
\label{sec:approach}
\vspace{-0.5em}
In the above discussion, one can see that the presence of measurement error affects the conditional independence relations among the variables and the independent noise condition. However, we will show in~\cref{thm:equiv_gin_true_observed} (\cref{sec:exploit}) that a specific type of independence conditions are shared between the underlying error-free variables and the measured variables with error. In this section, we will formulate such independence conditions and investigate their graphical implications for error-free variables (i.e., variables generated by the LiNGAM without measurement error). In~\cref{sec:exploit}, we will then extend the results to the measured variables. Please note that \underline{in contrast to other sections}, the notation used in this section, including $\X$, $\Y$, and $\SH$, denotes \textit{error-free variables} generated by the LiNGAM.\looseness=-1
\subsection{Notations}\label{sec:basic_notations}
Let $G$ be a directed acyclic graph with the vertex set $V(G)=[n]\coloneqq\{1,2,\cdots,n\}$ and edge set $E(G)$. A directed path $P = (i_0, i_1, \cdots, i_k)$ in $G$ is a sequence of vertices of $G$ where there is a directed edge from $i_j$ to $i_{j+1}$ for any $0\leq j \leq k-1$. We use notation $i\rightsquigarrow j$ to show that there exists a directed path from vertex $i$ to $j$. Note that a single vertex is also a directed path, i.e., $i\rightsquigarrow i$ holds. Let $\Z \subseteq [n]$ be a subset of vertices. Define ancestors $\Anc(\Z)\coloneqq \{j|\exists i\in\Z, j\rightsquigarrow i\}$. Note that $\Z\subseteq \Anc(\Z)$. Further let $\SH$ be a subset of vertices. We use notation $i \stackrqarrow{\cancel{[\SH]}} j$ to show that there exists a directed path from vertex $i$ to $j$ without passing through $\SH$, i.e., there exists a directed path $P = (i, m_0, \cdots, m_k,j)$ in $G$ s.t. $i,j\not\in\SH$ and $m_l\not\in\SH$ for any $0\leq l \leq k$. Define \textit{ancestors outside $\SH$} accordingly: for two vertex sets $\Y,\SH$, denote ancestors of $\Y$ that have directed paths into $\Y$ without passing through $\SH$ as $\Ancout{\SH}(\Y)\coloneqq \{j|\exists i\in\Y, j\stackrqarrow{\cancel{[\SH]}} i\}$. Note that the graphical definitions here can also be translated to \textit{trek}~\citep{sullivant2010trek} language (see~\cref{app:vertex_cut} for details).

Assume random variables $\X\coloneqq \{X_i\}_{i\in [n]}$ are generated by LiNGAM w.r.t. graph $G$, i.e.,
\begin{equation} \label{Eq:lingam_generating_process}
\X=\A\X+\E=\B\E\text{, with }\B=(\I-\A)^{-1}.
\end{equation}
where $\E= \{E_i\}_{i\in [n]}$ are corresponding mutually independent exogenous noises. $\A$ is the adjacency matrix where entry $\A_{j,i}$ is the linear weight of direct causal effect of variable $X_i$ on $X_j$. $\A_{j,i}\neq 0$ if and only if there exists edge $i\rightarrow j$. $\X$ can also be written directly as a mixture of exogenous noises $\X=\B\E$. If the entry of mixing matrix $\B_{j,i}\neq 0$, then $i\in\Anc(\{j\})$. Note that here and in what follows, we use boldface letters $\A,\B$ to denote matrices, and use boldface letters $\SH,\W,\X,\Y,\Z$ with notation abuse: it can denote vertices set, respective random variables set, or random vector. When we say ``two variables sets $\Z,\Y$'', if not otherwise specified, $\Z,\Y$ need not be disjoint.

\subsection{Independent Linear Transformation Subspace and its Characterization}\label{sec:independent_linear_transformation_subspace}
We first give the definition and characterization of the  \textit{independent linear transformation subspace}.
\begin{definition}[Independent linear transformation subspace]\label{def:independent_omega_subspace}
Let $\Z$ and $\Y$ be two subsets of random variables. Suppose the variables follow the linear non-Gaussian acyclic causal model. Denote:
\begin{equation} \label{Eq:Omega_ZY_definition}\Omega_{\Z;\Y}\coloneqq \{\omega\in \Real^{\vert \Y \vert} \ \vert \ \omega^\intercal \Y \indep \Z\}.\end{equation}\end{definition}\vspace{-0.3em}
By the property of independence, $\Omega_{\Z;\Y}$ is closed under scalar multiplication and addition, and thus is a subspace in $\Real^{\vert \Y \vert}$. In fact, $\Omega_{\Z;\Y}$ can be characterized as a nullspace as follows:
\begin{restatable}[Characterization of $\Omega_{\Z;\Y}$]{theorem}{CHARACTERIZATIONOMEGAZY}\label{thm:characterization_Omega_ZY} 
For two variables subsets $\Z$ and $\Y$, $\OmegaZY$ satisfies:
\begin{equation} \label{Eq:Omega_ZY_characterization_general}
\Omega_{\Z;\Y} = \operatorname{null}(\B_{\Y, \operatorname{nzcol}(\B_{\Z,:})}^\intercal).\end{equation}\vspace{0.2em}
where $\operatorname{null}(\cdot)$ denotes nullspace. $\B_{\Y, \operatorname{nzcol}(\B_{\Z,:})}$ denotes the submatrix of mixing matrix $\B$, with rows indexed by $\Y$ and columns indexed by $\operatorname{nzcol}(\B_{\Z,:})$. $\operatorname{nzcol}(\B_{\Z,:})$ denotes the column indices where the submatrix $\B_{\Z,:}$ has non-zero entries. $\operatorname{nzcol}(\B_{\Z,:})$ actually corresponds to the exogenous noises that constitute $\Z$. Particularly, if assuming ``if $i\rightsquigarrow j$ then $\B_{j,i}\neq 0$'', then, $\operatorname{nzcol}(\B_{\Z,:})=\Anc(\Z)$.
\end{restatable}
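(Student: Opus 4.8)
The plan is to pass from the independence statement defining $\OmegaZY$ to an algebraic condition on the mixing matrix by rewriting everything in terms of the independent exogenous noises and then invoking non-Gaussianity. First I would use $\X=\B\E$ to write $\w^\T\Y=\w^\T\B_{\Y,:}\E$ and $\Z=\B_{\Z,:}\E$, so that $\w^\T\Y$ is a single linear combination of the noises whose coefficient on $E_k$ equals $(\w^\T\B_{\Y,:})_k=\sum_{j\in\Y}\w_j\B_{j,k}$. Writing $S\coloneqq\operatorname{nzcol}(\B_{\Z,:})$ for the noises that constitute $\Z$, membership $\w\in\nullspace(\B_{\Y,S}^\T)$ is by definition equivalent to $(\w^\T\B_{\Y,:})_k=0$ for every $k\in S$, i.e.\ $\w^\T\Y$ places zero weight on exactly those noises appearing in $\Z$. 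The proof then reduces to showing that this vanishing-weight condition is equivalent to $\w^\T\Y\indep\Z$.

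For the inclusion $\nullspace(\B_{\Y,S}^\T)\subseteq\OmegaZY$ I would argue directly: if $\w$ lies in the nullspace then $\w^\T\Y$ is a function of only the noises $\{E_k:k\notin S\}$, which are disjoint from the noises $\{E_k:k\in S\}$ that generate $\Z$; by mutual independence of the exogenous noises these two functions are independent, so $\w\in\OmegaZY$. Note this direction uses only independence, not non-Gaussianity. The reverse inclusion $\OmegaZY\subseteq\nullspace(\B_{\Y,S}^\T)$ I would prove by contraposition: if $\w\notin\nullspace(\B_{\Y,S}^\T)$ there is some $k\in S$ with $(\w^\T\B_{\Y,:})_k\neq0$, and since $k\in S$ some coordinate $Z_m$ of $\Z$ also carries a nonzero coefficient $\B_{m,k}$ on $E_k$. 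Both $\w^\T\Y$ and $Z_m$ are then linear combinations of the same independent noises sharing a nonzero loading on $E_k$, so by the \darmois{} their independence would force $E_k$ to be Gaussian; as the noises are non-Gaussian, $\w^\T\Y$ and $Z_m$ must be dependent, whence $\w^\T\Y$ is not independent of $\Z$ and $\w\notin\OmegaZY$.

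The hard part is this reverse inclusion, and specifically the clean application of the \darmois{}: the key simplification is to test dependence against a single well-chosen coordinate $Z_m$ rather than the whole vector $\Z$, which sidesteps any multivariate extension of the theorem and lets the scalar statement do all the work. The one hypothesis that must be flagged is that every exogenous noise is non-Gaussian (LiNGAM tolerates at most one Gaussian); if the shared component were that lone Gaussian the argument would degenerate, so the characterization is stated under genuinely non-Gaussian noises. Finally, the closing identity $\operatorname{nzcol}(\B_{\Z,:})=\Anc(\Z)$ follows by combining the always-valid implication ``$\B_{j,i}\neq0\Rightarrow i\in\Anc(\{j\})$'' (which gives $\operatorname{nzcol}(\B_{\Z,:})\subseteq\Anc(\Z)$) with the stated assumption ``$i\rightsquigarrow j\Rightarrow\B_{j,i}\neq0$'' (which gives the reverse inclusion).
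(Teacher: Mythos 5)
Your proposal is correct and follows essentially the same route as the paper's proof: decompose $\wTY$ and $\Z$ into the independent exogenous noises via $\X=\B\E$, and apply the Darmois–Skitovich theorem to conclude that independence holds iff $\wTY$ carries zero weight on every noise in $\operatorname{nzcol}(\B_{\Z,:})$. Your write-up is somewhat more careful than the paper's one-line invocation — separating the easy direction (which needs only mutual independence of the noises) from the converse (where you reduce to a scalar pair $(\wTY, Z_m)$ before applying Darmois–Skitovich) — but the underlying argument is the same.
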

Proof of~\cref{thm:characterization_Omega_ZY} is straight-forward by the $\darmois$~\citep{kagan1973characterization}: linear transformation $\wTY\indep\Z$ if and only if $\wTY$ shares no common non-Gaussian exogenous noise components with $\Z$.
\begin{example}[Revisiting examples in~\cref{sec:motivation} from $\OmegaZY$ perspective]\label{eg:revisit_from_Omega_ZY_view}
For illustration, now we revisit the examples in~\cref{sec:motivation} from the perspective of independent linear transformation subspace.\vspace{-0.2em}
\begin{equation} \vspace{-0.2em}
\label{Eq:B_mixing_matrix_examples}
    \begin{tikzpicture}[
every left delimiter/.style={xshift=.5em},
    every right delimiter/.style={xshift=-.5em},
strip/.style = {
    draw=#1,
    line width=1em, opacity=0.2,
    line cap=round ,},]
\matrix (mtrx)  [matrix of math nodes,
                left delimiter={[},
                right delimiter={]},
                inner sep=1.5pt, column sep=7pt, row sep=2pt,
                 ]
{
1 & 0 & 0 & 0 & 0\\
a & 1 & 0 & 0 & 0\\
ab & b & 1 & 0 & 0\\
abc & bc & c & 1 & 0 \\
abcd & bcd & cd & d & 1 \\
};
\fill[red, opacity=0.2, rounded corners]let \p1 = (mtrx-2-1.north east), \p2 = (mtrx-5-1.south east) in (mtrx-5-1.south west) rectangle (\x2,\y1);
\fill[blue, opacity=0.2, rounded corners]let \p1 = (mtrx-3-2.north east), \p2 = (mtrx-5-2.south east) in (mtrx-5-1.south west) rectangle (\x2,\y1);
\fill[green, opacity=0.2, rounded corners]let \p1 = (mtrx-4-1.north east), \p2 = (mtrx-5-3.south east) in (mtrx-5-1.south west) rectangle (\x2,\y1);
\fill[orange, opacity=0.2, rounded corners]let \p1 = (mtrx-1-4.north east), \p2 = (mtrx-3-4.south east) in (mtrx-3-1.south west) rectangle (\x2,\y1);
\end{tikzpicture}\noindent,
\begin{tikzpicture}[
every left delimiter/.style={xshift=.5em},
    every right delimiter/.style={xshift=-.5em},
strip/.style = {
    draw=#1,
    line width=1em, opacity=0.2,
    line cap=round ,},]
\matrix (mtrx)  [matrix of math nodes,
                left delimiter={[},
                right delimiter={]},
                inner sep=1.5pt, column sep=7pt, row sep=2pt,
                 ]
{
1 & 0 & 0 & 0 & 0\\
a+bc & 1 & b & 0 & 0\\
c & 0 & 1 & 0 & 0\\
cd & 0 & d & 1 & 0 \\
cde & 0 & de & e & 1 \\
};
\fill[red, opacity=0.2, rounded corners]let \p1 = (mtrx-2-1.east), \p2 = (mtrx-5-1.south) in (mtrx-2-1.north west) rectangle (\x1,\y2);
\fill[blue, opacity=0.2, rounded corners]let \p1 = (mtrx-5-3.east), \p2 = (mtrx-3-2.north),  \p3 = (mtrx-2-1.west), \p4 = (mtrx-5-1.south) in (\x3,\y4) rectangle (\x1,\y2);
\fill[green, opacity=0.2, rounded corners]let \p1 = (mtrx-5-1.east), \p2 = (mtrx-4-2.north) in (mtrx-5-1.south west) rectangle (\x1,\y2);
\fill[green, opacity=0.2, rounded corners]let \p1 = (mtrx-5-3.east), \p2 = (mtrx-4-2.north) in (mtrx-5-3.south west) rectangle (\x1,\y2);
\fill[orange, opacity=0.2, rounded corners]let \p1 = (mtrx-2-1.west), \p2 = (mtrx-2-1.east), \p3 = (mtrx-3-1.south), \p4 = (mtrx-1-2.north) in (\x1,\y3) rectangle (\x2,\y4);
\fill[orange, opacity=0.2, rounded corners] (mtrx-1-3.north west) rectangle (mtrx-3-4.south east);
\end{tikzpicture}\noindent,
\begin{tikzpicture}[
every left delimiter/.style={xshift=.5em},
    every right delimiter/.style={xshift=-.5em},
strip/.style = {
    draw=#1,
    line width=1em, opacity=0.2,
    line cap=round ,},]
\matrix (mtrx)  [matrix of math nodes,
                left delimiter={[},
                right delimiter={]},
                inner sep=1.5pt, column sep=7pt, row sep=2pt,
                 ]
{
1 & 0 & 0 & 0\\
a & 1 & 0 & 0\\
ad+b & d & 1 & 0\\
a(df+e)+bf+c & df+e & f & 1\\
};
\fill[red, opacity=0.2, rounded corners]let \p1 = (mtrx-2-1.north east), \p2 = (mtrx-4-1.south east) in (mtrx-4-1.south west) rectangle (\x2,\y1);
\fill[green, opacity=0.2, rounded corners]let \p1 = (mtrx-3-2.north east), \p2 = (mtrx-4-2.south east) in (mtrx-4-1.south west) rectangle (\x2,\y1);
\fill[green, opacity=0.2, rounded corners]let \p1 = (mtrx-4-1.south west), \p2 = (mtrx-4-2.south east), \p3 = (mtrx-1-1.south), \p4 = (mtrx-1-1.north) in (\x1, \y3) rectangle (\x2,\y4);
\fill[orange, opacity=0.2, rounded corners]let \p1 = (mtrx-1-1.north), \p2 = (mtrx-3-4.south east) in (mtrx-3-1.south west) rectangle (\x2,\y1);
\end{tikzpicture}
\vspace{-1.4em}
\end{equation}

\vspace{0.3em}

\cref{Eq:B_mixing_matrix_examples} shows the corresponding mixing matrix $\B$ for graph $\Gtilde$ in~\cref{fig:chain,fig:fully_connected,fig:triangle_head_chain}, respectively. Suppose we have access to underlying variables $\Xitilde$ and only focus on $\Gtilde$. Colored blocks denote submatrices of $\B$. \textbf{1)} For the fully connected DAG (\cref{fig:fully_connected}, the right matrix), to identify the root $\smash{\Tilde{X}_1}$, $\smash{\GIN(\Xt_1,\Xt_{2,3,4})}$ is satisfied, corresponding to $\smash{\OmegaZY=\nullspace(\tikz[baseline=1.5ex] {\node[fill, red, opacity=0.2, rounded corners, minimum width=1.7ex, minimum height=2.7ex, anchor=center, shape=rectangle] at (1ex,2ex) {}; } ^\T)}$. For $\smash{(\Z,\Y)\coloneqq (\Xt_2, \Xt_{3,4})}$ or $\smash{(\Xt_4, \Xt_{1,2,3})}$, there exists no non-zero $\w$ s.t. $\wTY\indep\Z$, because the lower $\smash{\tikz[baseline=1.1ex] {\node[fill, green, opacity=0.2, rounded corners, minimum width=2.5ex, minimum height=2.5ex, anchor=center, shape=rectangle] at (1ex,2ex) {}; }}$ part and $\smash{\tikz[baseline=1.1ex] {\node[fill, orange, opacity=0.2, rounded corners, minimum width=4ex, minimum height=2.5ex, anchor=center, shape=rectangle] at (1ex,2ex) {}; }}$ are full row rank. However, if we set $\smash{(\Z,\Y)\coloneqq (\Xt_2, \Xt_{1,3,4})}$, we actually have $\smash{\Xt_2 \indep \frac{cd-be}{d}\Xt_1 + \frac{df+e}{d}\Xt_3 - \Xt_4}$, because the stacked two parts of $\smash{\tikz[baseline=1.1ex] {\node[fill, green, opacity=0.2, rounded corners, minimum width=2.8ex, minimum height=2.8ex, anchor=center, shape=rectangle] at (1ex,2ex) {}; }}$ has rank $2<3$ - though $\GIN$ is still violated because $\smash{\cov(\Z,\Y)}$ has rank $1<2$. \textbf{2)} For the chain structure $\smash{\Tilde{G}_{\mathbb{C}}}$ (\cref{fig:chain}, the left matrix) and chain with triangular head $\smash{\Tilde{G}_{\triangledown \mathbb{C}}}$ (\cref{fig:triangle_head_chain}, the middle matrix), $\GIN$ is satisfied on $\smash{(\Xt_1,\Xt_{2,3,4,5})}$, $\smash{\Xt_2,\Xt_{3,4,5})}$, $\smash{(\Xt_3,\Xt_{4,5})}$, with the $\smash{\B_{\Y,\Anc(\Z)}}$ submatrices being $\smash{\tikz[baseline=1.5ex] {\node[fill, red, opacity=0.2, rounded corners, minimum width=1.5ex, minimum height=3ex, anchor=center, shape=rectangle] at (1ex,2ex) {}; }}$, $\smash{\tikz[baseline=1.5ex] {\node[fill, blue, opacity=0.2, rounded corners, minimum width=2.0ex, minimum height=2.5ex, anchor=center, shape=rectangle] at (1ex,2ex) {}; }}$, $\smash{\tikz[baseline=1.5ex] {\node[fill, green, opacity=0.2, rounded corners, minimum width=3ex, minimum height=2ex, anchor=center, shape=rectangle] at (1ex,2ex) {}; }}$ respectively. Note that though the shape of submatrices are different between $\smash{\Tilde{G}_{\mathbb{C}}}$ and $\smash{\Tilde{G}_{\triangledown \mathbb{C}}}$, their ranks are always equal, and is thus unidentifiable by $\GIN$. However, let $\smash{(\Z,\Y)\coloneqq (\Xt_4, \Xt_{1,2,3})}$, an asymmetry actually exists: in $\smash{\Tilde{G}_{\triangledown \mathbb{C}}}$, $\smash{\Xt_4\indep a\Xt_1-\Xt_2+b\Xt_3}$, because in the right matrix $\smash{\tikz[baseline=1.1ex] {\node[fill, orange, opacity=0.2, rounded corners, minimum width=2.6ex, minimum height=2.6ex, anchor=center, shape=rectangle] at (1ex,2ex) {}; }}$ has rank $2<3$, while in the left matrix $\smash{\tikz[baseline=1.1ex] {\node[fill, orange, opacity=0.2, rounded corners, minimum width=3.ex, minimum height=2.2ex, anchor=center, shape=rectangle] at (1ex,2ex) {}; }}$ is full row rank, so there is no such independence in $\Tilde{G}_{\mathbb{C}}$.
\end{example}

\subsection{Graphical Criteria of Independent Linear Transformation Subspace}\label{sec:graphical_criteria_independent_linear_transformation_subspace}
\cref{sec:independent_linear_transformation_subspace} characterizes $\OmegaZY$ by submatrix of $\B$, which entails information on graph structure and edge weights. The following sections go one step further, explicitly exhibit the graphical criteria of $\OmegaZY$, and investigate how $\OmegaZY$ could help to identify the causal structure. We first give the assumption:

\begin{restatable}[Rank faithfulness]{assumption}{RANKFAITHFULNESS}\label{assum:rank_faithfulness}
Denote by $\mathcal{B}(G)$ the parameter space of mixing matrix $\B$ consistent with the DAG $G$. For any two subsets of variables $\Z,\Y\subseteq\X$, we assume that
\begin{equation}\label{Eq:rank_faithfulness}
    \rank(\B_{\Y,\Anc(\Z)}) = \max_{\B'\in \mathcal{B}(G)} \rank(\B'_{\Y,\Anc(\Z)}).
\end{equation}
\end{restatable}
\vspace{-0.5em}
Roughly speaking, we assume there are no edge parameter couplings to produce coincidental low rank. This is slightly stronger than ``$\operatorname{nzcol}(\B_{\Z,:})=\Anc(\Z)$''. See~\cref{app:assumptions} for elaboration. In other words, the graphical criteria holds on a dense open subset of the edge parameter space. \newcontent{Note that~\cref{assum:rank_faithfulness} is the only other parametric assumption we make besides LiNGAM} throughout the paper, where violation of \cref{assum:rank_faithfulness} is of Lebesgue measure 0, and LiNGAM is testable.

Graphically, we first define \textit{vertex cut}, and then give the graphical criteria based on it:
\begin{definition}[Vertex cut]\label{def:vertex_cut}
Let $\W,\Y,\SH$ be three vertices subsets of $V(G)$ which need not be disjoint. We say that $\SH$ is a \textit{vertex cut} from $\W$ to $\Y$, if and only if there exists no directed paths in $G$ from $\W$ to $\Y$ without passing through $\SH$. With basic notations in~\cref{sec:basic_notations}, the following statements are equivalent:\\
\textbf{1)} $\SH$ is a vertex cut from $\W$ to $\Y$; \textbf{2)} $\forall i\in\W,j\in\Y,$ $i\stackrqarrow{\cancel{[\SH]}} j$ does not hold; \textbf{3)} $\Ancout{\SH}(\Y)\cap\W=\emptyset$; \textbf{4)} $\SH$'s removal from $G$ ensures there is no directed paths from $\W\backslash\SH$ to $\Y\backslash\SH$.
\end{definition}\vspace{-0.3em}
More details on vertex cut are in~\cref{app:vertex_cut}. Note that trivially $\W$ itself and $\Y$ itself are vertex cuts.

\begin{restatable}[Graphical criteria of $\Omega_{\Z;\Y}$]{theorem}{GRAPHICALCRITERIA}\label{thm:graph_criteria_omega_ZY}
Let $\Z,\Y$ be two subsets of variables (vertices), we have:
\begin{equation}\label{Eq:graph_criteria_omega_ZY}
    \vert\Y\vert - \dim(\Omega_{\Z;\Y}) = \min \{\vert\SH\vert \ \vert \ \SH\text{ is a vertex cut from}\Anc(\Z)\text{ to }\Y\}.
\end{equation}
where $\dim(\OmegaZY)$ denotes the dimension of the subspace $\OmegaZY$, i.e., the degree of freedom of $\w$.
\end{restatable}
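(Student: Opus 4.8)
The plan is to reduce the claimed identity to a statement about the generic rank of a submatrix of $\B$, and then resolve that rank via Menger's theorem together with the Lindström--Gessel--Viennot (LGV) path-counting correspondence. Concretely, \cref{thm:characterization_Omega_ZY} gives $\OmegaZY = \nullspace(\B_{\Y,\Anc(\Z)}^\T)$ once we use $\operatorname{nzcol}(\B_{\Z,:}) = \Anc(\Z)$, which holds under \cref{assum:rank_faithfulness}. The rank--nullity theorem then yields $\vert\Y\vert - \dim(\OmegaZY) = \rank(\B_{\Y,\Anc(\Z)})$, so it suffices to prove
\[
\rank(\B_{\Y,\Anc(\Z)}) = \min\{\vert\SH\vert : \SH \text{ is a vertex cut from } \Anc(\Z) \text{ to } \Y\}.
\]
Moreover, by \cref{assum:rank_faithfulness} the left-hand rank equals the generic rank $\max_{\B'} \rank(\B'_{\Y,\Anc(\Z)})$, so I may work with algebraically independent edge weights and prove the identity for the generic rank.

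For the upper bound I would show $\rank(\B_{\Y,\Anc(\Z)}) \le \vert\SH\vert$ for every vertex cut $\SH$. Recall $\B_{j,i}$ is the sum of edge-weight products over all directed paths $i\rightsquigarrow j$. Since $\SH$ cuts $\Anc(\Z)$ from $\Y$, every such path meets $\SH$; decomposing each path at its \emph{first} vertex $s\in\SH$ factorizes its weight as (weight of the $\SH$-avoiding prefix $i\rightsquigarrow s$) times (weight of an arbitrary suffix $s\rightsquigarrow j$), and acyclicity guarantees this gives a bijection onto prefix--suffix pairs. This exhibits $\B_{\Y,\Anc(\Z)} = P\,Q$ with $P$ of size $\vert\Y\vert\times\vert\SH\vert$ and $Q$ of size $\vert\SH\vert\times\vert\Anc(\Z)\vert$, whence the rank is at most $\vert\SH\vert$. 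Minimizing over cuts gives $\rank \le \min_{\SH}\vert\SH\vert$.

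For the lower bound I would invoke Menger's theorem in its node-disjoint, set-to-set form: the minimum vertex cut separating $\Anc(\Z)$ from $\Y$ equals the maximum number $r$ of vertex-disjoint directed paths from $\Anc(\Z)$ to $\Y$ (the vertex-cut definition permits separators meeting $\Anc(\Z)$ and $\Y$, which is exactly this Menger variant). Picking a maximum family of $r$ such paths with distinct sources $C\subseteq\Anc(\Z)$ and sinks $R\subseteq\Y$, I would consider the $r\times r$ minor of $\B$ on rows $R$ and columns $C$. By the LGV lemma this determinant equals a signed sum over families of $r$ vertex-disjoint paths from $C$ to $R$, all crossing families cancelling. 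The surviving terms are indexed by vertex-disjoint families, and because each such family uses every vertex (hence every edge) at most once, its weight is a \emph{squarefree} monomial whose support is exactly the family's edge set; since a set of edges with in- and out-degree at most one decomposes uniquely into paths, distinct families give distinct monomials and cannot cancel. The family we fixed contributes one such monomial, so the minor is a nonzero polynomial and hence generically nonzero, giving generic rank $\ge r$.

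Combining the two bounds with the Menger identity yields $\rank(\B_{\Y,\Anc(\Z)}) = \min_{\SH}\vert\SH\vert$ generically, and \cref{assum:rank_faithfulness} transfers this to the actual rank, completing the proof. The main obstacle is the lower bound, specifically certifying that the maximal minor does not vanish identically, i.e., the no-cancellation argument among the vertex-disjoint families surviving in the LGV expansion; the upper-bound factorization and the rank--nullity reduction are routine, and Menger's theorem is applied directly.
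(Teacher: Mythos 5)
Your proposal is correct and follows essentially the same route as the paper's proof: reduce via \cref{thm:characterization_Omega_ZY}, \cref{assum:rank_faithfulness}, and rank--nullity to computing $\rank(\B_{\Y,\Anc(\Z)})$, then combine Menger's theorem with the Lindstr\"om--Gessel--Viennot expansion. You supply two details the paper leaves implicit---the explicit $PQ$ factorization through a cut for the upper bound (the paper instead reads it off from the vanishing of all $(k+1)\times(k+1)$ minors via LGV) and the squarefree-monomial no-cancellation argument certifying that the maximal minor is generically nonzero---both of which are sound.
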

By~\cref{thm:characterization_Omega_ZY}, $\vert\Y\vert - \dim(\Omega_{\Z;\Y})$ is exactly the rank of $\B_{\Y,\Anc(\Z)}$. \newcontent{\cref{thm:graph_criteria_omega_ZY} can then be proved by a combinatorial interpretation of mixing matrices' determinants.} See~\cref{app:proofs} for details.%

From the graphical view, a vertex cut $\SH$ from $\Anc(\Z)$ to $\Y$ means that the causal effect from $\Z$ and the common causes of $\Z$ and $\Y$, if there is any, must affect $\Y$ through $\SH$ - there is no any bypass. To interpret in \textit{trek-separation}~\citep{sullivant2010trek} language, it is equivalent to ``$(\emptyset,\SH)$ \textit{t-separates} $(\Z,\Y)$''\footnote{See Definition 2.7 in~\citep{sullivant2010trek} for definition of \textit{t-separation}. See~\cref{app:vertex_cut} for details.}.\begin{wrapfigure}[12]{r}{0.2\textwidth}\vspace{-0.5em}
\centering
\vspace{-1em}
\begin{tikzpicture}[thick,>=stealth]
  \node[name=n,shape=ellipse,draw=orange,minimum height=0.8em] {\hspace*{1em}\vspace*{0.8em}};
  \node[name=z,below = of n,xshift=-2.9em,yshift=6ex,shape=ellipse,draw=orange!50!white,fill=orange!20!white] {\hspace*{0.8em}$\Z$\hspace*{0.8em}}; %
  \node[name=nz,below = of n,shift={(-2.5em,8.18ex)},shape=ellipse,draw=orange,dashed,rotate=19] {\hspace*{2.1em}\vphantom{\vrule width 0pt height 3.5ex}\hspace*{2.3em}};
  \node[name=s,below = of n,shift={(-1em,0.8ex)},shape=ellipse,draw=red!50!white,fill=red!20!white] {\hspace*{0.4em}$\SH$\hspace*{0.4em}};
  \node[name=y,below = of s,xshift=0.7em,yshift=4.5ex,shape=ellipse,draw=blue!50!white,fill=blue!20!white] {\hspace*{0.9em}$\Y$\hspace*{0.9em}}; %
  \node[name=ey,left = of y,xshift=2.3em,yshift=0.7ex,shape=ellipse,draw=blue,minimum height=1.0em] {};
  \node[name=yey,below = of s,xshift=-0.3em,yshift=5.3ex,shape=ellipse,draw=blue,dashed,rotate=0] {\hspace*{2.3em}\vphantom{\vrule width 0pt height 2.5ex}\hspace*{2.3em}};
 
  \draw [->,orange] (n.190) to [out=190,in=45] (z);
  \draw [->] (n.300) to [out=-60,in=60] (s.45);
  \draw [->] (z.250) to [out=-70,in=140] (s.170);
  \draw [->] (s.200) to [out=-120,in=120] (y.155);
  \draw [->] (s.350) to [bend left=25] (y.55);
  \draw [->,blue] (ey.south east) to [bend right=25] (y.190); %
  \draw [name=nobypass,->,red,dashed,postaction={decorate,decoration={raise=0.5ex,text along path,text align=center,text={|\tiny\color{red}|No such bypass}}}] (nz.340) to [bend left=20] (y.20);
  \node[right = of s,xshift=-2.6em,yshift=1.5ex] {\red{\xmark}};
  \node[above = of nz,xshift=-1.5em,yshift=-9.5ex] {\rotatebox{25}{ \textcolor{orange}{\scriptsize$\Anc(\Z)$}}};
  \node[below = of yey,xshift=0em,yshift=7ex] {\rotatebox{0}{ \textcolor{blue}{\scriptsize$\Ancout{\SH}(\Y)$}}};
\end{tikzpicture}
\vspace{-1.7em}
\caption{Illustration of vertex cut $\SH$ from $\Anc(\Z)$ to $\Y$.}
\label{fig:vertex_cut}
\vspace{-2.5em}
\end{wrapfigure}

\vspace{-1em}
From the noise view, the noise components that constitute variables $\Z$ are exactly the exogenous noises corresponding to vertices $\Anc(\Z)$. All these noises must contribute to $\Y$ (if any) via $\SH$, and thus $\Y$ can be written as $\Y=L\SH+\mathbf{E'_\Y}$, where $L$ denotes a linear transformation, and $\mathbf{E'_\Y}\indep\Z$. Denote by $\SH^*_{\Z;\Y}$ the \textit{critical vertex cut} from $\Z$ to $\Y$\footnote{See~\cref{app:critical_vertex_cut} for definition. Roughly speaking, \textit{``critical''} means a \textit{smallest} and \textit{last} vertex cut.}, the noise components of $\wTY$ is exactly the exogenous noises corresponding to $\Ancout{\SH^*_{\Z;\Y}}(\Y)$.

\subsection{Formal Definition of \texorpdfstring{$\TIN$}{TEXT}}\label{sec:egin_formal_definition}
With the graphical criteria given in~\cref{sec:graphical_criteria_independent_linear_transformation_subspace}, we could use it for structure inference as long as we have the (dimension of) independent linear transformation subspace $\OmegaZY$. In~\cref{sec:estimation} we will comprehensively discuss methods to estimate $\OmegaZY$, while for now, we could just safely suppose we have $\OmegaZY$: since independence is testable, theoretically one could get $\OmegaZY$ at least by traversing over $\Real^{\vert \Y\vert}$.

\begin{definition}[$\EGIN$ function]\label{def:egin_function}
Let $\mathbf{Z}$ and $\mathbf{Y}$ be two subsets of observed random variables. Suppose variables follow LiNGAM. We define a function $\EGIN$ as follows:
\begin{equation}\label{Eq:EGIN_function_def}
    \EGIN(\Z,\Y)\coloneqq \vert \Y\vert- \operatorname{dim}(\Omega_{\Z;\Y}).
\end{equation}
\end{definition}\vspace{-0.2em}
$\EGIN$ is a function that takes as input two random vectors $\Z,\Y$ and \textit{returns an integer} in range $[0,\vert\Y\vert]$. Note that this is different to $\IN$ or $\GIN$, which returns only a bool value (satisfied or not). $\GIN$ can be viewed as a special case of $\EGIN$, i.e., $\GIN(\Z,\Y)$ is satisfied if and only if $\EGIN(\Z,\Y)=\rank(\cov(\Z,\Y))<\vert \Y \vert$ (where $\rank(\cov(\cdot))$ can be characterized by~\citep{sullivant2010trek}). $\IN$ can also be viewed as a special case of $\EGIN$, i.e., $\operatorname{IN}(\Z,Y_i)$ is satisfied if and only if $\EGIN(\Z,\Z\cup\{Y_i\})=\vert\Z\vert$.

\begin{example}[Review $\EGIN$ graphical criteria on graphs]\label{eg:review_egin_graphical_criteria}For better understanding of $\EGIN$, we demonstrate some representative examples of $\EGIN$ over different graph structures. Results are shown in~\cref{table:egin_examples}. We use the four graph structures in~\cref{fig:graph_examples}. For illustration, assume we have access to latent variables to directly conduct $\EGIN$ over $\Xtilde$. Graphical criteria correspond to $\Gtilde$. For readability, we omit all $\Tilde{ \ }$ notation. We assume there are $5$ vertices in the fully connected DAG (\cref{fig:fully_connected}): consider for example, $\GIN(X_3,X_{1,2,4,5})$ does not hold, since $\rank(\cov(\Z,\Y))$ is only one (restricted by $\Z$ size). However, there exists $\w$ with degree of freedom 1 s.t. $\wTY\indep\Z$ - this corresponds to a 3-variables vertex cut $X_{1,2,3}$ ($\Anc(\Z)$ itself). Consider on \cref{fig:example_d}, $\TIN(X_{1,2},X_{4,5})=1$, corresponding to the 1-variable vertex cut $X_4$. This example shows that vertex cut does not necessarily yield a d-separation pattern (but blocking, instead), since here $X_5\not\indep X_1\vert X_4$.
\end{example}
The graphical criteria for $\EGIN$ over any two sets of variables $\Z,\Y$ are given as above. Interestingly, we find that a special type of $\EGIN$ is particularly simple in form and informative for structure inference:\looseness=-1
\begin{restatable}[Graphical criteria of $\EGIN$ on one-and-others]{lemma}{GRAPHICALCRITERIAONEANDOTHERS}\label{lem:egin_one_and_rest} Assume we have access to all variables $\X$ on a DAG $G$. For each singleton variable $X_i\in\X$, let $\Z\coloneqq \{X_i\}$ and $\Y\coloneqq \X\backslash \{X_i\}$, we have,
\begin{equation}\label{Eq:one_and_rest}
    \TIN(\{X_i\}, \X\backslash \{X_i\}) = \begin{cases}
      \vert \Anc(\{X_i\})\vert     & X_i \text{ is a non-leaf node} \\
      \vert \Anc(\{X_i\})\vert - 1 & X_i \text{ is a leaf node} \\
   \end{cases}
\end{equation}
\end{restatable}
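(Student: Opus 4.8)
The plan is to invoke \cref{thm:graph_criteria_omega_ZY} to turn $\TIN(\{X_i\}, \X\backslash\{X_i\})$ into a purely combinatorial minimum–vertex–cut computation, and then evaluate that minimum by hand. Writing $\Z\coloneqq\{X_i\}$, $\Y\coloneqq\X\backslash\{X_i\}$, and $A\coloneqq\Anc(\{X_i\})$, \cref{thm:graph_criteria_omega_ZY} gives $\TIN(\{X_i\},\Y)=\min\{|\SH| : \SH\text{ is a vertex cut from }A\text{ to }\Y\}$, so the whole lemma reduces to finding the smallest vertex cut $\SH$ from $\W=A$ to $\Y$. The structural fact that drives everything is the heavy overlap of source and target: since $\Y$ omits only $X_i$, every \emph{proper} ancestor of $X_i$ lies in $\Y$, i.e.\ $A\backslash\{X_i\}\subseteq\Y$, whereas $X_i\in A$ but $X_i\notin\Y$.

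First I would prove the lower bound $|\SH|\geq|A|-1$, valid in both cases. For any proper ancestor $a\in A\backslash\{X_i\}$ we have simultaneously $a\in\W=A$ and $a\in\Y$. Because $a\rightsquigarrow a$ is a (trivial, single-vertex) directed path, if $a\notin\SH$ then $a\stackrqarrow{\cancel{[\SH]}}a$ holds, so $a\in\Ancout{\SH}(\Y)\cap\W$, contradicting the characterization $\Ancout{\SH}(\Y)\cap\W=\emptyset$ from \cref{def:vertex_cut}. Hence $A\backslash\{X_i\}\subseteq\SH$ for every vertex cut $\SH$, giving $|\SH|\geq|A|-1$.

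Next I would split on whether $X_i$ is a leaf. If $X_i$ is a leaf it has no outgoing edge, so no directed path leaves $X_i$ toward any vertex of $\Y$; consequently $\SH\coloneqq A\backslash\{X_i\}$ already blocks every path from $A$ to $\Y$ (paths from proper ancestors start inside $\SH$, and $X_i$ reaches nothing in $\Y$), so it is a vertex cut of size $|A|-1$, meeting the lower bound. If $X_i$ is a non-leaf it has a child $X_c$, which lies in $\Y$ and, by acyclicity, satisfies $X_c\notin A$. The one-edge path $X_i\to X_c$ must be blocked, forcing $X_i\in\SH$ or $X_c\in\SH$; either choice is a vertex outside $A\backslash\{X_i\}$, so $|\SH|\geq|A|$. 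The matching upper bound is immediate since $\W=A$ is itself always a vertex cut (every path from $A$ starts in $A$), whence the minimum equals $|A|$.

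The argument is elementary once \cref{thm:graph_criteria_omega_ZY} is available, so the main subtlety I anticipate is the careful bookkeeping of the overlap between the source set $A$ and the target set $\Y$: in particular, recognizing that the trivial self-paths $a\rightsquigarrow a$ force all proper ancestors into every vertex cut, and that the leaf/non-leaf distinction is exactly the distinction of whether $X_i$ itself can reach $\Y$. The degenerate cases (e.g.\ when $X_i$ is a root, so $A=\{X_i\}$) fall out of the same reasoning and require only a quick sanity check against the formula.
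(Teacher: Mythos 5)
Your proposal is correct and takes essentially the same route as the paper, which simply asserts that \cref{lem:egin_one_and_rest} ``follows directly from the graphical criteria in \cref{thm:graph_criteria_omega_ZY}''; you have filled in the intended minimum-vertex-cut computation, correctly exploiting the trivial self-paths $a\rightsquigarrow a$ to force all proper ancestors into every cut and the leaf/non-leaf dichotomy to settle whether $X_i$ itself must additionally be cut.
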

\vspace{-0.5em}
Due to a page limit, here we only give the main criteria of $\EGIN$. See~\cref{app:more_tin_properties} for more properties.\looseness=-1
\vspace{-0.8em}
\begin{table}[t]
  \caption{Examples of $\EGIN$ on different $(\Z,\Y)$ pairs over different graph structures in~\cref{fig:graph_examples}.}
  \label{table:egin_examples}
\vspace{-0.5em}
\scriptsize
  \setlength{\tabcolsep}{3.0pt}
  \renewcommand{\arraystretch}{0.7}
\begin{center}
\resizebox{\textwidth}{!}{
\begin{tabular}{ l *{16}{c} }
\toprule
$(\Z,\Y)$
&
\multicolumn{4}{c}{$(\{X_1, X_2\}, \{X_3, X_4, X_5\})$} &
\multicolumn{4}{c}{$(\{X_1, X_2\}, \{X_4, X_5\})$} &
\multicolumn{4}{c}{$(\{X_3\}, \{X_1, X_2, X_4, X_5\})$} &
\multicolumn{4}{c}{$(\{X_1, X_4\}, \{X_3, X_4, X_5\})$} \\
\cmidrule(lr){2-5} \cmidrule(lr){6-9} \cmidrule(lr){10-13} \cmidrule(lr){14-17}
Graph ID&
(a) & (b) & (c) & (d) &
(a) & (b) & (c) & (d) &
(a) & (b) & (c) & (d) &
(a) & (b) & (c) & (d) \\
\midrule
$\EGIN(\Z,\Y)$ & $1$ & $1$ & $2$ & $1$ & $1$ & $1$ & $2$ & $1$ & $3$ & $2$ & $3$ & $1$ & $2$ & $2$ & $3$ & $2$\\
$\dim(\OmegaZY)$ & $2$ & $2$ & $1$ & $2$ & $1$ & $1$ & $0$ & $1$ & $1$ & $2$ & $1$ & $3$ & $1$ & $1$ & $0$ & $1$\\
$\Anc(\Z)$ & $X_{ 1,2 }$ & $X_{ 1,2,3 }$ & $X_{ 1,2 }$ & $X_{ 1,2 }$ & $X_{ 1,2 }$ & $X_{ 1,2,3 }$ & $X_{ 1,2 }$ & $X_{ 1,2 }$ & $X_{ 1,2,3 }$ & $X_{ 1,3 }$ & $X_{ 1,2,3 }$ & $X_{ 3 }$ & $X_{ 1,2,3,4 }$ & $X_{ 1,3,4 }$ & $X_{ 1,2,3,4 }$ & $X_{ 1,2,3,4 }$\\
$\SH^*_{\Z,\Y}$ & $X_{ 3 }$ & $X_{ 3 }$ & $X_{ 1,2 }$ & $X_{ 4 }$ & $X_{ 4 }$ & $X_{ 4 }$ & $X_{ 4,5 }$ & $X_{ 4 }$ & $X_{ 1,2,4 }$ & $X_{ 1,3 }$ & $X_{ 1,2,3 }$ & $X_{ 3 }$ & $X_{ 3,4 }$ & $X_{ 3,4 }$ & $X_{ 3,4,5 }$ & $X_{ 3,4 }$\\
$\operatorname{A}_{\operatorname{o}(\SH^*)}(\Y)$ & $X_{ 4,5 }$ & $X_{ 4,5 }$ & $X_{ 3,4,5 }$ & $X_{ 3,5 }$ & $X_{ 5 }$ & $X_{ 5 }$ & $\emptyset$ & $X_{ 3,5 }$ & $X_{ 5 }$ & $X_{ 2,4,5 }$ & $X_{ 4,5 }$ & $X_{ 1,2,4,5 }$ & $X_{ 5 }$ & $X_{ 5 }$ & $\emptyset$ & $X_{ 5 }$\\
\bottomrule
\end{tabular}}
\end{center}
\vspace{-2.5em}
\end{table}

\section{\texorpdfstring{$\TIN$}{TEXT} Condition-Based Method for Measurement Error Models}
\label{sec:exploit}
\vspace{-0.8em}
In~\cref{sec:approach} we propose $\EGIN$ condition over general LiNGAM model and give its graphical criteria. In this section, we aim to exploit $\EGIN$ on our specific task of interest: measurement error models.\begin{wrapfigure}[12]{r}{0.20\textwidth}
\centering
\begin{tikzpicture}[->,>=stealth,shorten >=1pt,auto,node distance=0.8cm,
semithick]

\node[state, fill=red!90, inner sep=0.1pt, minimum size=0.5cm] at (2,4) (1) {\scriptsize $\Tilde{X}_1$};
\node[state, fill=red!60, inner sep=0.1pt, minimum size=0.5cm] at (2,3.25) (2) {\scriptsize $\Tilde{X}_2$};
\node[state, fill=red!30, inner sep=0.1pt, minimum size=0.5cm] at (2,2.5) (3) {\scriptsize $\Tilde{X}_3$};
\node[state, fill=red!30, inner sep=0.1pt, minimum size=0.5cm] at (1.2,2) (6) {\scriptsize $\Tilde{X}_6$};
\node[state, fill=red!3, inner sep=0.1pt, minimum size=0.5cm] at (2.8,2) (4) {\scriptsize $\Tilde{X}_4$};
\node[state, fill=red!3, inner sep=0.1pt, minimum size=0.5cm] at (2, 1.7) (5) {\scriptsize $\Tilde{X}_5$};
\node[state, fill=red!60, inner sep=0.1pt, minimum size=0.5cm] at (2.8, 2.75) (7) {\scriptsize $\Tilde{X}_7$};

\path (1) edge node {} (2)
(2) edge node {} (3)
(2) edge node {} (6)
(2) edge node {} (7)
(3) edge node {} (4)
(3) edge node {} (5)
(3) edge node {} (6)
(4) edge node {} (5);
\end{tikzpicture}
\caption{An example of ordered group decomposition.}
\label{fig:ordered_group_decomp_example}
\vspace{-2em}
\end{wrapfigure}
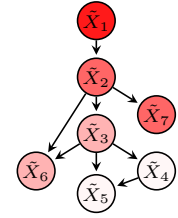
\vspace{-0.8em}
\subsection{Identifiability of Ordered Group Decomposition}\label{sec:ordered_group_decomposition}\vspace{-0.2em}
Under our problem setting where $\Xtilde$ follows LiNGAM, identifiability results can greatly benefit from the non-Gaussianity of data. \citep{zhang2018causal} shows that the \textit{ordered group decomposition} of $\Gtilde$ is identifiable. First review definitions:
\begin{definition}[Pure leaf child]\label{def:pure_leaf_child}
On a DAG $G$, a vertex $j$ is said to be a ``pure leaf child'' of another vertex $i$, iff $j$ is a leaf node with only one parent, $i$.
\end{definition}\vspace{-0.3em}
Particularly, if a variable $\smash{\Xt_j}$ is a pure leaf child of $\smash{\Xt_i}$ in $\smash{\Gtilde}$, then $\smash{\Xt_j}$ and $\smash{\Xt_j}$ are naturally unidentifiable (e.g., $\Xt_2$ and $\Xt_7$ in~\cref{fig:ordered_group_decomp_example}). The reason is that the exogenous noise $\smash{\Tilde{E}_j}$ of leaf $\smash{\Tilde{X}_j}$ does not contribute to any other variables, just like its measurement noise $E_j$. Consequently, $X_i$ and $X_j$ can be viewed as two equally positioned measurements of $\smash{\Tilde{X}_i}$, without any asymmetry.
\begin{definition}[Ordered group decomposition]\label{def:ordered_group_decomposition}
Consider the underlying causal model $\Gtilde$. The ordered group decomposition can be defined by the following procedure: at each step, remove root vertices and their pure leaf children nodes (if any) from graph, and append the removed ones as a new group. Repeat this procedure to remove root vertices from the remaining graph, until the graph is empty.
\end{definition}
This procedure is equivalent to Definition 2 in~\citep{zhang2018causal}\footnote{There is actually trivial difference, depending on how case with multiple roots is considered. See~\cref{app:multiple_subroots_ordered_group_decomp}.}. See~\cref{fig:ordered_group_decomp_example}: roots are removed from graph in order (color dark to light). The ordered group decomposition is $\Xt_1 \rightarrow \Xt_{2,7} \rightarrow \Xt_{3,6} \rightarrow \Xt_{4,5}$.
\subsection{Exploit \texorpdfstring{$\TIN$}{TEXT} to Identify Ordered Group Decomposition}
Our task is to recover $\Gtilde$ over $\Xtilde$ by testing $\TIN$ over $\X$. In notations above, $\Xtilde$ and $\X$ are strictly distinguished to denote unobserved and observed variables. However, actually this can be escaped:
\begin{restatable}[Equivalence of $\EGIN$ over latent and observed variables]{theorem}{EQUIVALENCEOVERLATENTOBSERVED}\label{thm:equiv_gin_true_observed}For two \textbf{disjoint} observed variables subsets $\Z,\Y$, and their respective underlying latent variables subsets $\Tilde{\Z},\Tilde{\Y}$,\vspace{-0.2em}
\begin{equation}\label{Eq:equiv_gin_true_observed}\EGIN(\Z,\Y) = \EGIN(\Tilde{\Z},\Tilde{\Y}).
\end{equation} %
\end{restatable}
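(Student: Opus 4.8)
The plan is to reduce both sides of \eqref{Eq:equiv_gin_true_observed} to the nullspace of one and the same submatrix of $\B$, by running the \darmois{}-based argument that underlies \cref{thm:characterization_Omega_ZY} on the observed world as well. First I would write the observed variables as a single linear mixture of all $2n$ mutually independent noise sources. Combining \eqref{Eq:def_of_random_measurement_error_model} with the generating process \eqref{Eq:lingam_generating_process} gives $\X=\Xtilde+\E=\B\Etilde+\E=[\,\B\mid\I\,]\,[\Etilde^\intercal,\E^\intercal]^\intercal$, so $\X$ is itself a linear mixture of the independent sources $\Etilde$ (the LiNGAM exogenous noises) and $\E$ (the measurement errors), with effective mixing matrix $[\,\B\mid\I\,]$. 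Since this is still a linear combination of mutually independent variables, the \darmois{} applies verbatim here, exactly as in the proof of \cref{thm:characterization_Omega_ZY} (which itself may be cited directly only on the latent side, where $\Ztilde,\Ytilde$ genuinely follow LiNGAM).

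Next I would expand the $\Y$-side form in these sources: $\w^\intercal\Y=(\w^\intercal\B_{\Y,:})\Etilde+\w^\intercal\E_\Y$, so the sources appearing in $\w^\intercal\Y$ are the $\Etilde$-components indexed by $\operatorname{nzcol}(\w^\intercal\B_{\Y,:})$ together with the measurement errors $\{E_i:i\in\Y\}$. Likewise, $\Z$ is generated by the $\Etilde$-components indexed by $\operatorname{nzcol}(\B_{\Z,:})$ together with the measurement errors $\{E_j:j\in\Z\}$. By the \darmois{}, $\w^\intercal\Y\indep\Z$ holds iff $\w^\intercal\Y$ and $\Z$ share no common non-Gaussian source.

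The crux, and the only place the disjointness hypothesis enters, is that a measurement error can never be a shared source: the $\E$-sources of $\w^\intercal\Y$ are indexed by $\Y$ and those of $\Z$ by $\Z$, and $\Z\cap\Y=\emptyset$. Hence the only sources that can be shared are the $\Etilde$, and $\w^\intercal\Y\indep\Z$ collapses to $(\w^\intercal\B_{\Y,:})_k=0$ for every $k\in\operatorname{nzcol}(\B_{\Z,:})$, i.e. $\w\in\nullspace(\B_{\Y,\operatorname{nzcol}(\B_{\Z,:})}^\intercal)$. This is precisely the nullspace that \cref{thm:characterization_Omega_ZY} assigns to $\Omega_{\Ztilde;\Ytilde}$ on the latent side, because $\Z,\Y$ and $\Ztilde,\Ytilde$ carry the same index sets, so $\B_{\Y,\operatorname{nzcol}(\B_{\Z,:})}$ is the identical submatrix in both cases. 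Therefore $\OmegaZY=\Omega_{\Ztilde;\Ytilde}$, and since $\vert\Y\vert=\vert\Ytilde\vert$, \eqref{Eq:equiv_gin_true_observed} follows from the definition of $\EGIN$.

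I would flag two points. The main obstacle is making the ``added noise is harmless'' step rigorous: appending the extra, $\Z$-independent term $\w^\intercal\E_\Y$ to $\w^\intercal\Y$ must neither create nor destroy independence with $\Z$. The clean route is exactly the source-sharing criterion above, so that I never reason directly about sums of dependent random variables (which would force an awkward characteristic-function argument); this is why invoking the \darmois{} on the full $[\,\B\mid\I\,]$ mixture is preferable to manipulating $\w^\intercal\Y$ alone. Second, I would be explicit about Gaussian sources: a shared Gaussian $\Etilde_k$ does not break independence, but such components are treated identically on the latent and observed sides, and disjointness has already eliminated every shared $\E$-source, so the reduction to the common nullspace is unaffected.
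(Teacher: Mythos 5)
Your proposal is correct and follows essentially the same route as the paper: the paper also augments the system to the mixture $[\,\Tilde{\B}\mid\mathbf{I}\,]$ over the combined sources $[\Etilde;\E]$ and observes that, by disjointness of $\Z$ and $\Y$, the identity-block columns indexed by $\Z$ vanish on the rows indexed by $\Y$, so the relevant submatrix (equivalently, the nullspace $\OmegaZY$) coincides with the latent one. Your aside about shared Gaussian $\Etilde_k$ is moot (and slightly misstated, since a shared Gaussian source generally does induce dependence) because LiNGAM assumes all $\Etilde$ are non-Gaussian, which is exactly how the paper's characterization via the \darmois{} is applied.
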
\vspace{-0.6em}
\cref{thm:equiv_gin_true_observed} can either be proved by graphical criteria, or by showing how the rank of submatrices of $\B$ is preserved among latent and observed variables. Interestingly, recall \cref{prop:rare_dsep}, we show the inequivalence of d-separation and $\IN$ condition held between $\Xtilde$ and $\X$ (\textit{raw independence}). However, $\TIN$ condition, which essentially \textit{finds transformed independence}, holds equivalently on latent and observed variables. With this equivalence, we can conduct $\TIN$ over observed variables $\X$ \textit{just as if} we have access to the latent ones $\Xtilde$. Thus the problem can be restated without measurement error: assuming causal sufficiency, by only using $\TIN$ over disjoint $\Z,\Y$, to what extent is $G$ identifiable?\looseness=-1

Under this equivalent problem, \cref{lem:egin_one_and_rest} can be used directly to identify the ordered group decomposition of $\Gtilde$: for each singleton observed variable $X_i$, test $\TIN$ and assign an order $\texttt{ord}(X_i)\coloneqq \TIN(\{X_i\},\X\backslash\{X_i\})$. Group variables with same $\texttt{ord}$, and then sort the groups by their orders. Obviously, the ordered groups obtained by this procedure is consistent with~\cref{def:ordered_group_decomposition}.

\begin{example}[$\TIN$ on~\cref{fig:ordered_group_decomp_example}]\label{eg:use_tin_to_find_ordered_group_decomposition}
$\texttt{ord}(X_i)_{i=1}^7$ are respectively $1,2,3,4,4,3,2$ (can verify by characterization or graphically), so the group ordering is identified as $\Xt_1 \rightarrow \Xt_{2,7} \rightarrow \Xt_{3,6} \rightarrow \Xt_{4,5}$.
\end{example}

\section{Estimating Linear Independent Transformation Subspace \texorpdfstring{$\OmegaZY$}{TEXT}}
\label{sec:estimation}
\vspace{-0.5em}
In the above sections we safely assume that we can always get $\OmegaZY$, since theoretically independence is testable and $\w$ can be exhaustively traversed. In this section, we give practical methods to estimate $\OmegaZY$. Due to page limit we only give a summary for each. Please see~\cref{app:estimate_omegaZY} for details.

\subsection{Tackling Down to Subsets of \texorpdfstring{$\Y$}{TEXT}}\label{sec:tackle_down_Y_subsets}
\begin{restatable}[$\TIN$ over $\Y$ subsets]{theorem}{TINOVERYSUBSETS}\label{thm:equiv_gin_Y_subsets}For two variables sets $\Z,\Y$, $\EGIN(\Z,\Y)=k$ (assume $k>0$), iff the following two conditions hold: \textbf{1)} $\forall \Y'\subseteq \Y$ with $\vert \Y' \vert = k+1$ (if any), there exists non-zero $\w$ s.t. $\w^\T\Y'\indep\Z$; and \textbf{2)} $\exists \Y'\subseteq \Y$ with $\vert \Y' \vert = k$, there exists no non-zero $\w$ s.t. $\w^\T\Y'\indep\Z$.
\end{restatable}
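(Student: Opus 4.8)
The plan is to reduce the entire statement to a single fact about the rank of a mixing submatrix and then invoke the elementary characterization of matrix rank through its row submatrices. Write $C \coloneqq \operatorname{nzcol}(\B_{\Z,:})$ for the noise index set associated with $\Z$, and set $M \coloneqq \B_{\Y,C}$, a matrix whose rows are indexed by $\Y$. By \cref{thm:characterization_Omega_ZY} we have $\Omega_{\Z;\Y} = \nullspace(M^\T)$, so $\dim(\Omega_{\Z;\Y}) = \vert\Y\vert - \rank(M)$ and hence $\EGIN(\Z,\Y) = \rank(M)$. The crucial observation is that applying \cref{thm:characterization_Omega_ZY} to the pair $(\Z,\Y')$ for a subset $\Y'\subseteq\Y$ leaves the noise index set $C$ unchanged, since $C$ depends only on $\Z$; therefore the matrix governing $\Omega_{\Z;\Y'}$ is exactly the row submatrix $M_{\Y',:}$. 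Consequently, there exists a nonzero $\w$ with $\w^\T\Y'\indep\Z$ if and only if $\nullspace\bigl((M_{\Y',:})^\T\bigr)\neq\{\mathbf{0}\}$, i.e. if and only if the $\vert\Y'\vert$ rows of $M$ indexed by $\Y'$ are linearly dependent.

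With this dictionary in hand I would translate the two conditions into rank statements. Condition 1 asserts that every $(k+1)$-row submatrix of $M$ (if one exists) has linearly dependent rows, which is precisely the statement that no $k+1$ rows of $M$ are independent, i.e. $\rank(M)\leq k$. Condition 2 asserts that some $k$-row submatrix of $M$ has linearly independent rows, i.e. $\rank(M)\geq k$. Thus Conditions 1 and 2 taken together are equivalent to $\rank(M)=k$, which is exactly $\EGIN(\Z,\Y)=k$; this yields both directions of the equivalence simultaneously once the translation is established.

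To make the rank step precise I would invoke the standard fact that the rank of a matrix equals the maximal number of its linearly independent rows, equivalently that the largest rank attained over all $r$-row submatrices equals $\min(r,\rank(M))$. Applying this with $r=k+1$ turns Condition 1 into $\min(k+1,\rank M)\leq k$, i.e. $\rank M\leq k$, and with $r=k$ turns Condition 2 into $\min(k,\rank M)=k$, i.e. $\rank M\geq k$. The edge case $k=\vert\Y\vert$ (when no $(k+1)$-subset exists) is handled by reading Condition 1 as vacuously true and checking directly that $\Y'=\Y$ witnesses Condition 2 exactly when $M$ has full row rank. The argument is essentially routine once the reduction is set up; the only genuine content, and hence the step to get right, is the use of \cref{thm:characterization_Omega_ZY} to show that restricting the variable set from $\Y$ to $\Y'$ corresponds exactly to passing to the row submatrix $M_{\Y',:}$ with the column index set $C$ held fixed, so that ``existence of an independent transformation'' on $\Y'$ becomes linear dependence of the associated rows of $M$.
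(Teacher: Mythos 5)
Your proposal is correct and takes essentially the same route the paper sketches: the paper's own (one-line) proof says to argue ``as in the proof of the graphical criteria, where the subsets and subdeterminants are considered,'' and your reduction of both conditions to the statement $\rank(\B_{\Y,\operatorname{nzcol}(\B_{\Z,:})})=k$ via the row-submatrix characterization of rank is exactly that argument made explicit. The one genuinely load-bearing step you identify --- that passing from $\Y$ to $\Y'$ fixes the column index set and just restricts rows of the same mixing submatrix --- is handled correctly, so nothing is missing.
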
\vspace{-0.5em}
This transforms the task of estimating the \textit{dimension} of $\OmegaZY$ to a simpler one: counting \textit{size} of the subsets $\Y'$. Instead of \textit{all} independence, here we only need to check \textit{existence} of independence.%

\subsection{Constrained Independent Subspace Analysis (ISA)}\label{sec:ISA_for_estimation}Conduct Independent Subspace Analysis (ISA~\citep{theis2006towards}) over variables $\Z$ and $\Y$ in the following form:
\vspace{-0.2em}
\begin{equation}\label{Eq:ISA}
\mathbf{s}= \begin{bmatrix} 
    \mathbf{I} & \mathbf{0} \\
    \mathbf{0} & \mathbf{W}_{\Y\Y}
\end{bmatrix} \begin{bmatrix} 
   \Z \\ \Y
\end{bmatrix},
\end{equation}
\vspace{-1.1em}

where the de-mixing matrix is masked to only update the lower-right $|\Y|\times |\Y|$ block $\mathbf{W}_{\Y\Y}$, with upper-left $|\Z|\times |\Z|$ block fixed as the identity and elsewhere fixed as zero. The independence between $\mathbf{W}_{\Y\Y} \Y$ as a group to $\Z$ is maximized. Since $\mathbf{W}_{\Y\Y}$ is invertible, its rows span the whole $\mathbb{R}^{|\Y|}$, so the maximum number of rows that achieves ${\mathbf{W}_{\Y\Y}}_{i,:}^\intercal\Y \indep \Z$ is exactly the dimension of $\Omega_{\Z;\Y}$.

\subsection{Stacked Cumulants: Ranks Stopped Increasing}\label{sec:stacked_cumulants_ranks_stop}
\begin{definition}[Stacked 2D slices of cumulants]\label{def:stacked_2d_slices_of_cumulants} For two variables sets $\Z,\Y$ and order $k\geq 2$, define:
\begin{equation}\label{Eq:stacked_2d_slices_of_cumulants}
    \Psi_{\Z;\Y}^{(k)} \coloneqq {\begin{bmatrix} 
    {\mathcal{C}_{\Z,\Y}^{{(2)}^\T}} & \cdots & {\mathcal{C}_{\Z,\Y}^{{(k)}^\T}}
    \end{bmatrix}}^\T, \text{ where }\mathcal{C}^{(k)}\text{ is }\text{matrix with }{\mathcal{C}_{i, j}^{(k)}} \coloneqq \operatorname{cum}(\underbrace{X_{i}, \cdots, X_{i}}_{k-1\text{ times}}, X_j).
\end{equation}
\end{definition}\vspace{-1.3em}
$\Psi_{\Z;\Y}^{(k)}$ is a $(k-1)|\Z|\times |\Y|$ matrix that vertically stacks 2D cumulants slices between $\Z,\Y$ with orders from 2 to $k$. When $k=2$ it is $\cov(\Z,\Y)$. Ranks of $\Psi_{\Z;\Y}^{(k)}$ in the sequence $k=2,3,...$ satisfies:
\vspace{-1em}
\begin{restatable}[$\rank(\Psi_{\Z;\Y}^{(k)})$ stopped increasing]{theorem}{RANKSTOPINCREASING}\label{thm:rank_stopped_increasing}
For two variables sets $\Z,\Y$, there exists a finite order $k\geq 2$ s.t. $\operatorname{rank}(\Psi_{\Z;\Y}^{(k+1)}) = \operatorname{rank}(\Psi_{\Z;\Y}^{(k)}).$ Moreover, $\TIN(\Z,\Y)$ equals $\operatorname{rank}(\Psi_{\Z;\Y}^{(k)})$ with this $k$.
\end{restatable}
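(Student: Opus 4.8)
The plan is to reduce each $2$D cumulant slice to the mixing matrix $\B$ and the exogenous noise cumulants, and then read off the rank behaviour from $\B_{\Y,\Anc(\Z)}$, reusing \cref{thm:characterization_Omega_ZY}. Writing $\X=\B\E$ with independent non-Gaussian noises, multilinearity of cumulants and their additivity over independent arguments give, for each order $m$,
\[
\mathcal{C}^{(m)}_{\Z,\Y} \;=\; (\B^{\odot(m-1)})_{\Z,:}\,\operatorname{diag}(\kappa_{m,1},\dots,\kappa_{m,n})\,\B_{\Y,:}^{\T},
\]
where $\kappa_{m,\ell}:=\cum(\underbrace{E_\ell,\dots,E_\ell}_{m})$ and $\odot$ is the entrywise power. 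Only columns $\ell\in\operatorname{nzcol}(\B_{\Z,:})=\Anc(\Z)$ survive, so stacking orders $2,\dots,k$ yields a factorisation $\Psi^{(k)}_{\Z;\Y}=Q_k\,\B_{\Y,\Anc(\Z)}^{\T}$, with $Q_k$ collecting the noise-weighted Hadamard powers. In particular every row of $\Psi^{(k)}_{\Z;\Y}$ lies in the column space of $\B_{\Y,\Anc(\Z)}$, whence $\rank(\Psi^{(k)}_{\Z;\Y})\le \rank(\B_{\Y,\Anc(\Z)})=\TIN(\Z,\Y)$ by \cref{thm:characterization_Omega_ZY}.

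For the first claim I would argue monotonicity plus boundedness: $\Psi^{(k)}_{\Z;\Y}$ is a row-block submatrix of $\Psi^{(k+1)}_{\Z;\Y}$, so $\rank(\Psi^{(k)}_{\Z;\Y})$ is non-decreasing in $k$; being integer-valued and bounded above by $\TIN(\Z,\Y)\le|\Y|$, it is eventually constant, producing a finite $k$ with $\rank(\Psi^{(k+1)}_{\Z;\Y})=\rank(\Psi^{(k)}_{\Z;\Y})$. I would take $k$ to be (at least) this ultimate stabilisation order. A caveat worth stating is that isolated early plateaus can occur when a whole cumulant order vanishes identically (e.g.\ odd cumulants of symmetric noises kill $\mathcal{C}^{(3)}$), so the relevant $k$ must be the point beyond which the rank no longer grows, not merely the first repeat.

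It remains to show the stabilised value equals the upper bound $\TIN(\Z,\Y)$. Using the factorisation, this is equivalent to $\operatorname{rowspace}(\B_{\Y,\Anc(\Z)})\cap\ker(Q_\infty)=\{\mathbf 0\}$ and, passing back through $\w\mapsto \B_{\Y,\Anc(\Z)}^{\T}\w$, to the subspace identity $\ker(\Psi^{(\infty)}_{\Z;\Y})=\OmegaZY$ in $\Real^{|\Y|}$. The inclusion $\OmegaZY\subseteq\ker(\Psi^{(\infty)}_{\Z;\Y})$ is immediate: if $\w^{\T}\Y\indep\Z$ then every joint cumulant mixing the independent groups $\{X_i\}$ and $\{\w^{\T}\Y\}$ vanishes, so $\cum(\underbrace{X_i,\dots,X_i}_{m-1},\w^{\T}\Y)=0$ for all $i\in\Z,\,m\ge2$, i.e.\ $\Psi^{(\infty)}_{\Z;\Y}\w=\mathbf 0$. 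For the reverse I would fix $\w\in\ker(\Psi^{(\infty)}_{\Z;\Y})$, set $W:=\w^{\T}\Y=\sum_\ell v'_\ell E_\ell$, and rewrite the vanishing cross-cumulants, for each $i\in\Z$, as the cumulant-generating-function identity stating that $\sum_{\ell\in\Anc(\{i\})}(v'_\ell/\B_{i,\ell})\,K_{E_\ell}(\B_{i,\ell}t)$ is affine in $t$; non-Gaussianity (Marcinkiewicz: no non-trivial finitely supported cumulant sequence) together with the $\darmois$ would then force $v'_\ell=0$ on every shared noise $\ell\in\Anc(\Z)$, i.e.\ $W\indep\Z$ and $\w\in\OmegaZY$.

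The hard part is precisely this reverse inclusion. The CGF identity only controls the single-source cross-cumulants $\cum(X_i^{(m-1)},W)$, which is strictly weaker than full independence, and it can hold non-trivially when distinct noises have coincidentally proportional cumulant sequences: for exchangeable non-Gaussian noises, $E_1-E_2$ gives a non-zero kernel direction with all such cross-cumulants zero even though $E_1-E_2\not\indep(E_1+E_2)$, so $\rank(\Psi^{(\infty)}_{\Z;\Y})$ can drop below $\TIN(\Z,\Y)$. The step therefore genuinely requires a non-degeneracy condition on the higher-order cumulants of $\E$---in the spirit of \cref{assum:rank_faithfulness} but at order $>2$---ensuring the vectors $(\B_{i,\ell}^{m-1}\kappa_{m,\ell})_{i\in\Z,\,m\ge2}$ indexed by $\ell\in\Anc(\Z)$ are linearly independent on $\operatorname{rowspace}(\B_{\Y,\Anc(\Z)})$. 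Under such genericity the Vandermonde-type separation across orders $m$ closes the argument and the stabilised rank is exactly $\TIN(\Z,\Y)$.
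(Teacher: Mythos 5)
Your factorisation $\mathcal{C}^{(m)}_{\Z,\Y}=(\B^{\odot(m-1)})_{\Z,:}\operatorname{diag}(\kappa_{m,1},\dots,\kappa_{m,n})\,\B_{\Y,:}^{\T}$ and the resulting bound $\rank(\Psi_{\Z;\Y}^{(k)})\le\rank(\B_{\Y,\Anc(\Z)})=\TIN(\Z,\Y)$ are exactly the paper's starting point (\cref{thm:2d_cross_cumulant_tensor_LSEM} in \cref{app:stacked_cumulants}), and your monotone-plus-bounded argument for eventual stabilisation matches how the paper obtains existence of a plateau. Where you diverge is in closing the gap between the stabilised rank and $\TIN(\Z,\Y)$: you head for a cumulant-generating-function / Marcinkiewicz argument and correctly report getting stuck, because the single-source cross-cumulants $\cum(X_i,\dots,X_i,\w^{\T}\Y)$ do not characterise independence without extra non-degeneracy. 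The paper instead introduces \cref{assum:generic} --- genericity of \emph{both} edge parameters and noise cumulants, explicitly acknowledged there to be stronger than \cref{assum:rank_faithfulness} --- and proves a combinatorial rank formula (\cref{thm:graph_criteria_cumulant}): $\rank(\Psi_{\Z;\Y}^{(k)})$ equals a minimum vertex cut in an augmented DAG $\hat{G}^{(k)}$ with $kn$ vertices, obtained by running the Lindstr\"om--Gessel--Viennot / Cauchy--Binet machinery on the Hadamard powers of $\B$. That graphical formula is what simultaneously identifies the limiting rank with $\TIN(\Z,\Y)$ and delivers the stronger claim that the \emph{first} plateau is already the final one.

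Your two caveats are genuine and correctly diagnosed, but they are gaps in the theorem as stated in the main text rather than in your argument alone. The exchangeable-noise example ($W=E_1-E_2$ against $Z=E_1+E_2$, with every stacked cross-cumulant zero yet $W\not\indep Z$) shows the stabilised rank can fall strictly below $\TIN(\Z,\Y)$ under only \cref{assum:rank_faithfulness}; it is excluded precisely by the noise-cumulant half of \cref{assum:generic}. Likewise, symmetric noises annihilating $\mathcal{C}^{(3)}$ would create a spurious early plateau, so the ``one repeat implies permanent stop'' reading also needs that assumption. What remains missing from your proposal is the actual execution of the final step: either carry out the Vandermonde-type separation you allude to (in effect re-deriving the generic rank of the stacked factor $Q_k$), or adopt the paper's route through \cref{thm:graph_criteria_cumulant}. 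As written, your argument establishes the upper bound and the existence of a plateau, but not the equality with $\TIN(\Z,\Y)$.
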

\vspace{-0.5em}
We will show detailed characterization and graphical criteria of $\Psi_{\Z;\Y}^{(k)}$ in~\cref{app:stacked_cumulants}.
\vspace{0.2em}
\subsection{\texorpdfstring{$\TIN$}{TEXT} in Two Steps: Solve Equations, and then Test for Independence}\label{sec:two_steps_estimation}
Motivation of~\cref{sec:stacked_cumulants_ranks_stop} is that independence yields zero cumulants (not only 2nd-order uncorrelatedness), i.e., $\Omega_{\Z;\Y}\subseteq \operatorname{null}(\Psi_{\Z;\Y}^{(k)})$, for any $k\geq 2$. Hence, we could solve equations introduced at each order $k$ and then check whether \textit{all} solution $\w\in\operatorname{null}(\Psi_{\Z;\Y}^{(k)})$ makes $\wTY\indep\Z$ (similar to $\GIN$ procedure). More generally, $\Omega_{\Z;\Y}\subseteq \operatorname{null}(\cov(f(\Z), \Y))$ for any real-valued function $f(\cdot)$. E.g., solve equations system $\{\cov(\log(\Z^2), \Y)\w = \mathbf{0}; \ \cov(\sin(\Z), \Y)\w = \mathbf{0};\cdotsshort\}$ and test whether $\wTY\indep\Z$ holds.
\vspace{-0.5em}
\begin{figure}[t]
\begin{minipage}{0.01\textwidth}\raggedright
\rotatebox[origin=c]{90}{\tiny{Fully connected DAG \ \ \ \ \ \ \ }}\\
\end{minipage}
\noindent\begin{minipage}{0.98\textwidth}
\includegraphics[width=1\linewidth, ]{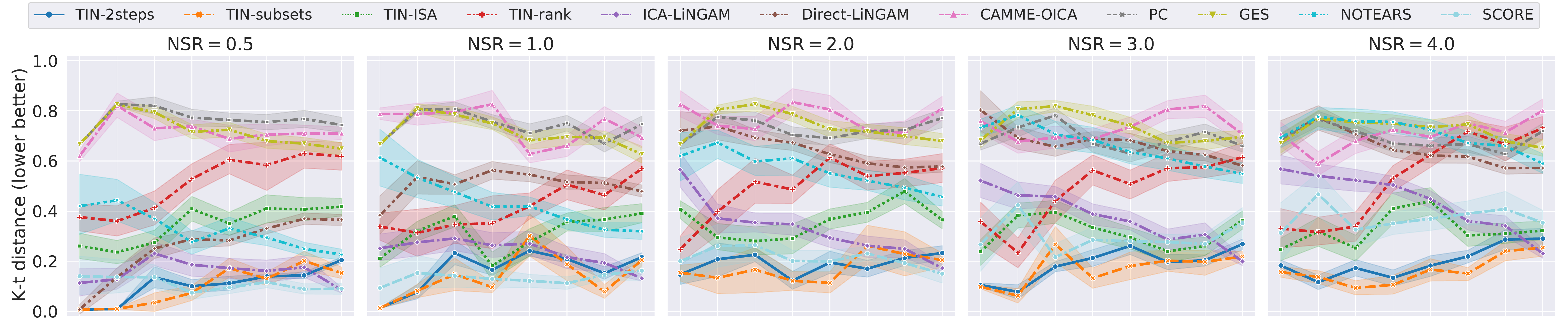}
\end{minipage}
\hfill%
\begin{minipage}{0.01\textwidth}\raggedright
\rotatebox[origin=c]{90}{\tiny{\ \ \ \ \ \ \ \ Chain structure}}\\
\end{minipage}
\noindent\begin{minipage}{0.98\textwidth}
\includegraphics[width=1\linewidth, ]{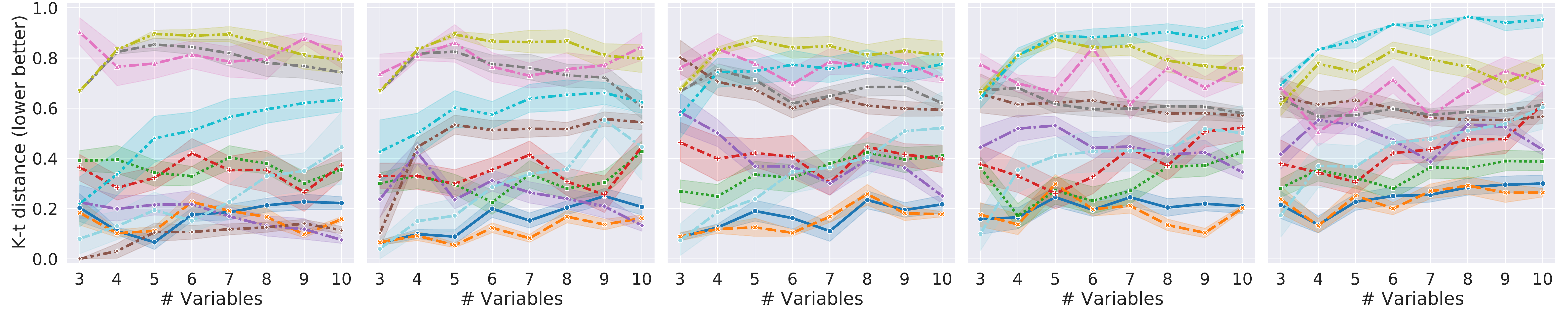}
\end{minipage}
\centering\vspace{-0.5em}
\caption{Distance to truth group ordering (lower better). 4 implementations of $\TIN$ $+$ 7 competitors.}
\label{fig:experiments_fully_chain}
\end{figure}

\vspace{0.5em}
\section{Experimental Results}
\label{sec:experiments}
\vspace{-0.2em}
In this section, we evaluate the performance of $\TIN$ in recovering the ordered group decomposition of $\Gtilde$ under measurement error. Specifically, four implementations of $\TIN$ are evaluated: $\operatorname{TIN-subsets}$(\cref{sec:tackle_down_Y_subsets}), $\operatorname{TIN-ISA}$(\cref{sec:ISA_for_estimation}), $\operatorname{TIN-rank}$(\cref{sec:stacked_cumulants_ranks_stop}), and $\operatorname{TIN-2steps}$(\cref{sec:two_steps_estimation}). We compare our method with PC~\citep{spirtes1991algorithm}, GES~\citep{chickering2002optimal}, ICA-LiNGAM~\citep{shimizu2006linear}, Direct-LiNGAM~\citep{shimizu2011directlingam}, CAMME-OICA~\citep{zhang2018causal}, \newcontent{NOTEARS~\citep{zheng2018dags}, and SCORE~\citep{rolland2022score}}. Experiments are conducted on both synthetic and real-world data.
\vspace{0.1em}
\subsection{Synthetic Data}\label{sec:synthetic_data}
While our proposed method outputs correct ordered group decomposition for any $\Gtilde$ (without assumptions on graph structure), in the following simulation we consider specifically two cases: fully connected DAG (\cref{fig:fully_connected}) and chain structure (\cref{fig:chain}), of which the ordered group decomposition are both $\Xt_1\veryshortarrow\cdotsshort\veryshortarrow\Xt_{n-2},\Xt_{n-1,n}$. We consider $\Gtilde$ with the number of vertices $n=3,\cdotsshort,10$. Edges weights (i.e., the nonzero entries of matrix $\A$) are drawn uniformly from $[-0.9,-0.5]\cup[0.5,0.9]$. Exogenous noises $\Etilde$ are sampled from uniform $\cup [0,1]$ to the power of $c$, $c\sim\cup[5,7]$, and measurement errors are sampled from Gaussian $\mathcal{N}(0,1)$ to the power of $c$, $c\sim\cup[2,4]$. Sample size is $5,000$. Observations are generated by $X_i=\Xt_i+E_i$. To show the effect of measurement error, we simulate with noise-to-signal ratio $\operatorname{NSR}\coloneqq \var(E_i)/\var(\Xt_i)$ in $\{0.5,1,2,3,4\}$. To evaluate the output group ordering, we use \textit{Kendall tau distance}~\citep{kendall1938new} to the ground-truth (in range $[0,1]$, the lower the better). For algorithms returning DAG/PDAG, its ordering is first extracted according to~\cref{def:ordered_group_decomposition}.
\cref{fig:experiments_fully_chain} shows the results. Each column subplot indicates an $\operatorname{NSR}$ setting. The error bar is from 50 random generated instances. We can see that $\operatorname{TIN-subsets}$ and $\operatorname{TIN-2steps}$ are two best methods (ranks first on $68/80$ cases), which steadily stick near x-axis. They both find/check independence by solving equations $\cov(f(\Z),\Y)\w=\mathbf{0}$. $\operatorname{TIN-ISA}$ is slightly inferior (though still ranks 3rd), maybe due to the bias to independence (of $\w$ vectors) by ISA. $\operatorname{TIN-rank}$ fluctuates and performs worst (among $\TIN$s), especially on the fully connected dense case. This might be explained by higher order cumulants' sensitivity to outliers, and unreliable numerical rank tests (we simply use SVD and thresholding). Among the competitors, \newcontent{ICA-LiNGAM and SCORE are the strongest two, which remain relatively stable with $\operatorname{NSR}$ growing larger} (while e.g., Direct-LiNGAM deteriorates rapidly). Interestingly, they both perform much better on fully connected DAGs than on chain structures, and ICA-LiNGAM even performs better on larger graphs than on smaller ones. We will investigate the reason. Consider CAMME-OICA, it relies heavily on mixing matrix parameter initialization, and thus performs generally weak. Weak results by PC, GES might be because of the unfair setting for them: they output CPDAG and in both cases (fully connected and chain) here, naturally no ordering (directions) can be determined. More experimental configuration and results discussions are in~\cref{app:evaluation_details}. \looseness=-1 %
\vspace{0.3em}
\subsection{Real-World Data}\label{sec:real_world_data}
Sachs's~\citep{sachs2005causal} is a real dataset that measures the expression levels of proteins in human cells under various phospholipids. The ground-truth graph structure~\citep{sachs2005causal} contains $17$ edges on $11$ variables (cell types), of which the ordered group decomposition is $\textrm{\{plc,pkc\}}$$\veryshortarrow$$\textrm{\{pip2,pip3,pka,p38,jnk\}}$$\veryshortarrow$$\textrm{\{raf\}}$$\veryshortarrow$ $\textrm{\{mek\}}$$\veryshortarrow$$\textrm{\{erk,akt\}}$. Result given by $\operatorname{TIN-subsets}$ achieves the best distance score $0.33$: $\textrm{\{plc,pkc,p38\}}$$\veryshortarrow$ $\textrm{\{raf,mek,pip2,pka,jnk\}}$$\veryshortarrow$$\textrm{\{pip3\}}$$\veryshortarrow$$\textrm{\{erk,akt\}}$. This score on PC, GES, ICA-LiNGAM are $0.49$, $0.69$, $0.8$ respectively. E.g., PC outputs $\textrm{\{raf,pka\}}$$\veryshortarrow$$\textrm{\{mek\}}$$\veryshortarrow$$\textrm{\{plc,pip2,pip3,pkc,p38,jnk\}}$$\veryshortarrow$$\textrm{\{erk,akt\}}$. \newcontent{Experiments on another dataset, Teacher Burnout~\citep{byrne2013structural}, also show $\TIN$'s good performance. See~\cref{app:experiments_on_teacher_burnout}.}

\section{Conclusion and Discussions}
\label{sec:discussion}
\vspace{-0.5em}
In this work we define the \textbf{T}ransformed \textbf{I}ndependent \textbf{N}oise ($\TIN$) condition based on LiNGAM causal model, which finds and checks for independence between a specific linear transformation of some variables and others. We provide graphical criteria of $\TIN$, which might further improve identifiability of the latent-variable problem. Specifically on causal discovery under measurement error, we exploit $\TIN$ to achieve identifiability of ordered group decomposition.%

We summarize the future work as three fold: \textbf{1)} For the measurement error model, in addition to the special type (one-over-others), $\TIN$ over general $\Z,\Y$ pairs can further improve identifiability. See~\cref{app:more_than_ordered_group_decomp}; \textbf{2)} $\TIN$ now only considers the dimension of $\OmegaZY$, while parameters might also help to recover edges weights. See~\cref{app:more_than_dimension}; and \textbf{3)} Reliable estimation of $\OmegaZY$ can be formulated as an orthogonal research problem. We believe there exists more solutions. See~\cref{app:more_possible_solution_estimate_OmegaZY}.

\begin{ack}
The authors would like to thank Georges Darmois, Viktor Skitovich, Bernt Lindstr\"om, Ira Gessel, G\'erard Viennot, and Seth Sullivant for initializing the beautiful theorems that this work is built upon. Thank Feng Xie, Ruichu Cai, Biwei Huang, Clark Glymour, and Zhifeng Hao for the $\GIN$ condition. Thanks to the anonymous reviewers and Joseph Ramsey, Zeyu Tang, Yujia Zheng, Ignavier Ng, Justin Ding, Mengyao Lu, Haoqin Tu, Qiyu Wu, Muyang Li, Jinkun Cao, and Jinhao Zhu for helpful feedback, proofreading, and discussions. The work was partially supported by the NSF under Project Number A221500S001, by the NSF-Convergence Accelerator Track-D award \#2134901, by NIH-NHLB1 9R01HL159805-05A1, by a grant from Apple Inc., and by a grant from KDDI Research Inc.\looseness=-1 
\end{ack}

\small
\bibliographystyle{plainnat}
\bibliography{references}

\newpage
\appendix

\numberwithin{equation}{section}

\normalsize

\section{Proofs of Main Results}\label{app:proofs}
\subsection{Proof of~\cref{prop:rare_dsep}}
\RAREDSEP*
\begin{proof}[Proof of~\cref{prop:rare_dsep}]\label{prf:prop_rare_dsep}
The whole graph we consider is the graph $\Gtilde$ among latent variables $\Xtilde$ and measurement edges $\Xt_i \rightarrow X_i$. Consider observed variables $X_i, X_j$ and subset $\SH$. Denote by ``$\operatorname{des}$'' the descendants of some vertices on graph. By definition of d-separation, if $X_i \indep_{\mkern-9.5mu d} X_j \vert \SH$, then for every undirected path $p$ linking $X_i$ and $X_j$ (if there is any), $p$ is blocked by $\SH$, i.e., \textit{either 1)} there exists a collider $W$, s.t. $W\not\in\Z$ and $\operatorname{des}(W)\cap \SH = \emptyset$, \textit{or 2)} there exists a non-collider $W$ s.t. $W\in\SH$. Since observed variables are all leaf nodes of their respective latent nodes, for every undirected path $p$ linking $X_i$ and $X_j$, $p$ must be in form of $X_i - \Xt_i - \cdots - \Xt_j - X_j$. Hence, $W$ must be in latent nodes, and only \textit{case 1)} is possible, which means that there exists a collider on every $p$ linking $X_i$ and $X_j$, and thus $X_i$ and $X_j$ is also d-separated by $\emptyset$ (conclusion 1). Specifically on \textit{case 1)}, $W\not\in\SH$ is obvious (since $W\in\Xtilde$ and $\SH\in\X$). And, by $\operatorname{des}(W)\cap \SH = \emptyset$, we have $W\not\in \Tilde{\SH}$ and $\operatorname{des}(W)\cap \Tilde{\SH} = \emptyset$ (easy to show since $\SH\subseteq \operatorname{des}(\Tilde{\SH})$), and thus among latent nodes, there is $\Xt_i \indep_{\mkern-9.5mu d} \Xt_j \vert \Tilde{\SH}$ (conclusion 2). Combining conclusion 1 and 2, let $\SH\coloneqq\emptyset$, we further have marginally $\Xt_i \indep_{\mkern-9.5mu d} \Xt_j$ holds.
\end{proof}
\begin{remark} Roughly speaking, the d-separation patterns among $\Xtilde$ usually do not hold among $\X$ (except for \textit{rare} marginal ones), since the observed variables are all leaf nodes, which are not causes of any other (though the latent variables they intend to measure might be). By a similar proof procedure, we shall have a full version of~\cref{prop:rare_dsep}: for observed variables $X_i, X_j$ and subset $\SH$,
\begin{itemize}[noitemsep,topsep=-3pt]
\item[1.] If $X_i \indep_{\mkern-9.5mu d} X_j \vert \SH$, then among latent variables, marginally $\Xt_i \indep_{\mkern-9.5mu d} \Xt_j$, and $\Xt_i \indep_{\mkern-9.5mu d} \Xt_j \vert \Tilde{\SH}$ holds.
\item[2.] If $X_i\not\indep_{\mkern-9.5mu d} X_j \vert \SH$, which means that there exists a path $p$ from $X_i$ to $X_j$ unblocked by $\mathbf{Z}$. And,
\begin{itemize}[noitemsep,topsep=-1pt]
\item[a)] If on $p$ there is a non-collider $W\in\Tilde{\SH}$, then $\Xt_i \indep_{\mkern-9.5mu d} \Xt_j \vert \Tilde{\SH}$;
\item[b)] Otherwise (on $p$ there is no non-collider $W\in\Tilde{\SH}$), $\Xt_i \not\indep_{\mkern-9.5mu d} \Xt_j \vert \Tilde{\SH}$.
\end{itemize}
\end{itemize}
\end{remark}

\subsection{Proof of~\cref{thm:characterization_Omega_ZY}}
\CHARACTERIZATIONOMEGAZY*
\begin{proof}[Proof of~\cref{thm:characterization_Omega_ZY}]\label{prf:thm_characterization_Omega_ZY}
We write variables in terms of linear combination of exogenous noises, $\X=\B\E$. For variables set $\Z = \B_{\Z,:}\E = \B_{\Z,\operatorname{nzcol}(\B_{\Z,:})}\E_{\operatorname{nzcol}(\B_{\Z,:})}$, where $\operatorname{nzcol}(\B_{\Z,:})$ denotes the column indices where the submatrix $\B_{\Z,:}$ has non-zero entries, i.e., $\Z$ contains and only contains noise terms $\E_{\operatorname{nzcol}(\B_{\Z,:})}$. For a vector $\w \in \mathbb{R}^{\vert \Y \vert}$, $\wTY = \w^\intercal \B_{\Y,:}\E = \w^\intercal \B_{\Y,\operatorname{nzcol}(\B_{\Z,:})}\E_{\operatorname{nzcol}(\B_{\Z,:})} + \w^\intercal\B_{\Y,\sim\operatorname{nzcol}(\B_{\Z,:})}\E_{\sim\operatorname{nzcol}(\B_{\Z,:})}$ (``$\sim$'' denotes complement set). By the $\darmois$~\citep{kagan1973characterization}, $\wTY\indep\Z$ if and only if $\wTY$ shares no common non-Gaussian noise terms with $\Z$, i.e., $\w^\intercal \B_{\Y,\operatorname{nzcol}(\B_{\Z,:})} = 0$. Moreover, if assuming ``if $i\rightsquigarrow j$ then $\B_{j,i}\neq 0$'' (a weaker faithfulness assumption, see~\cref{app:assumptions}), then, $\operatorname{nzcol}(\B_{\Z,:})=\Anc(\Z)$, i.e., variables set $\Z$ contains and only contains exogenous noises w.r.t. its ancestors set.
\end{proof}

\subsection{Proof of~\cref{thm:graph_criteria_omega_ZY}}
\GRAPHICALCRITERIA*
To prove~\cref{thm:graph_criteria_omega_ZY} we mainly use the Lindstr{\"o}m-Gessel-Viennot theorem~\citep{lindstrom1973vector,gessel1985binomial} in algebraic combinatorics, which gives a combinatorial interpretation of the determinants of certain matrices:
\begin{theorem}[Lindstr{\"o}m-Gessel-Viennot theorem~\citep{lindstrom1973vector,gessel1985binomial}]\label{thm:lindstrom_gessel_viennot}
Let $G$ be a directed acyclic graph with vertex set $[n]$. Each directed edge $i\rightarrow j$ is assigned with a weight $e(i,j)$. For each directed path $P$ from vertices $i$ to $j$, let $\operatorname{wt}(P)\coloneqq \Pi_{m\rightarrow l \in P}{e(m,l)}$, the product of the weights of the edges of the path. For any two vertices $i,j$, denote $\mathcal{P}(i,j)$ the set of all directed paths from $i$ to $j$. Write an $n\times n$ matrix $M$, with entries defined as $M_{i,j} = \sum_{P\in \mathcal{P}(i,j)}\operatorname{wt}(P)$, the sum of path weights over all paths from $i$ to $j$. For two subsets $S,T\subseteq [n]$ with $\vert S \vert = \vert T \vert = k$ (letters ``$S$'' means source and ``$T$'' means sink), we have:
\begin{equation}\label{Eq:detM_lindstrom}
    \det(M_{S,T}) = \sum_{\mathbf{P}=(P_1,\ldots,P_k) \colon S \to T} \mathrm{sign}(\sigma(\mathbf{P})) \prod_{i=1}^k \operatorname{wt}(P_i).
\end{equation}
where the sum is taken over all $k$-tuples $\mathbf{P}=(P_1,\ldots,P_k)$ of non-intersecting paths from $S$ to $T$, and $\sigma(\mathbf{P})$ is the sign of the corresponding permutation of elements in $\mathbf{P}$. ``non-intersecting'' means that for any two paths $P_i, P_j \in \mathbf{P}$ with $i\neq j$, $P_i$ and $P_j$ have no two vertices in common (not even endpoints). In particular, $\det(M_{S,T})=0$ if and only if there exists no such $k$-tuple non-intersecting paths, i.e., for every system of $k$ paths from $S$ to $T$, there exists two paths that share a vertex.
\end{theorem}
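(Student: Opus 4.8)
The plan is to expand the determinant by the Leibniz formula, reinterpret each resulting monomial as a signed sum over tuples of directed paths, and then cancel those tuples whose paths intersect by a weight-preserving, sign-reversing involution; what survives is exactly the sum over non-intersecting tuples on the right of~\cref{Eq:detM_lindstrom}. First I would fix orderings $S=\{s_1,\dots,s_k\}$ and $T=\{t_1,\dots,t_k\}$ and, using that $M_{i,j}$ is the path generating function $\sum_{P\in\mathcal{P}(i,j)}\operatorname{wt}(P)$, write
\begin{equation*}
\det(M_{S,T}) = \sum_{\sigma} \mathrm{sign}(\sigma)\prod_{i=1}^{k} M_{s_i, t_{\sigma(i)}} = \sum_{\mathbf{P}} \mathrm{sign}(\sigma(\mathbf{P}))\prod_{i=1}^{k}\operatorname{wt}(P_i),
\end{equation*}
where the last sum is over all tuples $\mathbf{P}=(P_1,\dots,P_k)$ with $P_i$ a directed path from $s_i$ to some sink in $T$, and $\sigma(\mathbf{P})\in\mathfrak{S}_k$ records which sink each $P_i$ reaches. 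This step is purely the distributive law together with the definition of the determinant. Splitting the sum into non-intersecting tuples (no two paths share a vertex) and intersecting ones, the non-intersecting part is precisely the claimed right-hand side, so it remains to show that the intersecting tuples cancel.

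To cancel them I would build an explicit involution $\Phi$ on the set of intersecting tuples. Given such a tuple, I fix a canonical collision: let $i$ be the least index whose path meets another path, let $v$ be the first vertex along $P_i$ (in its own traversal order) that is shared with some other path, and let $j$ be the least index with $v\in P_j$ and $j\neq i$. Define $\Phi(\mathbf{P})$ by swapping the two tails after $v$: the new $P_i$ follows $P_i$ up to $v$ and then $P_j$ to its end, and the new $P_j$ follows $P_j$ up to $v$ and then $P_i$ to its end. Weight preservation is immediate, since $\Phi$ merely regroups the same multiset of edges between the two paths, so $\operatorname{wt}(P_i)\operatorname{wt}(P_j)$ is unchanged; and $\Phi$ is sign-reversing, because it exchanges the sinks reached by $P_i$ and $P_j$, whence $\sigma(\Phi(\mathbf{P})) = \sigma(\mathbf{P})\circ(i\,j)$ differs by a transposition and $\mathrm{sign}$ flips.

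The delicate points are that $\Phi$ outputs a legitimate tuple and that $\Phi\circ\Phi$ is the identity, and this is where I expect the main difficulty. For legitimacy I must check the two spliced paths are still \emph{simple}: the new $P_i$ is simple because every vertex strictly before $v$ on $P_i$ is, by the choice of $v$, shared with no other path and hence disjoint from the appended tail of $P_j$; the new $P_j$ is simple because a repeated vertex $w$ would lie before $v$ on $P_j$ and after $v$ on $P_i$, producing a directed cycle $v\rightsquigarrow w\rightsquigarrow v$, which is impossible since $G$ is acyclic, so acyclicity is used precisely here. For the involution property I would verify that the canonical triple $(i,v,j)$ is reconstructed unchanged from $\Phi(\mathbf{P})$: the initial segments of both paths up to $v$ and the total vertex set are preserved, so the least colliding index, its first shared vertex, and the least partner index are all recomputed to the same values, and a second swap undoes the first. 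The intersecting tuples thus pair off with equal weight and opposite sign and cancel, leaving only the non-intersecting tuples; the final clause, that $\det(M_{S,T})=0$ when no vertex-disjoint system exists, is then immediate. The principal obstacle throughout is pinning the canonical selection rule precisely enough that $\Phi$ is provably an involution; once that and the acyclicity-based simplicity check are in place, the remaining bookkeeping is routine.
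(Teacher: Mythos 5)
The paper offers no proof of this statement: it is imported as a classical result with citations to Lindstr\"om and Gessel--Viennot and used purely as a black box in the proof of \cref{thm:graph_criteria_omega_ZY}. Your proposal is the standard proof of the LGV lemma --- Leibniz expansion of $\det(M_{S,T})$ into a signed sum over path tuples, then cancellation of the intersecting tuples by the tail-swapping, weight-preserving, sign-reversing involution --- and it is essentially correct. The two delicate verifications you single out are exactly the right ones, and your treatment of them is sound: simplicity of the spliced path $P_j$ reduces to acyclicity (a repeated vertex $w$ would give $v\rightsquigarrow w\rightsquigarrow v$), and the canonical triple $(i,v,j)$ is stable under the swap because the union of vertex sets of $P_i$ and $P_j$ is preserved, the paths of index $\neq i,j$ are untouched, and the initial segments up to $v$ are unchanged, so the least intersecting index, its first shared vertex, and the least partner index are all recomputed identically.

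One caveat: the statement ends with an \emph{if and only if} --- $\det(M_{S,T})=0$ iff no non-intersecting $k$-tuple exists --- and your argument delivers only the ``if'' direction (empty sum $\Rightarrow$ zero determinant). The converse is false for arbitrary numerical weights, since non-intersecting systems inducing permutations of opposite sign can cancel. It does hold when the edge weights are treated as indeterminates: in a vertex-disjoint path system every vertex has in- and out-degree at most one in the union of edges, so the system is determined by its edge set, distinct systems contribute distinct monomials, and the right-hand side of~\cref{Eq:detM_lindstrom} is the zero polynomial only when the sum is empty. That genericity is precisely what \cref{assum:rank_faithfulness} supplies downstream, so you should either state the final clause for generic weights or add the distinct-monomials observation to close the converse.
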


Based on~\cref{thm:lindstrom_gessel_viennot}, we can readily give proof to~\cref{thm:graph_criteria_omega_ZY}. Note that in our setting where $\X=\A\X+\E=\B\E$ with $\B=(\I-\A)^{-1}$, we know that $\A_{j,i}$ is the $e(i,j)$ above, and $\B$ is exactly $M^\intercal$, with $\B_{j,i}=\sum_{P\in\mathcal{P}(i,j)}{\prod_{k\to l\in P}\A_{l,k}}$, the total causal effect from $i$ to $j$.

\begin{proof}[Proof of~\cref{thm:graph_criteria_omega_ZY}]\label{prf:thm_graph_criteria_omega_ZY}
From~\cref{thm:characterization_Omega_ZY,assum:rank_faithfulness} and the rank-nullity theorem, $\vert \Y \vert - \dim (\OmegaZY)$ is equal to $\rank(\B_{\Y,\Anc(\Z)})$. By the max-flow min-cut theorem (vertex version, known as Menger's theorem)~\citep{dantzig1955max,bondy1976graph,menger1927allgemeinen}, the maximum amount of non-intersecting paths from source to sink is equal to the size of the minimum vertex cut from source to sink. Hence, if the minimum vertex cut from $\Anc(\Z)$ to $\Y$ is of size $k$, then there exists a $k$-tuples of non-intersecting paths from some subset of $\Anc(\Z)$ to some subset of $\Y$, and this is the largest possible non-intersecting paths system from $\Anc(\Z)$ to $\Y$. By~\cref{thm:lindstrom_gessel_viennot} and~\cref{assum:rank_faithfulness} (no parameter coupling to make coincidental low rank), this means that all $(k+1)\times (k+1)$ minors of $\B_{\Y,\Anc(\Z)}$ is zero and at least one $k\times k$ minor of $\B_{\Y,\Anc(\Z)}$ is non-zero. Hence $\rank(\B_{\Y,\Anc(\Z)})=k$.
\end{proof}

Interestingly, we find that our defined vertex cut has connection with trek-separation~\citep{sullivant2010trek}, i.e. ``$\SH$ is a vertex cut from $\Anc(\Z)$ to $\Y$'' is equivalent to ``$(\emptyset, \SH)$ t-separates $(\Z, \Y)$'' (see~\cref{app:vertex_cut}). Trek-separation theorem states that:
\begin{theorem}[Trek-separation for directed graphical models, Theorem 2.8 in~\citep{sullivant2010trek}]\label{thm:trek_separation}
For two vertices sets $\W,\Y$, the variance-covariance matrix $\cov(\W,\Y)$ has rank less than or or equal to $k$ for all covariance matrices consistent with the graph $G$ if and only if there exists subsets $\SH_\W,\SH_\Y \subseteq V(G)$ with $\vert \SH_\W \vert + \vert \SH_\Y \vert \leq k$ such that $(\SH_\W,\SH_\Y)$ t-separates $(\W,\Y)$. Consequently,
\begin{equation}\label{Eq:trek_separation_cov}
    \rank(\cov(\W,\Y)) \leq \min \{ \vert \SH_\W \vert + \vert \SH_\Y \vert \ \vert \ (\SH_\W,\SH_\Y) \text{ t-separates } (\W,\Y)\}
\end{equation}
and equality holds for generic covariance matrices (i.e., no coincidental low rank in variance-covariance matrix) consistent with $G$.
\end{theorem}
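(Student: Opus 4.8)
The plan is to establish both the iff statement and the generic-equality clause at once, by showing that the \emph{generic} rank of $\cov(\W,\Y)$ over the parameter space $\mathcal{B}(G)$ (together with the noise variances) equals a purely combinatorial quantity --- the maximum size of a trek system with no sided intersection --- and then identifying that quantity with the minimum t-separator size via a Menger-type duality. First I would record the trek rule. Writing the model \cref{Eq:lingam_generating_process} as $\X=\B\E$ with $\B=(\I-\A)^{-1}$ and $\Omega\coloneqq\cov(\E,\E)$ diagonal, we have $\cov(\W,\Y)=\B_{\W,:}\,\Omega\,\B_{\Y,:}^\T$. Because each entry $\B_{i,v}$ is the sum of path products over directed paths from $v$ to $i$ (\cref{thm:lindstrom_gessel_viennot}), the $(a,b)$ entry of $\cov(\W,\Y)$ expands as a sum over \emph{treks} from $a$ to $b$ --- pairs of directed paths sharing a common top vertex $v$ --- each weighted by $\omega_v$ times the two path products.

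Next I would pass to minors. The generic rank of $\cov(\W,\Y)$ is the largest $k$ for which some $k\times k$ minor is a nonzero polynomial in the edge weights and variances. Fixing $\W'\subseteq\W$, $\Y'\subseteq\Y$ with $\vert\W'\vert=\vert\Y'\vert=k$, Cauchy--Binet applied to $\cov(\W',\Y')=\B_{\W',:}\,(\Omega\,\B_{\Y',:}^\T)$ together with diagonality of $\Omega$ gives
\[
\det\cov(\W',\Y')=\sum_{V'\subseteq V,\ \vert V'\vert=k}\Big(\prod_{v\in V'}\omega_v\Big)\det(\B_{\W',V'})\,\det(\B_{\Y',V'}).
\]
Expanding each factor by \cref{thm:lindstrom_gessel_viennot} turns this into a signed sum over pairs (a self-non-intersecting path system from $V'$ to $\W'$, and one from $V'$ to $\Y'$); these are exactly the size-$k$ trek systems whose left sides are mutually vertex-disjoint and whose right sides are mutually vertex-disjoint (``no sided intersection'').

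The main obstacle is the non-cancellation step. Unlike the single-matrix LGV expansion, distinct trek systems can a priori contribute the same monomial and cancel. I would treat the edge weights and variances as independent indeterminates and exhibit a surviving monomial: among all maximum-size no-sided-intersection trek systems, pick one minimal in a fixed well-ordering (e.g.\ minimal total edge length, with ties broken lexicographically in a topological labeling), and use an exchange argument to show that its monomial is produced by no other system, so its coefficient is nonzero. This yields that a $k\times k$ minor is a nonzero polynomial iff a size-$k$ no-sided-intersection trek system exists; hence the generic rank of $\cov(\W,\Y)$ equals the maximum size of such a system, and for any specific consistent covariance matrix the rank can only drop, giving the inequality \cref{Eq:trek_separation_cov}.

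Finally I would match this combinatorial maximum to the minimum t-separator size. Splitting each vertex $v$ into $v_{\mathrm{in}}\to v_{\mathrm{out}}$ and orienting trek left-sides ``upward'' and right-sides ``downward'' turns each trek from $\W$ to $\Y$ into a single directed route, and turns a no-sided-intersection trek system into a family of internally vertex-disjoint routes. In this gadget a vertex cut separating the $\W$-terminals from the $\Y$-terminals corresponds precisely to a pair $(\SH_\W,\SH_\Y)$ that t-separates $(\W,\Y)$, with cut size $\vert\SH_\W\vert+\vert\SH_\Y\vert$. The vertex version of Menger's theorem~\citep{menger1927allgemeinen}, invoked just as in the proof of \cref{thm:graph_criteria_omega_ZY}, then equates the maximum number of disjoint routes with the minimum cut, so the generic rank equals $\min\{\vert\SH_\W\vert+\vert\SH_\Y\vert : (\SH_\W,\SH_\Y)\text{ t-separates }(\W,\Y)\}$, completing the proof.
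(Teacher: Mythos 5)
This theorem is not proved in the paper at all: it is imported verbatim from~\citep{sullivant2010trek}, and the paper only remarks (in the appendix, when proving its cumulant analogue) that the original proof of Theorem~2.8 there runs through the Lindstr\"om--Gessel--Viennot lemma, the Cauchy--Binet expansion, and Menger's theorem. Your proposal reconstructs exactly that architecture --- write $\cov(\W,\Y)=\B_{\W,:}\,\Omega\,\B_{\Y,:}^\T$, expand minors by Cauchy--Binet, interpret the resulting pairs of LGV path systems as trek systems with no sided intersection, and convert the max-trek-system/min-t-separator duality into vertex-Menger on an auxiliary digraph --- so you are following the cited source's route rather than the paper's (which, for its own \cref{thm:graph_criteria_omega_ZY}, applies LGV/Menger directly to $\B_{\Y,\Anc(\Z)}$ and never needs the two-sided covariance version). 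The overall plan is sound and does deliver both the ``for all consistent matrices'' equivalence and the generic equality in \cref{Eq:trek_separation_cov}, since the generic rank equals the combinatorial maximum and specializing parameters can only drop the rank.

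The one step you should tighten is the non-cancellation argument, which you flag as the main obstacle and then handle only heuristically. The exchange argument as sketched is delicate: treks may share edges across sides, so monomials are not squarefree and a trek system is not obviously recoverable from its monomial, which is what a ``no other system produces this monomial'' claim needs. But you do not need it. Since you already treat the noise variances as indeterminates, terms of your Cauchy--Binet sum with different top sets $V'$ carry distinct monomials $\prod_{v\in V'}\omega_v$ and cannot interfere; and for a fixed $V'$, each factor $\det(\B_{\W',V'})$ and $\det(\B_{\Y',V'})$ is a nonzero polynomial precisely when the corresponding vertex-disjoint path system exists, because in the single-matrix LGV expansion distinct vertex-disjoint path systems use distinct edge sets (a set of edges forming vertex-disjoint paths decomposes uniquely into paths), hence contribute distinct squarefree monomials. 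A product of two nonzero polynomials is nonzero, so the $k\times k$ minor vanishes identically iff no size-$k$ no-sided-intersection trek system exists --- no extremal/exchange argument required. One further detail to make precise in the Menger step: a single $v_{\mathrm{in}}\to v_{\mathrm{out}}$ splitting does not by itself let a route go ``up then down''; you need two copies of $G$ (the left copy with edges reversed, the right copy as is, joined at tops), each copy split for vertex capacities, so that cut vertices in the reversed copy yield $\SH_\W$ and those in the forward copy yield $\SH_\Y$, with a vertex cut in both copies correctly costing $2$. With these patches your proof is complete and matches the original one.
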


We now show that~\cref{thm:graph_criteria_omega_ZY} can also be proved by trek-separation theorem:
\begin{proof}[Proof of~\cref{thm:graph_criteria_omega_ZY} (another version)]\label{prf:thm_graph_criteria_omega_ZY_version2}
From~\cref{thm:characterization_Omega_ZY,assum:rank_faithfulness} and the rank-nullity theorem, $\vert \Y \vert - \dim (\OmegaZY)$ is equal to $\rank(\B_{\Y,\Anc(\Z)})$. Then, what is $\rank(\B_{\Y,\Anc(\Z)})$? Let us consider two variables sets $\Anc(\Z)$ and $\Y$ and their respective variance-covariance matrix. We write $\Anc(\Z)$ as mixed noise components $\Anc(\Z) = \B_{\Anc(\Z),\Anc(\Z)}\E_{\Anc(\Z)}$, where $\B_{\Anc(\Z),\Anc(\Z)}$ is a square matrix which can be simultaneously permuted to lower triangular with diagonals one, and thus is full rank. Then write $\Y = \B_{\Y,:}\E = \B_{\Y,\Anc(\Z)} \E_{\Anc(\Z)} + \B_{\Y,\sim\Anc(\Z)}\E_{\sim\Anc(\Z)}$, where the second part are $\Y$'s noise components that is not shared in $\Anc(\Z)$, so is independent to $\Anc(\Z)$ and can be dropped in calculating covariance. From above, $\cov(\Anc(\Z),\Y) = \B_{\Anc(\Z),\Anc(\Z)} \Phi(\E_{\Anc(\Z)}) \B_{\Y,\Anc(\Z)}^\T$, where $\Phi(\E_{\Anc(\Z)})$ is a diagonal matrix with diagonal entries being variance of exogenous noise terms in $\E_{\Anc(\Z)}$. Since both $\B_{\Anc(\Z),\Anc(\Z)}$ and $\Phi(\E_{\Anc(\Z)})$ are full rank square matrices, $\rank(\B_{\Y,\Anc(\Z)})$ is equal to $\rank (\cov (\Anc(\Z), \Y))$.

According to~\cref{thm:trek_separation} and~\cref{assum:rank_faithfulness}, $\rank (\cov (\W, \Y))$ is equal to $\min \{ \vert \SH_\W \vert + \vert \SH_\Y \vert \vert$ $(\SH_\W,\SH_\Y) \text{ t-separates } (\W,\Y)\}$. Further we obtain a lemma: if $\W=\Anc(\W)$, i.e., ancestors are self-contained in $\W$, then $\rank (\cov (\W, \Y)) = \min \{ \vert \SH \vert \ \vert \ (\emptyset,\SH) \text{ t-separates } (\W,\Y)\}$. This can be proved by that, for self-contained $\W$, for any $(\SH_\W,\SH_\Y)$ that t-separates $(\W,\Y)$, $(\emptyset, \SH_\W\cup \SH_\Y)$ also t-separates $(\W,\Y)$. Another lemma is that, $(\emptyset, \SH)$ t-separates $(\Anc(\Z),\Y)$ if and only if $(\emptyset, \SH)$ t-separates $(\Z,\Y)$ (see~\cref{app:vertex_cut}).

With lemmas above, we immediately have that $\rank(\B_{\Y,\Anc(\Z)})$ is equal to the size of the minimum vertices set $\SH$ s.t. $(\emptyset, \SH)$ t-separates $(\Z,\Y)$, i.e., the size of the minimum vertex cut from $\Anc(\Z)$ to $\Y$.
\end{proof}

\subsection{Proof of~\cref{thm:equiv_gin_true_observed}}
\EQUIVALENCEOVERLATENTOBSERVED*
\begin{proof}[Proof of~\cref{thm:equiv_gin_true_observed}]\label{prf:equiv_gin_true_observed}
\cref{thm:equiv_gin_true_observed} can either be proved by the graphical criteria (where observed variables are all leaf nodes), or by mathematically showing how rank of submatrices of $\B$ preserves among latent and observed variables. Consider the latent $\Xtilde$ in LiNGAM:
\begin{equation}
\Xtilde = \Tilde{\A}\Xtilde + \Tilde{\mathbf{E}}; \ \ \Tilde{\mathbf{X}} = \Tilde{\B}\Tilde{\mathbf{E}}; \ \ \text{variance-covariance matrix }\Tilde{\mathbf{\Sigma}}=\Tilde{\B} \Phi_{\Tilde{\mathbf{E}}} \Tilde{\B}^{\intercal}
\end{equation}

when $X_i = \Xt_i + E_i$, write latent and observed variables together, we have:
\begin{equation}
\begin{aligned}
\left[
    \begin{array}{c}
        \Tilde{\mathbf{X}}\\ \hline \vspace{-.27cm} \\ \mathbf{X}
    \end{array}
\right] &= \A^{'} \cdot \left[
    \begin{array}{c}
        \Tilde{\mathbf{X}}\\ \hline \vspace{-.27cm} \\ \mathbf{X}
    \end{array}
\right] + \left[
    \begin{array}{c}
        \Tilde{\mathbf{E}}\\ \hline \vspace{-.27cm} \\ \mathbf{E}
    \end{array}
\right], \ \ \ \text{where } \A^{'} = \left[
    \begin{array}{c|c}
        \Tilde{\A} & \mathbf{0}\\ \hline \vspace{-.27cm} \\ \mathbf{I} & \mathbf{0}
    \end{array}
\right],\\
\left[
    \begin{array}{c}
        \Tilde{\mathbf{X}}\\ \hline \vspace{-.27cm} \\ \mathbf{X}
    \end{array}
\right] &= \B^{'} \cdot \left[
    \begin{array}{c}
        \Tilde{\mathbf{E}}\\ \hline \vspace{-.27cm} \\ \mathbf{E}
    \end{array}
\right], \ \ \ \text{where } \B^{'} = \left[
    \begin{array}{c|c}
        \Tilde{\B} & \mathbf{0}\\ \hline \vspace{-.27cm} \\ \Tilde{\B} & \mathbf{I}
    \end{array}
\right],\\
\mathbf{\Sigma^{'}} &= \cov\left( \left[
    \begin{array}{c}
        \Tilde{\mathbf{X}}\\ \hline \vspace{-.27cm} \\ \mathbf{X}
    \end{array}
\right] \right) = \left[
    \begin{array}{c|c}
        \Tilde{\mathbf{\Sigma}} & \Tilde{\mathbf{\Sigma}}\\ \hline \vspace{-.27cm} \\ \Tilde{\mathbf{\Sigma}} & \Tilde{\mathbf{\Sigma}} + \Phi_{\mathbf{E}}
    \end{array}
\right], \ \ \ \text{where } \Phi_{\mathbf{E}}=\operatorname{diag}(\var(\mathbf{E})).
\end{aligned}
\end{equation}

More generally, when observations are measured with $X_i = c_i \Xt_i +E_i$, let $\mathbf{C} = \operatorname{diag}([c_1,\cdots,c_n]^\T)$:
\begin{equation}
\A^{'} = \left[
    \begin{array}{c|c}
        \Tilde{\A} & \mathbf{0}\\ \hline \vspace{-.27cm} \\ \mathbf{C} & \mathbf{0}
    \end{array}
\right], \ \ \B^{'} = \left[
    \begin{array}{c|c}
        \Tilde{\B} & \mathbf{0}\\ \hline \vspace{-.27cm} \\ \mathbf{C}\Tilde{\B} & \mathbf{I}
    \end{array}
\right], \ \ \mathbf{\Sigma^{'}} = \left[
    \begin{array}{c|c}
        \Tilde{\mathbf{\Sigma}} & \Tilde{\mathbf{\Sigma}}\mathbf{C}^\T\\ \hline \vspace{-.27cm} \\ \mathbf{C} \Tilde{\mathbf{\Sigma}} & \mathbf{C}\Tilde{\mathbf{\Sigma}}\mathbf{C}^\T + \Phi_{\mathbf{E}}
    \end{array}
\right]
\end{equation}
For two disjoint sets $\Z,\Y$, consider $\B_{\Y,\Anc(\Z)}$, a submatrix in $[\mathbf{C}\Tilde{\B} \ \vert \ \mathbf{I}]$. $\Anc(\Z) = \Anc(\Ztilde) \cup \Z$, where for the second $\Z$ parts, its indexed columns in $\B_{\Y,:}$ must be all zero (since $\Y$ and $\Z$ are disjoint indices in $\mathbf{I}$), and thus can be dropped. For the first $\Anc(\Ztilde)$ part, $\B_{\Y,\Anc(\Ztilde)}$ is $\B_{\Ytilde,\Anc(\Ztilde)}$ with rows scaled by $\mathbf{C}$, and thus the rank holds. Consequently, for two disjoint vertices sets $\Z,\Y$, $\TIN(\Z,\Y) = \TIN(\Ztilde,\Ytilde)$.
\end{proof}

\subsection{Proofs of Other Results}
For other lemmas and theorems in this paper: $\GIN, \IN$ as special cases of $\TIN$ and \cref{lem:egin_one_and_rest} follows directly from the graphical criteria in~\cref{thm:graph_criteria_omega_ZY}. \cref{thm:equiv_gin_Y_subsets} can be proved in a similar way as the proof to~\cref{thm:graph_criteria_omega_ZY}, where the subsets and subdeterminants are considered. For ranks stopped increasing in~\cref{thm:rank_stopped_increasing}, please refer to~\cref{app:stacked_cumulants}.

\section{Using \texorpdfstring{$\GIN$}{TEXT} Condition-Based Algorithm Under 2-Measurements Model}\label{app:gin_two_measurements}
As is illustrated in~\cref{sec:motivation}, when each latent variable $\Tilde{X_i}$ has two pure measurements $X_{i_1}, X_{i_2}$ (by ``pure'' it means that each of $X_{i_1}, X_{i_2}$ has only one latent parent $\Tilde{X_i}$ and no observed parents), graph structure $\Tilde{G}$ over latent variables is fully identifiable by $\GIN$ (a simpler case). This is already a breakthrough comparing to existed methods~\citep{silva2006learning, spirtes2013calculation, kummerfeld2016causal}, which only identify a partial graph.

\newcontent{Here is an illustrating example: consider a simple 2-variables example, $\tilde{X}\rightarrow \tilde{Y}$, with their respective measurements $X_1, X_2$ and $Y_1, Y_2$. One may check the entailed vanishing correlations: $\rho_{X_1,Y_1}\rho_{X_2,Y_2}=\rho_{X_1,Y_2}\rho_{X_2,Y_1}$, $\rho_{X_1,X_2}\rho_{Y_1,Y_2}\neq\rho_{X_1,Y_1}\rho_{X_2,Y_2}$, and $\rho_{X_1,X_2}\rho_{Y_1,Y_2}\neq\rho_{X_1,Y_2}\rho_{X_2,Y_1}$, where $\rho$ denotes correlation coefficient. These (in)equations exhibit no asymmetry between $\tilde{X}$ and $\tilde{Y}$. Indeed, for the inverse direction $\tilde{X}\leftarrow \tilde{Y}$, all the Tetrad constraints among $X_1, X_2, Y_1, Y_2$ hold the same. Therefore, the direction between $\tilde{X}$ and $\tilde{Y}$ is unidentifiable.

However, the $\GIN$ condition can identify an asymmetry: $\GIN$($X_1, Y_{1,2}$) holds, while $\GIN$($Y_1, X_{1,2}$) is violated, and thus the direction $\tilde{X}\rightarrow \tilde{Y}$ is identified. One can see this from the definition of $\GIN$ (\cref{def:gin_definition}).}

Below we give the general algorithm of using GIN to fully identify $\tilde{G}$. \newcontent{Note that here by ``$\GIN$ condition'', it is actually a bit different from the original paper~\citep{xie2020generalized}: it takes into account one more thing than the original definition: the degeneration of $\omega$ (see~\cref{app:more_tin_properties} for details).} Here is the procedure:

Given $2n$ measured variables (where $n$ is the number of vertices in $\Tilde{G}$), let two variables be $\Y$ and the rest $2n-2$ variables be $\Z$, $\GIN(\Z,\Y)$ if and only if these two variables are the two measurements of a same latent variable. Following this, the $2n$ measured variables can first be pairwise clustered, and labeled as $\{X_{i_1}, X_{i_2}\}_{i=1,\cdots,n}$. One may also obtain this pairwise labeling by prior knowledge (e.g., in survey questions design, one already knows which two questions indicate a same latent factor).

Then, find the graph structure $\Tilde{G}$ over $n$ latent variables:

\begin{algorithm}[htb]
	\caption{Identifying graph structure of $\Tilde{G}$ in 2-measurements case}
	\label{alg:one}
	\hspace*{0.02in} {\bf Input:}
	Labeled $2n$ measurements $\X=\{X_{i_1}, X_{i_2}\}_{i=1,\cdots,n}$ and corresponding data samples\\
	\hspace*{0.02in} {\bf Output:}
	Graph structure of $\Tilde{G}$
	\begin{algorithmic}[1]
	\STATE Initialize ordered list $K\coloneqq \emptyset$, remaining indexes $U\coloneqq \{1,\cdots,n\}$, parents dictionary $P\coloneqq \{\}$;
	\STATE Denote a half of measurements $\mathbf{X_1}=\{X_{i_1}\}_{i=1,\cdots,n}$;
	\WHILE{there are more than one remaining index in $U$}
    	\STATE Find one $j\in U$ with $\GIN(\Z,\mathbf{X_1})$, where $\Z\coloneqq \{X_{i_2} | i\in K \cup \{j\}\}$; //pick from another half
    	\STATE Append $j$ to the end of $K$. Let $U\coloneqq U \backslash \{j\}$;
	\ENDWHILE
	\STATE Append the only one remaining index in $U$ to the end of $K$;
	\FOR{vertex index $j$ in causal ordering list $K$}
	\STATE Let $A\coloneqq\{i|i\text{ earlier than }j\text{ in }K\}$, $\Z\coloneqq\{X_{i_1}|i\in A\}$, $\Y\coloneqq\{X_{i_2}|i\in A\cup\{j\}\}$;
	\STATE $\GIN(\Z,\Y)$ must hold, with solution $\omega$. Let $P[j]\coloneqq \{i\in A|\omega\text{ on }X_{i_2}\text{ is non-degenerated}\}$;
	\ENDFOR
	\STATE {\bfseries Return:} Graph structure $\Tilde{G}$ where each vertex $j$ has direct parents $P[j]$
	\end{algorithmic}
\end{algorithm}

\cref{alg:one} follows a similar procedure as Direct-LiNGAM~\citep{shimizu2011directlingam}: Lines 3-5 sorts the vertices by causal ordering, where there is no edge from later ones to earlier ones. Then according to degeneration graphical criteria in~\cref{app:more_tin_properties}, Line 10 identifies the direct parents set of each vertex from its causally earlier vertices set.

Further consider the coefficients. Denote the linear coefficients of latent variable $\Tilde{X_i}$ to two measurements $X_{i_{1,2}}$ as $\alpha_{i_{1,2}}$ respectively. The ratio $\alpha_{i_1}/\alpha_{i_2}$ is accessible when testing $\GIN$ for pairwise clusters. Then, in Line 10, to identify parents set for each vertex $j$, we find from $A$, the vertices earlier than $j$ in ordered list $K$. We write a scaling vector $\mathbf{s}\coloneqq \{\alpha_{j_2}/\alpha_{i_2} | i\in A\}$, and denote the coefficients vector from $A$ to $j$ as $\mathbf{c}$ (zero if no direct edge). Note that here $\omega$ must only have a free degree of one (according to~\cref{thm:graph_criteria_omega_ZY} and~\cref{app:critical_vertex_cut}, critical vertex cut is $A$). So set the value of $\omega$ on $X_{j_2}$ as $-1$, then the value of $\omega$ on other $X_{i_{2}}$s is exactly the point-wise multiplication of $\mathbf{s}$ and $\mathbf{c}$. If we further assume that linear coefficients from latent variables to measurements are all same (e.g., one, $X_{i_{1,2}}=\Tilde{X_i}+E_{i_{1,2}}$), or equivalently, the measurement errors are uni-variance, then the coefficients among $\Tilde{G}$ is also fully identifiable.

\section{Elaboration on Vertex Cut and Graph Definitions}\label{app:vertex_cut}
We first give more detailed definitions to the concepts in~\cref{sec:approach}.

\begin{definition}[Directed paths]\label{def:directed_paths}
A directed path $P = (i_0, i_1, \cdots, i_k)$ in $G$ is a sequence of vertices of $G$ where there is a directed edge from $i_j$ to $i_{j+1}$ for any $0\leq j \leq k-1$. We use notation $i\rightsquigarrow j$ to show that there exists a directed path from vertex $i$ to $j$.
\begin{remark}Note that a single vertex is also a directed path, i.e., $i\rightsquigarrow i$ holds true.\end{remark}
\end{definition}

\begin{definition}[Directed paths without passing through $\SH$]\label{def:directed_paths_nopassing}
Let $\SH$ be a subset of vertices. We use notation $i \stackrqarrow{\cancel{[\SH]}} j$ to show that there exists a directed path from vertex $i$ to $j$ without passing through $\SH$, i.e., there exists a directed path $P = (i, m_0, \cdots, m_k,j)$ in $G$ s.t. $i,j\not\in\SH$ and $m_l\not\in\SH$ for any $0\leq l \leq k$.
\begin{remark}Note that when $\SH$ is empty, $i\rightsquigarrow j$ is equivalent to $i \stackrqarrow{\cancel{[\SH]}} j$.\end{remark}
\end{definition}

\begin{definition}[Ancestors]\label{def:ancestors}
Let $\W$ be a subset of vertices. Ancestors $\Anc(\W)\coloneqq \{j|\exists i\in\W, j\rightsquigarrow i\}$.
\end{definition}
\begin{remark}
Note that $\W\subseteq \Anc(\W)$. Under faithfulness assumption (no parameter coupling), $\Anc(\W)$ means all noise components that $\W$ carries, i.e., writing the corresponding variables set $\{X_i|i\in \W\}$ as linear combination of noises, it contains and only contains exogenous noises from $\{E_i|i\in \Anc(\W)\}$.
\end{remark}

\begin{definition}[Ancestors outside $\SH$]\label{def:ancestors_outside_S}
Let $\W,\SH$ be two subsets of vertices. We denote ancestors of $\W$ that has directed paths into $\W$ without passing through $\SH$ as $\Ancout{\SH}(\W)\coloneqq \{j|\exists i\in\W, j \stackrqarrow{\cancel{[\SH]}} i\}$.
\end{definition}

\begin{remark}
According to definitions above,
\begin{itemize}[noitemsep,topsep=-3pt]
\item[1.] $\Ancout{\emptyset}(\W)=Anc(\W)$. $\Ancout{\W}(\W)=\emptyset$.
\item[2.] $\SH\cap \Ancout{\SH}(\W)=\emptyset$. $\W\backslash\SH\subseteq \Ancout{\SH}(\W)$.
\item[3.] For overlapped $\SH,\W$, $\Ancout{\SH}(\W)=\Ancout{\SH}(\W\backslash \SH)$.
\item[4.] Roughly speaking, $\Ancout{\SH}(\W)$ means all noise components that can contribute to $\W$ without passing $\SH$. With slight notation abuse, we can write variables $\W$ as $\W=A\SH+\E_\W$, where $A\SH$ is a linear transformation to $\SH$, and $\E_\W$ is a linear transformation to exogenous noises set that contains and only contains $\{E_i|i\in \Ancout{\SH}(\W)\}$.
\end{itemize}
\end{remark}

\begin{definition}[Existence of causal effect from $\W_1$ to $\W_2$]\label{def:exist_causal_effect}
Let $\W_1,\W_2$ be two subsets of vertices. We say there exists causal effect from $\W_1$ to $\W_2$ if and only if there exists a directed path $i\rightsquigarrow j$ with $i\in\W_1$ and $j\in\W_2$.
\end{definition}
\begin{remark}According to definitions above,
\begin{itemize}[noitemsep,topsep=-3pt]
\item[1.] Note that if $\W_1$ and $\W_2$ are not disjoint, then there must exist causal effect from $\W_1$ to $\W_2$.
\item[2.] An equivalent definition is that, $\Anc(\W_2)\cap\W_1\neq \emptyset$.
\end{itemize}
\end{remark}

\begin{definition}[Existence of causal effect from $\W_1$ to $\W_2$ without passing through $\SH$]\label{def:exist_causal_effect_without_passing}
Let $\W_1,\W_2,\SH$ be three subsets of vertices. We say there exists causal effect from $\W_1$ to $\W_2$ without passing through $\SH$ if and only if there exists a directed no-passing path $i \stackrqarrow{\cancel{[\SH]}} j$ with $i\in\W_1$ and $j\in\W_2$.
\end{definition}
\begin{remark}According to definitions above,
\begin{itemize}[noitemsep,topsep=-3pt]
\item[1.] An equivalent definition is that, $\Ancout{\SH}(\W_2)\cap\W_1\neq \emptyset$.
\item[2.] There exists no causal effect from $\SH$ to $\W_1$ without passing $\SH$, i.e., $\SH\cap \Ancout{\SH}(\W)=\emptyset$.
\item[3.] This definition shows whether $\SH$ chokes \textbf{all} directed paths from $\W_1$ to $\W_2$.
\item[4.] By~\cref{def:vertex_cut}, the following statements are equivalent: \textbf{1)} there exists no causal effect from $\W_1$ to $\W_2$ without passing through $\SH$; \textbf{2)} $\SH$ is a vertex cut from $\W_1$ to $\W_2$; \textbf{3)} $\forall i\in\W_1,j\in\W_2,$ $i\stackrqarrow{\cancel{[\SH]}} j$ does not hold; \textbf{4)} $\Ancout{\SH}(\W_2)\cap\W_1=\emptyset$; \textbf{5)} $\SH$'s removal from $G$ ensures there is no directed paths from $\W_1\backslash\SH$ to $\W_2\backslash\SH$.
\end{itemize}
\end{remark}

Now we have complete our graphical definitions. Let us also review trek-separation~\citep{sullivant2010trek}.

\begin{definition}[Trek]\label{def:trek}
A \textit{trek} in $G$ from $i$ to $j$ is an ordered pair of directed
paths $(P_1, P_2)$ where $P_1$ has sink $i$, $P_2$ has sink $j$, and both $P_1$ and $P_2$ have the same source $k$. Note that one or both of $P_1$ and $P_2$ may consist of a single vertex, e.g., $((i), (i))$ is a trek from vertex $i$ to $i$.
\end{definition}

\begin{definition}[t-separation]\label{def:t_separation}
Let $\W, \Y, \SH_\W, \SH_\Y$ be four subsets of $V(G)$ which need not be disjoint. We say that the pair $(\SH_\W, \SH_\Y)$ \textit{trek separates} (or \textit{t-separates}) $\W$ from $\Y$ if for every trek $(P_1,P_2)$ from a vertex in $\W$ to a vertex in $\Y$, either $P_1$ contains a vertex in $\SH_\W$ or $P_2$ contains a vertex in $\SH_\Y$.
\end{definition}

The above two definitions are directly from~\citep{sullivant2010trek}. By the ``ancestors'' related definitions introduced above and in~\cref{sec:approach}, we can immediately get an equivalent restatement of t-separation as:

\begin{theorem}[Restatement of t-separation]\label{restate_t_separation}
Let $\W, \Y, \SH_\W, \SH_\Y$ be four subsets of $V(G)$ which need not be disjoint. The pair $(\SH_\W, \SH_\Y)$ t-separates $\W$ from $\Y$, if and only if there exists no causal effect from $\Ancout{\SH_\W}(\W)$ to $\Y$ without passing passing $\SH_\Y$ (see~\cref{def:exist_causal_effect_without_passing}).
\end{theorem}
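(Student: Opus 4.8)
The plan is to prove the stated equivalence through its contrapositive, converting both sides into statements about a single shared source vertex. Concretely, I would show that $(\SH_\W,\SH_\Y)$ \emph{fails} to t-separate $\W$ from $\Y$ if and only if there \emph{exists} a directed no-passing path from some vertex of $\Ancout{\SH_\W}(\W)$ into $\Y$ avoiding $\SH_\Y$, i.e.\ causal effect from $\Ancout{\SH_\W}(\W)$ to $\Y$ without passing through $\SH_\Y$ in the sense of \cref{def:exist_causal_effect_without_passing}. The key observation is that a trek $(P_1,P_2)$ from $w\in\W$ to $j\in\Y$ with common source $k$ decomposes into two directed legs, $P_1$ running $k\rightsquigarrow w$ and $P_2$ running $k\rightsquigarrow j$, and this source $k$ is exactly the candidate ancestor vertex in $\Ancout{\SH_\W}(\W)$.

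First I would handle the forward direction. Assume t-separation fails, so by \cref{def:t_separation} there is a trek $(P_1,P_2)$ from some $w\in\W$ to some $j\in\Y$ such that $P_1$ contains no vertex of $\SH_\W$ and $P_2$ contains no vertex of $\SH_\Y$. Let $k$ be their common source. Since $P_1$ is a directed path $k\rightsquigarrow w$ avoiding $\SH_\W$ at every vertex (endpoints included), we get $k \stackrqarrow{\cancel{[\SH_\W]}} w$, hence $k\in\Ancout{\SH_\W}(\W)$ by \cref{def:ancestors_outside_S}. Likewise $P_2$ yields $k \stackrqarrow{\cancel{[\SH_\Y]}} j$ with $j\in\Y$, which is precisely a causal effect from $\Ancout{\SH_\W}(\W)$ to $\Y$ without passing through $\SH_\Y$.

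For the converse I would reverse this construction. Assume such a causal effect exists: there is $k\in\Ancout{\SH_\W}(\W)$ and $j\in\Y$ with $k \stackrqarrow{\cancel{[\SH_\Y]}} j$; call this path $P_2$. By the definition of $\Ancout{\SH_\W}(\W)$ there is $w\in\W$ with $k \stackrqarrow{\cancel{[\SH_\W]}} w$; call this path $P_1$. Then $(P_1,P_2)$ is a trek from $w$ to $j$ per \cref{def:trek} (using that a single vertex is itself a directed path, so degenerate legs cause no trouble), and by construction $P_1$ meets no vertex of $\SH_\W$ while $P_2$ meets no vertex of $\SH_\Y$; hence this trek is unblocked and t-separation fails.

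The argument is essentially definition-chasing, so the one step I would watch most closely — and the only place anything can go wrong — is the endpoint bookkeeping. The trek-separation definition counts the endpoints $k,w$ (resp.\ $k,j$) as vertices of $P_1$ (resp.\ $P_2$), while the no-passing relation $i \stackrqarrow{\cancel{[\SH]}} j$ of \cref{def:directed_paths_nopassing} explicitly demands $i,j\notin\SH$ together with all interior vertices. These two conventions coincide exactly, which is what upgrades the decomposition/reassembly above from a one-sided implication into a genuine equivalence; I would state this matching of endpoint conventions explicitly to rule out an off-by-one slip at the source and sink vertices.
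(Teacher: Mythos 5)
Your proof is correct and is exactly the definition-chasing argument the paper has in mind: the paper gives no explicit proof (it asserts the restatement follows ``immediately'' from the ancestor definitions), and your decomposition of an unblocked trek into its two directed legs from the common source, together with the reassembly in the converse direction, is the natural way to fill in that gap. Your explicit check that the endpoint conventions of \cref{def:t_separation} and \cref{def:directed_paths_nopassing} agree is the one point worth being careful about, and you handle it correctly.
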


Note that the above graph condition also has an equivalent restatement: 

\textit{$\cdots$ if and only if there exists no causal effect from $\Ancout{\SH_\Y}(\Y)$ to $\W$ without passing passing $\SH_\W$}.

Both mean that $\Ancout{\SH_\W}(\W)\cap \Ancout{\SH_\Y}(\Y)=\emptyset$, i.e., if some noise components can flow into $\W$ without passing $\SH_\W$, then it cannot also flow into $\Y$ without passing $\SH_\Y$, or vice versa.

\begin{remark}
Further, by definitions above and the rank constraints (in trek-separation theorem~\cref{thm:trek_separation}), we have the followings:
\begin{itemize}[noitemsep,topsep=-3pt]
\item[1.] $\rank(\cov(\W,\Y))\geq |\W\cap\Y|$, since we must have $\W\cap\Y \subseteq \SH_\W\cup\SH_\Y$ if $(\SH_\W, \SH_\Y)$ t-separates $\W$ from $\Y$, otherwise some unblocked vertex is in $\Ancout{\SH_\W}(\W)\cap \Ancout{\SH_\Y}(\Y)$.
\item[2.] $\rank(\cov(\W,\Y))\leq \min(|\W|,|\Y|)$, since $(\W, \emptyset)$ and $(\emptyset, \Y)$ always t-separates $\W$ from $\Y$, i.e., $\Ancout{\W}(\W)=\emptyset$ or $\Ancout{\Y}(\Y)=\emptyset$.
\item[3.] $\rank(\cov(\W,\Y))\leq |\Anc(\W)\cap \Anc(\Y)|$, since $(\Anc(\W)\cap \Anc(\Y), \emptyset)$ and $(\emptyset, \Anc(\W)\cap \Anc(\Y))$ always t-separates $\W$ from $\Y$.
\item[4.] The pair $(\SH_\W, \SH_\Y)$ t-separates $\W$ from $\Y$, if and only if the pair $(\SH_\Y, \SH_\W)$ t-separates $\Y$ from $\W$.
\end{itemize}
\end{remark}

From above we have seen the interpretation of t-separation from the ``ancestors'' language set. Then combining~\cref{def:vertex_cut} and~\cref{restate_t_separation}, we know that the following statements are equivalent: \textbf{1)} $\SH$ is a vertex cut from $\Anc(\Z)$ to $\Y$; \textbf{2)} $(\emptyset,\SH)$ t-separates $(\Z,\Y)$; \textbf{3)} There exists no causal effect from $\Anc(\Z)$ to $\Y$ without passing through $\SH$; \textbf{4)} There exists no causal effect from $\Ancout{\SH}(\Y)$ to $\Z$.

\section{More Properties of \texorpdfstring{$\TIN$}{TEXT} Condition}\label{app:more_tin_properties}
\subsection{Critical Vertex Cut}\label{app:critical_vertex_cut}
From the above~\cref{sec:approach} and~\cref{app:vertex_cut} graphical criteria, we know that $\TIN(\Z,\Y)$ is equal to the size of the minimum vertex cut from $\Anc(\Z)$ to $\Y$.
\begin{remark}\label{remark_y_side_choke_set}
Following~\cref{def:vertex_cut}, we first elaborate more on vertex cut:
\begin{itemize}[noitemsep,topsep=-3pt]
\item[1.] For any $\Z,\Y$, any superset of $\Y$ (including $\Y$) is a vertex cut from $\Anc(\Z)$ to $\Y$.
\item[2.] For any $\Z,\Y$, any superset of $Anc(\Z)$ (including $Anc(\Z)$) is a vertex cut from $\Anc(\Z)$ to $\Y$.
\item[3.] For any vertex cut from $\Anc(\Z)$ to $\Y$, $\Anc(\Z)\cap\Y\subseteq \SH$ (to choke single vertex paths).
\item[4.] Following point 3, for overlapped $\Z,\Y$ in testing $\TIN$ condition, any vertex cut $\SH$ must contains (at least) $\Z\cap\Y$ (the observed/testable intersection) as its subset.
\item[5.] Following point 4, if $\Y \subseteq \Z$, then there exists no non-zero $\w$ s.t. $\wTY \indep \Z$.
\item[6.] Note that though expressed as ``$\SH$ is a vertex cut from $\Anc(\Z)$ to $\Y$'', it never implicitly implies a causal ordering of $\Z\rightarrow\SH\rightarrow\Y$. E.g., in graph $D\leftarrow A\rightarrow C\leftarrow B$, consider $\TIN(\Z=\{A\}, \Y=\{B,C\})=1$ where the minimum vertex cut is $\SH=\{C\}$, not causally earlier than $\Y$; $\TIN(\Z=\{B,C\}, \Y=\{A,D\})=1$ with the minimum vertex cut $\SH = \{A\}$, but $\Z$ is neither causally earlier than $\Y$ nor than $\SH$.
\item[7.] Following point 6, roughly speaking, $\TIN$ tells size of the minimum vertex cut, but not exactly the causal ordering. For the existence of non-zero $\w$ s.t. $\wTY\indep \Z$, there can be some vertices in $\Y$ that are in or causally earlier than $\Z$, i.e. $\Anc(\Z)\cap\Y\neq\emptyset$ - as long as there are not ``too many'' (less than the cardinality of possible $\SH$).
\item[8.] Note that the minimum vertex cut may not be unique. E.g., 1) Consider example in point 6, both $\SH=\{C\}$ and $\SH=\{A\}$ are minimum vertex cuts in $\TIN(\Z=\{A\},\Y=\{B,C\})=1$. 2) Consider a chain structure with $\TIN(\Z=\{X_1\}, \Y=\{X_2,\cdots,X_n\})=1$, both $\SH=\{X_1\}$ and $\SH=\{X_2\}$ are minimum vertex cuts.
\end{itemize}
\end{remark}

Following point 6 of~\cref{remark_y_side_choke_set}, since the minimum vertex cut from $\Anc(\Z)$ to $\Y$ may not be unique in a $\TIN(\Z,\Y)$, to better use the graphical criteria, now we define the \textit{critical vertex cut}:

\begin{definition}[Critical vertex cut]\label{def:critical_gin_separation_set}
Denote $\mathcal{S}(\Z,\Y)$ the collection of all sets $\SH\subseteq V(G)$ s.t. $\SH$ is a minimum vertex cut from $\Anc(\Z)$ to $\Y$ (``minimum'' means that $\vert \SH \vert = \TIN(\Z,\Y)$). For a vertex cut $\SH \in \mathcal{S}(\Z,\Y)$, we say $\SH$ is \textit{critical} if and only if there exists no causal effect from all (other) minimum vertex cuts to $\Y$ without passing through $\SH$, i.e. $\Ancout{\SH}(\Y)\cap \Anc(\bigcup \Scal(\Z,\Y))=\emptyset$.
\end{definition}
\begin{remark}
Roughly speaking, when there are multiple minimum vertex cuts, i.e., these multiple sets can all cut from $\Anc(\Z)$ to $\Y$, then a critical one means a ``last'' one (furthest from $\Z$, deepest to $\Y$): it not only cuts $\Anc(\Z)$ to $\Y$, but also cuts all other vertex cuts to $\Y$. E.g., consider examples in point 8 of~\cref{remark_y_side_choke_set}, 1) $\{C\}$ is critical while $\{A\}$ is not, because $\{C\}$ can cut $\{A\}$ to $\{B,C\}$, but $\{A\}$ cannot cut $\{C\}$ to $\{B,C\}$. 2) $\{X_2\}$ is critical while $\{X_1\}$ is not.
\end{remark}

\begin{theorem}[Uniqueness of critical gin-separation set]\label{uniqueness_of_critical}
For two vertices sets $\Z$ and $\Y$ and their respective $\TIN(\Z,\Y)$, there exists one and only one corresponding critical vertex cut, denoted as $\SH^*_{\Z,\Y}$.
\end{theorem}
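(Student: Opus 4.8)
The plan is to reduce the whole question to the combinatorics of a fixed maximum family of vertex-disjoint paths. As recorded above, $\TIN(\Z,\Y)=k$ is the size of a minimum vertex cut from $\Anc(\Z)$ to $\Y$, and Menger's theorem (the vertex form of max-flow/min-cut used in the proof of \cref{thm:graph_criteria_omega_ZY}) supplies $k$ vertex-disjoint directed paths $P_1,\dots,P_k$ from $\Anc(\Z)$ to $\Y$. The first thing I would record is that every $\SH\in\Scal(\Z,\Y)$ meets each $P_i$ in exactly one vertex and consists of precisely those $k$ vertices: it must hit each $P_i$ (else $P_i$ is an $\Anc(\Z)$-to-$\Y$ path avoiding $\SH$), the paths are disjoint, and $\vert\SH\vert=k$. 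Writing $v_i(\SH)$ for the unique vertex of $\SH$ on $P_i$ and $p_i(\SH)$ for its position along $P_i$, I encode each min cut by the tuple $(p_1(\SH),\dots,p_k(\SH))$ and define the partial order $\SH\preceq\SH'\iff p_i(\SH)\le p_i(\SH')$ for all $i$, read as ``$\SH'$ is pathwise closer to $\Y$''.

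Uniqueness is then the clean part, and I would prove it by showing that any critical cut is a \emph{maximum} of $(\Scal(\Z,\Y),\preceq)$. Fix a critical cut $\SH_1$, an arbitrary min cut $\SH_2$, and an index $i$. If $v_i(\SH_2)\in\SH_1$ then $v_i(\SH_2)=v_i(\SH_1)$ since $\SH_1$ meets $P_i$ only once, so $p_i(\SH_2)=p_i(\SH_1)$. Otherwise $v:=v_i(\SH_2)\notin\SH_1$; since $v\in\SH_2\subseteq\Anc(\bigcup\Scal(\Z,\Y))$ and $\SH_1$ is critical, the defining identity $\Ancout{\SH_1}(\Y)\cap\Anc(\bigcup\Scal(\Z,\Y))=\emptyset$ gives $v\notin\Ancout{\SH_1}(\Y)$, i.e. $v$ cannot reach $\Y$ while avoiding $\SH_1$. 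But the tail of $P_i$ from $v$ is a directed path from $v$ into $\Y$, so it must meet $\SH_1$; as $\SH_1\cap P_i=\{v_i(\SH_1)\}$, this forces $v_i(\SH_1)$ to lie strictly after $v$ on $P_i$, whence $p_i(\SH_2)<p_i(\SH_1)$. In every case $p_i(\SH_2)\le p_i(\SH_1)$, so $\SH_2\preceq\SH_1$; thus a critical cut dominates every min cut. Two critical cuts therefore dominate each other, their position tuples coincide, and $\SH_1=\SH_2$.

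For existence I would exhibit the maximum of $(\Scal(\Z,\Y),\preceq)$ and verify it is critical. The pivotal lemma is a join-closure (uncrossing) statement: for min cuts $\SH,\SH'$, the pathwise-later selection $\SH\vee\SH'$, taking on each $P_i$ the later of $v_i(\SH),v_i(\SH')$, is again a vertex cut, hence (having size $k$) a min cut. Iterating the join over all of $\Scal(\Z,\Y)$ yields a unique pathwise-maximal min cut $\SH^*$, and I would then check, in the spirit of the second paragraph run in reverse, that no ancestor of any min cut reaches $\Y$ bypassing $\SH^*$, i.e. $\Ancout{\SH^*}(\Y)\cap\Anc(\bigcup\Scal(\Z,\Y))=\emptyset$, so that $\SH^*$ is critical.

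I expect the join-closure lemma to be the main obstacle, because naive rerouting along the fixed paths $P_i$ does not obviously avoid the other cut. The cleanest route is to pass to the standard max-flow/min-cut picture by node-splitting: replace each vertex $v$ by an arc $v^-\to v^+$ of unit capacity, each edge $u\to v$ by $u^+\to v^-$ of infinite capacity, and attach a super-source to $\Anc(\Z)$ and a super-sink to $\Y$, so that minimum vertex cuts correspond to minimum $s$-$t$ edge cuts. For edge cuts the submodularity of the cut function yields the classical distributive-lattice structure on minimum cuts, whose unique maximal (sink-side) element is exactly $\SH^*$, and translating the ``critical'' condition into ``sink-closest cut'' then closes the argument. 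The two places needing care are the bookkeeping for vertices in $\Anc(\Z)\cap\Y$ (which are forced into every cut, and appear as single-vertex paths) and the final verification that the lattice top coincides with the ancestor-based definition of criticality.
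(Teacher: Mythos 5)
The paper states this theorem without proof (it is not among the results dispatched in the ``Proofs of Other Results'' paragraph), so your argument stands on its own rather than against a reference derivation. Your uniqueness half is correct and complete: the observation that every minimum cut consists of exactly one vertex on each of the $k$ Menger paths, the position-tuple encoding, and the argument that a critical cut pathwise dominates every minimum cut (the tail of $P_i$ from $v:=v_i(\SH_2)$ must meet $\SH_1$, and the only place it can do so is $v_i(\SH_1)$, which is therefore strictly later) are all sound. The existence half is also correctly structured: the join-closure of minimum cuts is the right technical core, and the node-splitting reduction to minimum $s$--$t$ edge cuts with their submodular lattice structure is a standard, valid way to obtain the pathwise-maximal cut $\SH^*$.

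The genuine gap is the final verification step of existence, and it is not a matter of routine checking: the identity you propose to verify, $\Ancout{\SH^*}(\Y)\cap\Anc\bigl(\bigcup\Scal(\Z,\Y)\bigr)=\emptyset$, is false in general. Take vertices $\{a,u,w,y_1,y_2\}$ with edges $a\to u$, $u\to y_1$, $w\to u$, $w\to y_2$, and $\Z=\{a\}$, $\Y=\{y_1,y_2\}$: the minimum cuts are $\{a\},\{u\},\{y_1\}$, the lattice maximum is $\{y_1\}$, yet $w\in\Ancout{\{y_1\}}(\Y)$ (via $w\to y_2$) while also $w\in\Anc(\bigcup\Scal)$ (via $w\to u$); indeed \emph{no} minimum cut satisfies the identity here, so the theorem itself fails under that reading. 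The culprit is the paper's definition of ``critical'': its formula intersects with $\Anc(\bigcup\Scal(\Z,\Y))$, but its prose (``no causal effect from all (other) minimum vertex cuts'') and its own worked example ($D\leftarrow A\to C\leftarrow B$, where $\{C\}$ is declared critical even though $B\in\Ancout{\{C\}}(\Y)\cap\Anc(\bigcup\Scal)$) correspond to intersecting with $\bigcup\Scal(\Z,\Y)$ itself. Under that reading your plan closes cleanly: if some $u$ in a minimum cut $\SH'$ reached $\Y$ by a path $Q$ avoiding $\SH^*$, then $u=v_i(\SH')$ lies on $P_i$ strictly before $v_i(\SH^*)$ (weakly before by maximality of $\SH^*$, and $u\neq v_i(\SH^*)$ since $u\notin\SH^*$), the prefix of $P_i$ up to $u$ therefore avoids $\SH^*$ entirely, and concatenating it with $Q$ yields a path from $\Anc(\Z)$ to $\Y$ avoiding $\SH^*$, contradicting that $\SH^*$ is a cut. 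So restate criticality with $\bigcup\Scal(\Z,\Y)$ in place of its ancestor closure --- your uniqueness argument is unaffected, since it only uses $v_i(\SH_2)\in\SH_2$ --- and the proof is complete.
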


\subsection{Noise Components of Linear Transformation \texorpdfstring{$\wTY$}{TEXT}}\label{app:noise_components}

From above~\cref{app:critical_vertex_cut} we defined the critical vertex cut $\SH^*_{\Z,\Y}$ behind a $\TIN(\Z,\Y)$, with special property on it. Now we analyze the linear transformation $\omega^\intercal\Y$:

\begin{theorem}[Noise components of linear transformation $\omega^\intercal\Y$]\label{noise_components}
For two vertices sets $\Z$ and $\Y$ and their respective $\TIN(\Z,\Y)$, for generic choice of $\w$ (i.e., no coincidental noise cancelling by $\w$), the corresponding linear transformation $\omega^\intercal\Y$ contains and only contains exogenous noises introduced by vertices that has directed paths to $\Y$ without passing through the critical vertex cut $\SH^*_{\Z,\Y}$, i.e., $\mathcal{E}(\omega^\intercal\Y)=\{E_i|i\in \Ancout{\SH^*_{\Z,\Y}}(\Y)\}$, where $\mathcal{E}(\cdot)$ denotes the exogenous noises components set that a variable $\cdot$ is constituted of, and $E_i$ is the exogenous noise 
from vertex $i$.
\end{theorem}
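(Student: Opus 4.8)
The plan is to track the exogenous‑noise coefficients of $\wTY$ directly. Writing $\X=\B\E$, we have $\wTY=(\w^\intercal\B_{\Y,:})\E$, so the coefficient on $E_i$ is $\w^\intercal\B_{\Y,i}$ (the $i$-th column of $\B_{\Y,:}$), and $E_i\in\Ecal(\wTY)$ iff this inner product is nonzero. By~\cref{thm:characterization_Omega_ZY}, $\w$ ranges over $\OmegaZY=\nullspace(\B_{\Y,\Anc(\Z)}^\intercal)$, the left null space of $\B_{\Y,\Anc(\Z)}$. I partition the candidate noise indices $\Anc(\Y)$ into $\Ancout{\SstarZY}(\Y)$ and $D\coloneqq\Anc(\Y)\setminus\Ancout{\SstarZY}(\Y)$, the latter being exactly the ancestors whose every directed path into $\Y$ meets the critical cut $\SstarZY$. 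I then prove the two inclusions separately: the columns indexed by $D$ are killed by \emph{every} $\w\in\OmegaZY$, whereas for each $i\in\Ancout{\SstarZY}(\Y)$ the column is killed by none but a measure‑zero set of $\w$.

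For the ``$\subseteq$'' inclusion I would use a last‑exit factorization through $\SstarZY$. Because $\SstarZY$ is a vertex cut from $\Anc(\Z)$ to $\Y$, every path from a vertex in $\Anc(\Z)$ (or in $D$) to $\Y$ passes through $\SstarZY$; decomposing each such path at its last visit $s\in\SstarZY$ and using that directed walks in a DAG are automatically simple paths gives $\B_{\Y,\Anc(\Z)}=R\,\B_{\SstarZY,\Anc(\Z)}$ and $\B_{\Y,D}=R\,\B_{\SstarZY,D}$ with the common factor $R\in\Real^{|\Y|\times|\SstarZY|}$ collecting the post‑$\SstarZY$ effects. Since $\rank(\B_{\Y,\Anc(\Z)})=\TIN(\Z,\Y)=|\SstarZY|$ while $R$ has only $|\SstarZY|$ columns, $R$ has full column rank and $\operatorname{col}(\B_{\Y,\Anc(\Z)})=\operatorname{col}(R)$; consequently $\operatorname{col}(\B_{\Y,D})\subseteq\operatorname{col}(R)=\operatorname{col}(\B_{\Y,\Anc(\Z)})$. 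As $\w$ lies in the left null space of $\B_{\Y,\Anc(\Z)}$, it annihilates every $D$-column, so no $E_i$ with $i\in D$ survives in $\wTY$, for all $\w$.

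For the ``$\supseteq$'' inclusion I must show, for each $i\in\Ancout{\SstarZY}(\Y)$, that $\B_{\Y,i}\notin\operatorname{col}(\B_{\Y,\Anc(\Z)})$, equivalently $\rank(\B_{\Y,\Anc(\Z)\cup\{i\}})=|\SstarZY|+1$. This is where criticality enters and is the crux. Suppose instead the rank stays $|\SstarZY|$. Appending one column raises the rank by at most one, so under~\cref{assum:rank_faithfulness} the LGV--Menger correspondence used in the proof of~\cref{thm:graph_criteria_omega_ZY} yields a vertex cut $\SH'$ of size $|\SstarZY|$ from $\Anc(\Z)\cup\{i\}$ to $\Y$. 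Being a cut for the larger source set, $\SH'$ is also a minimum vertex cut from $\Anc(\Z)$ to $\Y$, hence $\SH'\in\Scal(\Z,\Y)$; and it cuts $i$ from $\Y$, so the path witnessing $i\in\Ancout{\SstarZY}(\Y)\subseteq\Anc(\Y)$ must meet $\SH'$, forcing $i\in\Anc(\SH')\subseteq\Anc(\bigcup\Scal(\Z,\Y))$. But criticality of $\SstarZY$ states precisely $\Ancout{\SstarZY}(\Y)\cap\Anc(\bigcup\Scal(\Z,\Y))=\emptyset$, contradicting $i\in\Ancout{\SstarZY}(\Y)$. Hence the rank increases and $\B_{\Y,i}\notin\operatorname{col}(\B_{\Y,\Anc(\Z)})$.

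It follows that the linear functional $\w\mapsto\w^\intercal\B_{\Y,i}$ is not identically zero on $\OmegaZY$, so its zero set is a proper subspace; taking the finite union over $i\in\Ancout{\SstarZY}(\Y)$ gives a measure‑zero exceptional set, and any $\w$ outside it (the ``generic'' $\w$, with no coincidental cancellation) has nonzero coefficient on exactly the $E_i$ with $i\in\Ancout{\SstarZY}(\Y)$. Combined with the ``$\subseteq$'' part, this yields $\Ecal(\wTY)=\{E_i\mid i\in\Ancout{\SstarZY}(\Y)\}$. The main obstacle is the rank‑increase step: it is the only place that uses that $\SstarZY$ is \emph{critical} rather than an arbitrary minimum cut, and it requires care in invoking rank faithfulness for the augmented source set $\Anc(\Z)\cup\{i\}$ (whose exceptional parameter set remains Lebesgue‑null).
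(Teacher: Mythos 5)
Your proof is correct. The paper does not actually supply a formal proof of this theorem --- it only offers the remark following it, which sketches the decomposition $\Y = L\,\SH + \mathbf{E}'_{\Y}$ through the cut; that sketch corresponds to your last-exit factorization $\B_{\Y,\cdot}=R\,\B_{\SstarZY,\cdot}$ and covers only the ``only contains'' half. Your argument adds two things the paper leaves implicit. First, you justify why \emph{every} $\w\in\OmegaZY$ annihilates the columns indexed by $D=\Anc(\Y)\setminus\Ancout{\SstarZY}(\Y)$, via $\operatorname{col}(\B_{\Y,D})\subseteq\operatorname{col}(R)=\operatorname{col}(\B_{\Y,\Anc(\Z)})$, which requires the full column rank of $R$ obtained from $\rank(\B_{\Y,\Anc(\Z)})=\lvert\SstarZY\rvert$. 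Second, and more importantly, you prove the ``contains'' half --- that a generic $\w$ retains every $E_i$ with $i\in\Ancout{\SstarZY}(\Y)$ --- by showing $\B_{\Y,i}\notin\operatorname{col}(\B_{\Y,\Anc(\Z)})$ and deriving a contradiction with the criticality condition $\Ancout{\SstarZY}(\Y)\cap\Anc(\bigcup\Scal(\Z,\Y))=\emptyset$; this is precisely where criticality (rather than mere minimality) of $\SstarZY$ enters, and the paper's remark is silent on it. The caveat you flag is real but benign: the rank-equals-min-cut correspondence for the augmented column set $\Anc(\Z)\cup\{i\}$ is not literally covered by \cref{assum:rank_faithfulness}, which quantifies only over column sets of the form $\Anc(\Z')$, but the Lindstr\"om--Gessel--Viennot/Menger argument applies verbatim to arbitrary column sets and the exceptional parameter set remains Lebesgue-null, so the required genericity is of the same character as the paper's standing assumption.
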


\begin{remark}
A vertex cut $\SH$ from $\Anc(\Z)$ to $\Y$ yields that all noise components that $\Z$ carries (i.e., $\Anc(\Z)$) cannot flow into (causal affects / contribute to) $\Y$ without passing through $\SH$, then $\Y$ can be written as $\Y=L\SH+\mathbf{E'_\Y}$, where $L$ denotes a linear transformation, and $\mathbf{E'_\Y}$ denotes noise components that can contribute to $\Y$ without passing through $\SH$ (i.e., $\Ancout{\SH}(\Y)$) - so $\mathbf{E'_\Y}\indep \Z$, but not necessarily $\mathbf{E'_\Y} \indep \SH$.
\end{remark}

Also, we define $\OmegaZY$ as $\{\w \vert \ \wTY \indep\Z\}$, while actually for such $\w$, $\omega^\intercal\Y$ is independent to more variables:

\begin{theorem}[Full version of $\omega^\intercal\Y$ independence]\label{full_independence}
For two vertices sets $\Z$ and $\Y$ and their respective critical vertex cut $\SH^*_{\Z,\Y}$, for any variable $X_i\in\X$ (i.e., respective vertex $i\in V(G)$), $\omega^\intercal\Y \indep X_i$ if and only if there exists no causal effect from $\Ancout{\SH^*_{\Z,\Y}}(\Y)$ to $\{i\}$, i.e., $\Ancout{\SH^*_{\Z,\Y}}(\Y) \cap \Anc(\{i\}) = \emptyset$.
\end{theorem}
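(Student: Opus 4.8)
The plan is to reduce the statement to the \darmois{} by first identifying the non-Gaussian exogenous noise terms that constitute each side, and then using the fact that two linear mixtures of independent non-Gaussian noises are independent precisely when they share no common term. All the structural work is already packaged in~\cref{noise_components}, so the argument is essentially a bookkeeping of noise components followed by a single application of \darmois{}.

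First I would record the noise components of each side. For $X_i$, writing $X_i=\B_{i,:}\E$ and invoking the faithfulness stipulation ``if $i\rightsquigarrow j$ then $\B_{j,i}\neq 0$'' (see~\cref{thm:characterization_Omega_ZY}), we get $\B_{i,j}\neq 0$ iff $j\in\Anc(\{i\})$, so $\Ecal(X_i)=\{E_j \mid j\in\Anc(\{i\})\}$. For $\wTY$ I would invoke~\cref{noise_components} directly: for a generic $\w\in\OmegaZY$ (no coincidental cancellation), $\Ecal(\wTY)=\{E_j \mid j\in\Ancout{\SstarZY}(\Y)\}$.

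Then I would apply the \darmois{}: since the components of $\E$ are mutually independent and non-Gaussian (LiNGAM), $\wTY\indep X_i$ if and only if $\wTY$ and $X_i$ share no common non-Gaussian noise term, i.e.\ $\Ecal(\wTY)\cap\Ecal(X_i)=\emptyset$. Substituting the two component sets, this is exactly $\Ancout{\SstarZY}(\Y)\cap\Anc(\{i\})=\emptyset$, which by~\cref{def:exist_causal_effect} is precisely the statement that there is no causal effect from $\Ancout{\SstarZY}(\Y)$ to $\{i\}$. This settles both directions at once.

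The main obstacle is handling genericity cleanly rather than the chain of equivalences itself. The inclusion $\Ecal(\wTY)\subseteq\{E_j\mid j\in\Ancout{\SstarZY}(\Y)\}$ holds for \emph{every} $\w\in\OmegaZY$ (since membership in $\OmegaZY$ forces the whole contribution through the critical cut to be cancelled, leaving only the residual noises routed around $\SstarZY$), so the ``if'' direction requires no genericity. The ``only if'' direction, however, needs that whenever $k\in\Ancout{\SstarZY}(\Y)\cap\Anc(\{i\})$, the shared noise $E_k$ genuinely survives in $\wTY$ with nonzero coefficient and likewise in $X_i$; the latter is guaranteed by the no-parameter-coupling/faithfulness hypothesis of~\cref{assum:rank_faithfulness}, and the former is exactly the genericity of $\w$ that~\cref{noise_components} assumes. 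I would therefore phrase the equivalence for generic $\w$, consistent with the earlier theorem, and flag that the non-generic $\w$ on the measure-zero subvariety where extra cancellations occur is excluded.
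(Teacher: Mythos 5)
Your proposal is correct and matches the paper's own justification, which is exactly the one-line argument in the remark following the theorem: $\wTY$ is a linear mixture of the noises indexed by $\Ancout{\SH^*_{\Z,\Y}}(\Y)$ (by the preceding noise-components theorem), $X_i$ is a mixture of the noises indexed by $\Anc(\{i\})$, and the $\darmois$ reduces independence to disjointness of these two index sets. Your explicit handling of genericity of $\w$ for the ``only if'' direction is a slightly more careful bookkeeping of the same hypothesis the paper already builds into its noise-components theorem.
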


\begin{remark}
With~\cref{def:exist_causal_effect} and~\cref{full_independence}, we can immediately get the following:
\begin{itemize}[noitemsep,topsep=-3pt]
\item[1.] $\omega^\intercal\Y\indep\Z$ - it can be derived from~\cref{full_independence},~\cref{def:ancestors} and~\cref{def:critical_gin_separation_set}.
\item[2.] $\omega^\intercal\Y\indep \Anc(\Z)$ - it can be derived from~\cref{full_independence} and~\cref{def:critical_gin_separation_set}.
\item[3.] \cref{full_independence} is straightforward by seeing $\omega^\intercal\Y$ as a linear transformation of its noise sources $\{E_i|i\in \Ancout{\SH^*_{\Z,\Y}}(\Y)\}$. Then any variable is independent to $\omega^\intercal\Y$ if and only if it does not carry noise from these sources (i.e., vertex has no ancestors in $\Ancout{\SH^*_{\Z,\Y}}(\Y)$), by the $\darmois$. With~\cref{full_independence}, after testing on $\TIN(\Z,\Y)$, one can do more independence test over other variables (as long as they are observed/testable), and may get more information about the whole graph structure and the location of critical vertex cut.
\end{itemize}
\end{remark}

Further, we notice that in the independent linear transformation subspace $\OmegaZY$, some indices of $\w$ may be degenerated (i.e., fixed to zero). Consider following examples for an intuition: 1) On a chain structure~\cref{fig:chain} with $\TIN(\{X_2\}, \{X_1,X_3,X_4,\cdots,X_n\})=2$, $\w$ index on $X_1$ must be zero (not include $X_1$ in linear transformation) to make $\wTY\indep\Z$, while in a fully connected DAG~\cref{fig:fully_connected} with also $\TIN(\{X_2\}, \{X_1,X_3,X_4,\cdots,X_n\})=2$, $\w$ is not degenerated on any indices. 2) On a chain structure~\cref{fig:chain} or a chain structure with triangular head~\cref{fig:triangle_head_chain}, $\TIN(\{X_1,X_3\}, \{X_2,X_4,X_5,\cdots,X_n\})=2$ holds, while $\w$ index on $X_2$ must be zero. 3) in~\cref{fig:example_d}, $\TIN(\{X_1\}, \{X_2,X_5\})=1$, while actually $\w$ is degenerated on $X_5$ index, which means that the linear transformation actually does not include $X_5$ and is just trivially $X_2$ independent of $X_1$ (here $\SH^*_{\Z,\Y}$ is just $X_5$).

Now, we would like to first give mathematical characterization for such $\omega$ indices degeneration:

\begin{theorem}\label{mathdegeneration}
Since $\OmegaZY = \operatorname{null}(\B_{\Y,\Anc(\Z)}^\T)$, $\OmegaZY$ degenerates on an index $y\in\Y$ if and only if: remove the corresponding $y$-th column in $\B_{\Y,\Anc(\Z)}^\T$ to get submatrix $\B_{\Y\backslash\{y\},\Anc(\Z)}^\T$, the rank of submatrix is one less than the rank of full matrix $\B_{\Y,\Anc(\Z)}^\T$.
\end{theorem}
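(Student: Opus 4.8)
The plan is to reduce the statement to an elementary fact about the column space of a single matrix and then settle the two directions by explicitly exhibiting nullspace vectors. First I would fix notation: write $M \coloneqq \B_{\Y,\Anc(\Z)}^{\T}$, so that by \cref{thm:characterization_Omega_ZY} we have $\OmegaZY = \nullspace(M)$, and index the columns of $M$ by $\Y$, calling them $\{m_{y'}\}_{y'\in\Y}$ (so $m_{y'}$ is the $y'$-th column of $\B^{\T}$, i.e.\ the $y'$-th row of $\B_{\Y,\Anc(\Z)}$). The one piece of bookkeeping to get right is that the $y$-th coordinate of a vector $\w\in\Real^{|\Y|}$ pairs with the $y$-th column $m_y$, since $M\w = \sum_{y'\in\Y}\w_{y'}m_{y'}$. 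Under this identification ``$\OmegaZY$ degenerates on the index $y$'' means exactly that $\w_y = 0$ for every $\w\in\nullspace(M)$, and the matrix $M' = \B_{\Y\setminus\{y\},\Anc(\Z)}^{\T}$ is simply $M$ with the column $m_y$ deleted. The statement to prove is therefore the purely linear-algebraic chain
\[
\bigl(\forall\,\w\in\nullspace(M):\ \w_y=0\bigr)\ \Longleftrightarrow\ m_y\notin\operatorname{span}\{m_{y'}:y'\in\Y\setminus\{y\}\}\ \Longleftrightarrow\ \rank(M')=\rank(M)-1 .
\]

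For the first equivalence I would argue by contraposition in both directions. If degeneration fails, pick $\w\in\nullspace(M)$ with $\w_y\neq 0$; then $m_y = -\w_y^{-1}\sum_{y'\neq y}\w_{y'}m_{y'}$ exhibits $m_y$ as a combination of the remaining columns. Conversely, if $m_y = \sum_{y'\neq y}\lambda_{y'}m_{y'}$, then the vector $\w$ with $\w_y = 1$ and $\w_{y'} = -\lambda_{y'}$ for $y'\neq y$ lies in $\nullspace(M)$ and has $\w_y\neq 0$, so degeneration fails. Hence degeneration on $y$ is equivalent to $m_y$ being linearly independent of the other columns. For the second equivalence I would use that deleting one column can drop the rank by at most one, so $\rank(M')\in\{\rank(M)-1,\rank(M)\}$; the case $\rank(M')=\rank(M)$ holds iff $m_y$ already lies in $\operatorname{span}\{m_{y'}:y'\neq y\}$, and its complement is exactly $\rank(M')=\rank(M)-1$, matching the condition ``$m_y$ not in the span of the others'' obtained above.

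There is essentially no deep obstacle here: once the transpose bookkeeping is settled, the result is the standard characterization of when a coordinate is forced to vanish on a nullspace, so the ``hard part'' is only cosmetic --- verifying that the index $y$ of $\w$ and the column of $M$ (equivalently, the row $y$ of $\B_{\Y,\Anc(\Z)}$) correspond, and checking the two degenerate extremes ($\nullspace(M)=\{0\}$, where every index degenerates and every $M'$ loses a rank, and $M=0$, where no index degenerates and the rank condition is vacuously impossible) to confirm the equivalence has no boundary exceptions. Note that the algebraic statement needs no faithfulness assumption; \cref{assum:rank_faithfulness} enters only upstream, when these ranks are translated into the graphical vertex-cut language of \cref{thm:graph_criteria_omega_ZY}.
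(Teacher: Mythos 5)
Your proof is correct, and it follows exactly the route the paper intends: the statement is given in the appendix without an explicit proof beyond the prefatory ``Since $\OmegaZY = \operatorname{null}(\B_{\Y,\Anc(\Z)}^\T)$,'' and your argument supplies precisely the missing linear-algebra step, identifying degeneration on $y$ with $m_y$ lying outside the span of the remaining columns and hence with the rank dropping by one upon deletion. The bookkeeping (columns of $\B_{\Y,\Anc(\Z)}^\T$ indexed by $\Y$, pairing with the coordinates of $\w$) and the boundary cases are handled correctly, and you are right that \cref{assum:rank_faithfulness} plays no role at this purely algebraic level.
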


Then we give the equivalent graphical criteria for such $\w$ indices degeneration:

We already know that the vertex cut $\SH^*_{\Z,\Y}$ cuts $\Anc(\Z)$ to $\Y$. Moreover, each part of $\SH^*_{\Z,\Y}$ has its ``own indispensable work'' in cutting, so we first define:
\begin{definition}[Local cut scope]\label{def:local_choke_scope}
For each vertex $s\in\SH^*_{\Z,\Y}$, define its local choke scope as $\LC(s)\coloneqq $ $\{y\in\Y | \text{ there exists causal effect from } \Anc(\Z) \text{ to } \{y\} \text{ without passing through } \SH^*_{\Z,\Y} \backslash \{s\} \}$. Furthermore, for each subset $S\subseteq \SH^*_{\Z,\Y}$, define $\LC(S)\coloneqq \{y\in\Y|$ there exists causal effect from $\Anc(\Z) \text{ to } \{y\} \text{ without passing through } \SH^*_{\Z,\Y} \backslash S \}$.
\end{definition}
\begin{remark}
With~\cref{def:local_choke_scope} we have the following:
\begin{itemize}[noitemsep,topsep=-3pt]
\item[1.] $\SH^*_{\Z,\Y}=\emptyset$ if and only if $\Z,\Y$ are marginally independent, i.e., $\B_{\Y,\Anc(\Z)}$ are all zero (no shared noise components).
\item[2.] $\LC(s)$ means the part of $\Y$ that would not be cut/choked, had there been no $s$. In other word, the part of $\Y$ that $s$ has its own indispensable work.
\item[3.] $\LC(S) = \cup_{s\in S}{LC(s)}$. $\LC(S_1\cup S_2) =\LC(S_1) \cup \LC(S_2)$.
\item[4.] $S\subseteq \Anc(\LC(S))$.
\item[5.] For any subset $S$, $|\LC(S)|\geq |S|$ (so $|\LC(s)|\geq 1$ for any vertex $s$).
\item[6.] For any two different vertices $s_1, s_2$, it does not necessarily yield that $\LC(s_1)\cap \LC(s_2) = \emptyset$ - they may work together to cut/choke a part and either is indispensable for this part.
\item[7.] $\LC(\SH^*_{\Z,\Y})$ may not be the whole $\Y$, but a proper subset. The rest $\Y\backslash \LC(\SH^*_{\Z,\Y})$ is exactly part of $\Y$ that is marginally independent to $\Z$ (i.e., no directed paths from $\Anc(\Z)$ to that part).
\end{itemize}
\end{remark}

\begin{theorem}[Graphical criteria for degeneration]\label{graph_criteria_degeneration}
$\OmegaZY$ degenerates on on the indexes subset $Y\subset\Y$ if and only if: there exists a subset $S\subset \SH^*_{\Z,\Y}$ such that its local choke scope $\LC(S)=Y$, and $|Y|=|S|=|\LC(S)|$.
\end{theorem}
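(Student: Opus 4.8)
The plan is to translate the algebraic notion of degeneration into a statement about minimum vertex cuts, dispatch the easy implication by exhibiting an explicit cut, and prove the hard implication via a ``one sink at a time'' lemma that crucially exploits the criticality of $\SstarZY$.

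First I would reduce the claim to a rank statement. Write $M:=\B_{\Y,\Anc(\Z)}$ and $k:=\rank(M)$, so by \cref{thm:graph_criteria_omega_ZY} we have $k=|\SstarZY|$ and $\OmegaZY=\nullspace(M^\T)$. For a subspace, $\w|_Y=0$ holds for every $\w\in\OmegaZY$ exactly when the rows of $M$ indexed by $Y$ are linearly independent from one another and from the remaining rows; this is the simultaneous form of \cref{mathdegeneration}, and it holds iff deleting those rows drops the rank by the full amount $|Y|$. Applying \cref{thm:graph_criteria_omega_ZY} again to the sink set $\Y\setminus Y$, $\rank(\B_{\Y\setminus Y,\Anc(\Z)})$ equals the size of a minimum vertex cut from $\Anc(\Z)$ to $\Y\setminus Y$. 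Hence $\OmegaZY$ degenerates on $Y$ iff the minimum vertex cut from $\Anc(\Z)$ to $\Y\setminus Y$ has size $k-|Y|$. Everything below is then carried out at the level of cuts, with generic rank identified with max-flow by \cref{assum:rank_faithfulness}, Menger's theorem, and \cref{thm:lindstrom_gessel_viennot}.

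For the ``if'' direction, suppose $S\subseteq\SstarZY$ with $\LC(S)=Y$ and $|S|=|Y|$. I would show $\SstarZY\setminus S$ is a vertex cut from $\Anc(\Z)$ to $\Y\setminus Y$: any $y'\in\Y\setminus Y=\Y\setminus\LC(S)$ is, by the definition of the local choke scope, unreachable from $\Anc(\Z)$ without passing through $\SstarZY\setminus S$. This cut has size $k-|S|=k-|Y|$, so the minimum cut to $\Y\setminus Y$ is at most $k-|Y|$; the reverse inequality $\ge k-|Y|$ follows from the elementary monotonicity $\operatorname{mincut}(\Anc(\Z)\to T')\ge \operatorname{mincut}(\Anc(\Z)\to T)-|T\setminus T'|$ (adjoining the deleted sinks to a minimum cut of $T'$ recovers a cut of $T$). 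The reduction then yields degeneration on $Y$.

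The ``only if'' direction is the crux and is where criticality enters. I would prove the pointwise lemma: if a single coordinate $y$ is degenerate, then $y\in\SstarZY$ and $\LC(\{y\})=\{y\}$. Degeneracy of $y$ means $\operatorname{mincut}(\Anc(\Z)\to\Y\setminus\{y\})=k-1$; fixing such a cut $C$, the set $C\cup\{y\}$ is a minimum cut of $(\Z,\Y)$, so $y\in\bigcup\Scal(\Z,\Y)$. If $y\notin\SstarZY$ then $y\in\Ancout{\SstarZY}(\Y)\cap\Anc(\bigcup\Scal(\Z,\Y))$, contradicting \cref{def:critical_gin_separation_set}; hence $y\in\SstarZY$. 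For the second claim, suppose some $y'\neq y$ lies in $\LC(\{y\})$; then necessarily $y'\notin\SstarZY$ and there is a path $y\stackrqarrow{\cancel{[\SstarZY]}} y'$. Either some path $\Anc(\Z)\rightsquigarrow y$ avoids $C$, in which case concatenating it with $y\stackrqarrow{\cancel{[\SstarZY]}} y'$ and using that $C$ separates $\Anc(\Z)$ from $y'$ forces a vertex $c\in C$ on the segment $y\rightsquigarrow y'$ — but then $c\in\Ancout{\SstarZY}(\Y)\cap\Anc(\bigcup\Scal(\Z,\Y))$, again contradicting criticality; or else $C$ already separates $\Anc(\Z)$ from $y$, making $C$ a cut to all of $\Y$ of size $k-1<k$, contradicting minimality. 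Thus $\LC(\{y\})=\{y\}$. Finally, for a degenerate set $Y$ every $y\in Y$ satisfies the lemma, so $Y\subseteq\SstarZY$ and, taking $S:=Y$, the union property $\LC(S)=\bigcup_{y\in Y}\LC(\{y\})=Y$ gives $\LC(S)=Y$ with $|S|=|Y|=|\LC(S)|$. The main obstacle is precisely this pointwise lemma, and within it the careful bookkeeping of the criticality condition $\Ancout{\SstarZY}(\Y)\cap\Anc(\bigcup\Scal(\Z,\Y))=\emptyset$: one must argue that a ``second reachable sink'' $y'$ would expose either a redundant minimum-cut vertex lying strictly below $\SstarZY$ or an over-small cut, both impossible; the only routine ingredients are the subset form of \cref{mathdegeneration} and the rank-to-cut identifications, which \cref{assum:rank_faithfulness}, \cref{thm:lindstrom_gessel_viennot}, and Menger's theorem supply.
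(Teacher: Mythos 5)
The paper never actually proves \cref{graph_criteria_degeneration}: it is stated in the appendix with surrounding remarks, and the ``Proofs of Other Results'' paragraph does not mention it, so there is no official argument to compare against. Your proposal supplies a proof that I believe is essentially correct. The reduction of set-degeneration to $\rank(\B_{\Y\setminus Y,\Anc(\Z)})=k-|Y|$ is right (degeneration on $Y$ is exactly the statement that the left-null vectors of $\B_{\Y,\Anc(\Z)}$ all vanish on $Y$, which happens iff deleting those rows drops the rank by the full $|Y|$), the ``if'' direction via the explicit cut $\SstarZY\setminus S$ together with the monotonicity bound is clean, and the pointwise lemma correctly isolates where criticality is used: a degenerate $y$ must lie in $\bigcup\Scal(\Z,\Y)$ (since $C\cup\{y\}$ is a minimum cut), hence in $\SstarZY$, and a second reachable sink $y'$ would produce a vertex of $C$ lying in $\Ancout{\SstarZY}(\Y)\cap\bigcup\Scal(\Z,\Y)$, contradicting \cref{def:critical_gin_separation_set}.

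Two small points deserve tightening. First, your assembly step invokes the union property $\LC(S)=\bigcup_{s\in S}\LC(s)$, which the paper asserts as a remark but does not prove; the inclusion $\LC(S)\subseteq\bigcup_{s}\LC(s)$ is not automatic, because a path witnessing $y''\in\LC(S)$ may pass through several elements of $S$. It does hold, but only via a Menger-type argument: by minimality of $\SstarZY$ there exist $k$ vertex-disjoint paths from $\Anc(\Z)$ to $\Y$, each meeting $\SstarZY$ in exactly one vertex, so for the last $\SstarZY$-vertex $s^*$ of the witnessing path one can splice in a prefix from $\Anc(\Z)$ to $s^*$ avoiding $\SstarZY\setminus\{s^*\}$. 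Alternatively, you can bypass the union property entirely by rerunning your pointwise case analysis directly on a hypothetical $y''\in\LC(Y)\setminus Y$, taking $y$ to be the last $Y$-vertex on the witnessing path. Second, to conclude $\LC(\{y\})=\{y\}$ (rather than $\LC(\{y\})\subseteq\{y\}$) and hence $Y\subseteq\LC(S)$, you need $y\in\LC(\{y\})$, i.e., some path from $\Anc(\Z)$ reaches $y$ avoiding $\SstarZY\setminus\{y\}$; this again follows from minimality of $\SstarZY$ (otherwise $\SstarZY\setminus\{y\}$ would already be a cut) but should be said. Neither issue breaks the argument.
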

\begin{remark}
We already have~\cref{mathdegeneration} for math condition. And for graphical criteria~\cref{graph_criteria_degeneration}:
\begin{itemize}[noitemsep,topsep=-3pt]
\item[1.] A rough interpretation: in general we would expect a smaller $S$ to choke a larger $Y$. However, if for an $S$, its local choke scope $Y$ is of the same size as $S$, then removing $S$ will only affect the same size $Y$ (a feeling that this $S$ is ``wasted''). Then this part $Y$ will be degenerated.
\item[2.] From $Y$ side, it means that this $Y$ requires a same size of separation set $S$ to choke (a feeling that $Y$ is ``too expensive'').
\item[3.] $\TIN(\Z,\Y\backslash Y) = \TIN(\Z,\Y) - |Y|$, with critical vertex cut being $\SH^*_{\Z,\Y}\backslash S$, and with no degeneration.
\item[4.] Note that degeneration does not yield independence, i.e., if $\TIN(\Z,\Y)$ with $\OmegaZY$ degenerated on $Y_k$, it does not necessarily yield that $\omega ^\intercal\Y\indep Y_k$. Because $\omega$ is applied to variables, not noise components. For example, the v-structure $\{A,B\}\rightarrow C$, $\TIN(A,BC)=1$ with $C$ degenerated. But $\omega ^\intercal BC$, which is simply $B$, is not independent to variable $C$.
\item[5.] The inverse direction of 4. is also not sufficient: if for $\TIN(\Z,\Y)$ and some $Y_k\in \Y$, there is also $\omega ^\intercal\Y\indep Y_k$ ($\omega ^\intercal\Y$ independent to not only to $\Z$ but also some part in $\Y$), it still does not necessarily yield that $Y_k$ is degenerated. E.g., in chain structure~\cref{fig:chain}, $\TIN(\{\Xt_1\}, \{ \Xt_2,\cdots,\Xt_n \})=1$ with $\omega ^\intercal\Y$ also independent to $\Xt_2$, but $\Xt_2$ index is not degenerated in $\OmegaZY$.
\item[6.] Note that while as a special case of point 5, in measurement error case, independence in $\Y$ yields degeneration. Because each variable $Y_k\in \Y$ is associated with measurement noise $E_k$ which is only in $Y_k$, not in any other variables (so cannot be cancelled). Then to make $\omega ^\intercal\Y\indep Y_k$, at least $E_k$ must be removed, i.e., $Y_k$ degenerated. E.g., in chain structure~\cref{fig:chain}, $\TIN(\{X_1\}, \{ X_2,\cdots,X_n \})=1$ with no degeneration. So $\omega ^\intercal\Y$ is only independent of $X_1$, not any other in observed variables $\Y = \{X_2,\cdots,X_n\}$ - specifically, $\omega ^\intercal\Y\indep \{\Tilde{X}_1, \Tilde{X}_2, X_1\}$.
\end{itemize}
\end{remark}

\begin{table}[h]
  \caption{Full version of~\cref{table:egin_examples} with more properties on $\TIN$. Examples of $\EGIN$ on different $(\Z,\Y)$ pairs over different graph structures in~\cref{fig:graph_examples}.}
  \label{table:egin_examples_full}
\vspace{-0.5em}
\scriptsize
  \setlength{\tabcolsep}{3.0pt}
  \renewcommand{\arraystretch}{0.7}
\begin{center}
\resizebox{\textwidth}{!}{
\begin{tabular}{ l *{16}{c} }
\toprule
$(\Z,\Y)$
&
\multicolumn{4}{c}{$(\{X_1, X_2\}, \{X_3, X_4, X_5\})$} &
\multicolumn{4}{c}{$(\{X_1, X_2\}, \{X_4, X_5\})$} &
\multicolumn{4}{c}{$(\{X_3\}, \{X_1, X_2, X_4, X_5\})$} &
\multicolumn{4}{c}{$(\{X_1, X_4\}, \{X_3, X_4, X_5\})$} \\
\cmidrule(lr){2-5} \cmidrule(lr){6-9} \cmidrule(lr){10-13} \cmidrule(lr){14-17}
Graph in~\cref{fig:graph_examples}&
(a) & (b) & (c) & (d) &
(a) & (b) & (c) & (d) &
(a) & (b) & (c) & (d) &
(a) & (b) & (c) & (d) \\
\midrule
$\EGIN(\Z,\Y)$ & $1$ & $1$ & $2$ & $1$ & $1$ & $1$ & $2$ & $1$ & $3$ & $2$ & $3$ & $1$ & $2$ & $2$ & $3$ & $2$\\
$\GIN(\Z,\Y)$ & $\true$ & $\true$ & $\true$ & $\true$ & $\true$ & $\true$ & $\false$ & $\true$ & $\false$ & $\false$ & $\false$ & $\true$ & $\true$ & $\true$ & $\false$ & $\true$\\
$\dim(\OmegaZY)$ & $2$ & $2$ & $1$ & $2$ & $1$ & $1$ & $0$ & $1$ & $1$ & $2$ & $1$ & $3$ & $1$ & $1$ & $0$ & $1$\\
$\operatorname{rk}(\Sigma_{\Z\Y})$ & $1$ & $1$ & $2$ & $1$ & $1$ & $1$ & $2$ & $1$ & $1$ & $1$ & $1$ & $1$ & $2$ & $2$ & $2$ & $2$\\
$\Anc(\Z)$ & $X_{ 1,2 }$ & $X_{ 1,2,3 }$ & $X_{ 1,2 }$ & $X_{ 1,2 }$ & $X_{ 1,2 }$ & $X_{ 1,2,3 }$ & $X_{ 1,2 }$ & $X_{ 1,2 }$ & $X_{ 1,2,3 }$ & $X_{ 1,3 }$ & $X_{ 1,2,3 }$ & $X_{ 3 }$ & $X_{ 1,2,3,4 }$ & $X_{ 1,3,4 }$ & $X_{ 1,2,3,4 }$ & $X_{ 1,2,3,4 }$\\
$\SH^*_{\Z,\Y}$ & $X_{ 3 }$ & $X_{ 3 }$ & $X_{ 1,2 }$ & $X_{ 4 }$ & $X_{ 4 }$ & $X_{ 4 }$ & $X_{ 4,5 }$ & $X_{ 4 }$ & $X_{ 1,2,4 }$ & $X_{ 1,3 }$ & $X_{ 1,2,3 }$ & $X_{ 3 }$ & $X_{ 3,4 }$ & $X_{ 3,4 }$ & $X_{ 3,4,5 }$ & $X_{ 3,4 }$\\
$\operatorname{A}_{\operatorname{o}(\SH^*)}(\Y)$ & $X_{ 4,5 }$ & $X_{ 4,5 }$ & $X_{ 3,4,5 }$ & $X_{ 3,5 }$ & $X_{ 5 }$ & $X_{ 5 }$ & $\emptyset$ & $X_{ 3,5 }$ & $X_{ 5 }$ & $X_{ 2,4,5 }$ & $X_{ 4,5 }$ & $X_{ 1,2,4,5 }$ & $X_{ 5 }$ & $X_{ 5 }$ & $\emptyset$ & $X_{ 5 }$\\
$\mathcal{E}(\wTY)$ & $E_{ 4,5 }$ & $E_{ 4,5 }$ & $E_{ 3,4,5 }$ & $E_{ 3,5 }$ & $E_{ 5 }$ & $E_{ 5 }$ & $\emptyset$ & $E_{ 3,5 }$ & $E_{ 5 }$ & $E_{ 2,4,5 }$ & $E_{ 4,5 }$ & $E_{ 1,2,4,5 }$ & $E_{ 5 }$ & $E_{ 5 }$ & $\emptyset$ & $E_{ 5 }$\\
$\wTY\indep$ to & $X_{ 1,2,3 }$ & $X_{ 1,2,3 }$ & $X_{ 1,2 }$ & $X_{ 1,2 }$ & $X_{ 1,2,3,4 }$ & $X_{ 1,2,3,4 }$ & $\text{const}$ & $X_{ 1,2 }$ & $X_{ 1,2,3,4 }$ & $X_{ 1,3 }$ & $X_{ 1,2,3 }$ & $X_{ 3 }$ & $X_{ 1,2,3,4 }$ & $X_{ 1,2,3,4 }$ & $\text{const}$ & $X_{ 1,2,3,4 }$\\
$\w$ degenerate & $\backslash$ & $\backslash$ & $\backslash$ & $\backslash$ & $\backslash$ & $\backslash$ & $\w_{ 4,5 }$ & $\backslash$ & $\w_{ 1,2 }$ & $\backslash$ & $\backslash$ & $\backslash$ & $\w_{ 3 }$ & $\w_{ 3 }$ & $\w_{ 3,4,5 }$ & $\backslash$\\
\bottomrule
\end{tabular}}
\end{center}
\end{table}

\cref{table:egin_examples_full} is a full version of~\cref{table:egin_examples}, where we could use examples to better understand the above properties about $\TIN$ condition: e.g., different cases for $\GIN(\Z,\Y)$ to be violated (see rank of $\B_{\Y,\Anc(\Z)}$ and rank of $\cov(\Z,\Y)$); noise components of $\wTY$ is exactly corresponding to $\Ancout{\SH^*_{\Z,\Y}}(\Y)$; the graphical criteria for some $\w$ indices degeneration, etc.

\subsection{Subsets Implications of the \texorpdfstring{$\TIN$}{TEXT} Condition}\label{app:subset_implications}
In~\cref{sec:estimation} we give~\cref{thm:equiv_gin_Y_subsets} for estimation of $\OmegaZY$, by tackling down $\Y$ to subsets:
\TINOVERYSUBSETS*

\begin{remark}
About how to use this ``big to small'' property, here are some notes:
\begin{itemize}[noitemsep,topsep=-3pt]
\item[1.] Condition 1) can also be ``$|\Y'|\geq k + 1$'' (a weaker/stronger version).
\item[2.] This can be shown by that if a set $\SH$ is a vertex cut from $\Anc(\Z)$ to $\Y$, then $\SH$ is also a vertex cut from any subset of $\Anc(\Z)$ to any subset of $\Y$.
\item[3.] It does not yield that all these $\TIN$ conditions on subsets $\Y'$ has a same rank $k$, and even with a same rank, not necessarily a same critical vertex cut $\SH^*_{\Z,\Y}$. E.g., consider a 3-v-structure $\{A,B,C\}\rightarrow D$, $\TIN(A,BCD)=1$ and $\SH^*_{\Z,\Y}=D$, while $\TIN(A,BC)=0$ ($\Y' = BC$) and $\SH^*_{\Z,\Y}=\emptyset$. $\TIN(A,ABD)=1$ with $\SH^*_{\Z,\Y}=A$, while $\TIN(A,BD)=1$ with $\SH^*_{\Z,\Y}=D$ (though $A$ is still a minimum vertex cut, it is not critical).
\item[4.] Any transformation vector $\omega\in\Omega_{\Z;\Y'}$ is also in $\OmegaZY$, with the other $\Y\backslash\Y'$ indices set to zero.
\item[5.] It does not yield that for $\Y'$ with $|\Y'|\leq k$ there exists no non-zero vector $\w$ to make $\w^\intercal\Y' \indep \Z$ (so in condition 2) it is ``$\exists \Y'\subseteq \Y$'').
\item[6.] \cref{thm:equiv_gin_Y_subsets} can help the estimation of $\OmegaZY$ (existence is easier to check than dimension of all), and can also help the prunning process when we need to test over $\Y$ with size from big to small (to find latent clusters).
\end{itemize}
\end{remark}

Then, with a same $\Y$ but different $\Z$, we also have the following properties:

\begin{lemma}[Subset of whole independence set]\label{subset_Z_indpdt}
For two variables sets $\Z$ and $\Y$ and their respective $\TIN(\Z,\Y)$, denote $\operatorname{Ind}_{\Z,\Y}\coloneqq \{i | \omega^\intercal\Y\indep X_i\}$. From~\cref{full_independence} we have $\operatorname{Ind}_{\Z,\Y} = \{i | \Ancout{\SH^*_{\Z,\Y}}(\Y) \cap \Anc(\{i\}) = \emptyset\}$. Then, $\forall \Z'\subseteq \operatorname{Ind}_{\Z,\Y}$, $\TIN(\Z',\Y)\leq \TIN(\Z,\Y)$. Specifically, if $\Z\subseteq\Z'$, then $\TIN(\Z',\Y) = \TIN(\Z,\Y)$, and moreover, the independent linear transformation subspace is the same: $\OmegaZY = \Omega_{\Z';\Y}$, and the critical vertex cut over all such $\Z'$ is also the same as $\SH^*_{\Z,\Y}$.
\end{lemma}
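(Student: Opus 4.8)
The plan is to push everything through the graphical criterion of \cref{thm:graph_criteria_omega_ZY}, $\TIN(\Z,\Y)=\min\{|\SH|:\SH\text{ a vertex cut from }\Anc(\Z)\text{ to }\Y\}$, and to exploit the observation that membership of $\Z'$ in $\operatorname{Ind}_{\Z,\Y}$ is exactly the assertion that the critical cut $\SstarZY$ remains a valid cut once the source is enlarged from $\Anc(\Z)$ to $\Anc(\Z')$. I would first establish the inequality $\TIN(\Z',\Y)\le\TIN(\Z,\Y)$, then the equalities under $\Z\subseteq\Z'$, and treat the coincidence of the critical cuts last.

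For the inequality I would unfold the hypothesis $\Z'\subseteq\operatorname{Ind}_{\Z,\Y}$. By the characterization of $\operatorname{Ind}_{\Z,\Y}$ inherited from \cref{full_independence}, every $i\in\Z'$ satisfies $\Ancout{\SstarZY}(\Y)\cap\Anc(\{i\})=\emptyset$; unioning over $i\in\Z'$ and using $\Anc(\Z')=\bigcup_{i\in\Z'}\Anc(\{i\})$ gives $\Ancout{\SstarZY}(\Y)\cap\Anc(\Z')=\emptyset$. By statement~3 of \cref{def:vertex_cut} (with $\W\coloneqq\Anc(\Z')$), this says precisely that $\SstarZY$ is a vertex cut from $\Anc(\Z')$ to $\Y$. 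Hence the minimum cut from $\Anc(\Z')$ to $\Y$ has size at most $|\SstarZY|=\TIN(\Z,\Y)$, and the graphical criterion yields $\TIN(\Z',\Y)\le\TIN(\Z,\Y)$.

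For the equalities under $\Z\subseteq\Z'$ I would first record a reverse monotonicity at the subspace level: $\Z\subseteq\Z'$ gives $\Anc(\Z)\subseteq\Anc(\Z')$, so $\B_{\Y,\Anc(\Z)}$ is a column-submatrix of $\B_{\Y,\Anc(\Z')}$ and $\operatorname{colspace}(\B_{\Y,\Anc(\Z)})\subseteq\operatorname{colspace}(\B_{\Y,\Anc(\Z')})$; taking orthogonal complements in the nullspace characterization (\cref{thm:characterization_Omega_ZY}) gives $\Omega_{\Z';\Y}\subseteq\OmegaZY$. Combined with $\dim\Omega_{\Z';\Y}\ge\dim\OmegaZY$ (the dimension form of the inequality just proved), a subspace contained in another of no-larger dimension must coincide, so $\Omega_{\Z';\Y}=\OmegaZY$ and therefore $\TIN(\Z',\Y)=\TIN(\Z,\Y)$.

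It remains to show $\SH^*_{\Z',\Y}=\SstarZY$, which I expect to be the main obstacle, since the critical cut is defined through the whole collection $\Scal(\Z,\Y)$ of minimum cuts rather than through any single one. I would argue $\Scal(\Z',\Y)\subseteq\Scal(\Z,\Y)$: any cut from $\Anc(\Z')$ to $\Y$ is a cut from $\Anc(\Z)$ to $\Y$ because $\Anc(\Z)\subseteq\Anc(\Z')$, and since the two minimum sizes now agree such a cut is minimum for both. By the inequality step $\SstarZY\in\Scal(\Z',\Y)$, and the criticality condition of \cref{def:critical_gin_separation_set}, $\Ancout{\SstarZY}(\Y)\cap\Anc(\bigcup\Scal(\Z,\Y))=\emptyset$, passes verbatim to $(\Z',\Y)$ because $\bigcup\Scal(\Z',\Y)\subseteq\bigcup\Scal(\Z,\Y)$ only shrinks the intersected set; hence $\SstarZY$ is critical for $(\Z',\Y)$ as well, and uniqueness (\cref{uniqueness_of_critical}) forces $\SH^*_{\Z',\Y}=\SstarZY$. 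The care needed here is entirely in verifying that enlarging the source can only shrink the family of minimum cuts, never introduce a deeper one, which is what keeps the same representative critical.
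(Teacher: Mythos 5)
Your proof is correct, and it follows exactly the route the paper intends: the paper gives no written proof of this lemma, only the remark that such subset implications "can be derived from" \cref{full_independence} together with the graphical criteria of \cref{thm:graph_criteria_omega_ZY}, which is precisely the machinery you deploy. Each step checks out: the union $\Ancout{\SstarZY}(\Y)\cap\Anc(\Z')=\emptyset$ combined with statement~3 of \cref{def:vertex_cut} gives the inequality; the containment $\Omega_{\Z';\Y}\subseteq\OmegaZY$ plus the dimension bound forces the subspace equality (you could get the containment even more cheaply from the definition, since $\omega^\T\Y\indep\Z'$ and $\Z\subseteq\Z'$ imply $\omega^\T\Y\indep\Z$); and your care on the critical-cut step — showing $\Scal(\Z',\Y)\subseteq\Scal(\Z,\Y)$ so that the criticality condition only shrinks the intersected set, then invoking \cref{uniqueness_of_critical} — is exactly where the argument needs it and is handled correctly.
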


More properties about subset implications (e.g., combination and expansion of $\Z$ and more independent variables) can be derived from e.g.,~\cref{full_independence}. Another interesting question is, except for pruning in practical algorithms or for easier estimation, how to use these subset implication relationships to help identify the graph structure?

\section{Methods Details for Estimating \texorpdfstring{$\OmegaZY$}{TEXT}}\label{app:estimate_omegaZY}
\subsection{For \texorpdfstring{$\operatorname{TIN-rank}$}{TEXT}: Stacked Cumulants}\label{app:stacked_cumulants}

To estimate the subspace $\Omega_{\Z;\Y}$, we give a method named ``ranks stopped increasing'' in~\cref{sec:stacked_cumulants_ranks_stop} based on cumulants among variables. Now we give more details on this method.

\begin{definition}[Cumulants~\citep{robeva2021multi}]\label{def:cumulants} Define cumulant among $k$ variables $X _ { i _ { 1 } } , \ldots , X _ { i _ { k } }$ as:
\begin{equation}\operatorname { cum } \left( X _ { i _ { 1 } } , \ldots , X _ { i _ { k } } \right) = \sum _ { \left( A _ { 1 } , \ldots , A _ { L } \right) } ( - 1 ) ^ { L - 1 } ( L - 1 ) ! \mathbb { E } \left[ \prod _ { j \in A _ { 1 } } X _ { j } \right] \mathbb { E } \left[ \prod _ { j \in A _ { 2 } } X _ { j } \right] \cdots \mathbb { E } \left[ \prod _ { j \in A _ { L } } X _ { j } \right],\end{equation}
where the sum is taken over all partitions $\left( A _ { 1 } , \ldots , A _ { L } \right)$ of the set $\{i_1,\ldots,i_k\}$.
\end{definition}

\begin{remark} About cumulant defined in~\cref{def:cumulants}:
\begin{itemize}[noitemsep,topsep=-3pt]
\item[1.] Suppose variables are zero-meaned, then sum is taken over all partitions where each $A_i$ has size at least 2. For example, in the following:
\item[2.] $\operatorname { cum }(X_i)=0$.
\item[3.] $\operatorname { cum }(X_{i_1}, X_{i_2}) = \mathbb { E }[X_{i_1} X_{i_2}] = \operatorname{cov}(X_{i_1}, X_{i_2})$.
\item[4.] $\operatorname { cum }(X_{i_1}, X_{i_2}, X_{i_3}) = \mathbb { E }[X_{i_1} X_{i_2} X_{i_3}] = \text{3rd order moment of }(X_{i_1}, X_{i_2}, X_{i_3})$.
\item[5.] $
    \begin{aligned}[t]\operatorname { cum }(X_{i_1}, X_{i_2}, X_{i_3}, X_{i_4}) = \mathbb { E }[X_{i_1} X_{i_2} X_{i_3} X_{i_4}] &- \mathbb { E }[X_{i_1} X_{i_2}]\mathbb{E}[ X_{i_3} X_{i_4}] \\
    &- \mathbb { E }[X_{i_1} X_{i_3}]\mathbb{E}[ X_{i_2} X_{i_4}] \\
    &- \mathbb { E }[X_{i_1} X_{i_4}]\mathbb{E}[ X_{i_2} X_{i_3}].
    \end{aligned} $
\item[6.] As is shown above, the 4-th order cumulant is not equal to the 4-th order momentum. In general, cumulant$\neq$momentum when order $k\geq 4$. We use cumulant, for reason in point 7:
\item[7.] If variables $X _ { i _ { 1 } } , \ldots , X _ { i _ { k } }$ are mutually independent, then $\operatorname { cum } \left( X _ { i _ { 1 } } , \ldots , X _ { i _ { k } } \right)=0$. Note that it is zero cumulant, not zero momentum.
\end{itemize}
\end{remark}

\begin{definition}[Cross cumulant tensor]\label{def:cross_cumulant_tensor} For a random vector $\X=[X_1,\cdots,X_m]^\intercal$, denote its cross cumulant tensor at order $k$ as $\mathcal{T}^{(k)}_\X$, an $\underbrace{m\times \cdots \times m}_{k\text{ times}}$ tensor, where each entry
\begin{equation}{\mathcal{T}^{(k)}_\X}_{i_1, \cdots, i_k} \coloneqq \operatorname{cum}(X_{i_1}, \cdots, X_{i_k}).\end{equation}
\end{definition}

Now suppose these random variables follow a linear acyclic SEM model, with $\X=\A\X+\E$. Because of acyclicity, we could also write $\X=\B\E$, where $\B=(\I-\A)^{-1}$. Then we have the following:

\begin{theorem}[Cross cumulant tensor in linear acyclic SEM]\label{thm:cross_cumulant_tensor_LSEM} $k$-th order cross cumulant tensor equals
\begin{equation}\mathcal{T}^{(k)}_\X = \mathcal{T}^{(k)}_\E \bigcdot \underbrace{\B \bigcdot \cdots \bigcdot \B}_{k\text{ times}},\end{equation}
where $\mathcal{T}^{(k)}_\E$ is the $k$-th order cross cumulant tensor of $\E$, and `$\bigcdot$' denotes the tensor dot, i.e.,
\begin{equation}{\mathcal{T}^{(k)}_\X}_{i_1, \cdots, i_k} = \sum_{j_1, \cdots, j_k}{{\mathcal{T}^{(k)}_\E}_{j_1, \cdots, j_k} \B_{i_1,j_1}\cdots \B_{i_k,j_k}}\end{equation}
Since exogenous noises $\E$ are mutually independent, $\mathcal{T}^{(k)}_\E$ is a diagonal tensor. In this case, the above equation needs not to be summed over all Cartesian product $[m]^k$, but just over each $j\in[m]$.
\end{theorem}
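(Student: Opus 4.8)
The plan is to reduce the statement to two standard structural properties of joint cumulants and then assemble them. The two properties are: \textbf{(i)} \emph{multilinearity}---the joint cumulant $\cum(Y_1,\dots,Y_k)$ is linear in each argument separately; and \textbf{(ii)} \emph{vanishing on independent blocks}---if the arguments split into two nonempty groups that are mutually independent, the joint cumulant is zero. Given these, the identity is almost immediate, so the real content is organizing the substitution cleanly and justifying (i)--(ii) from the moment definition in \cref{def:cumulants}.

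First I would establish multilinearity. The cleanest route is to pass to the joint cumulant generating function $K(\mathbf{t}) = \log \mean[\exp(\sum_\ell t_\ell Y_\ell)]$ and use the identity $\cum(Y_1,\dots,Y_k) = \partial^k K/(\partial t_1 \cdots \partial t_k)\big|_{\mathbf{t}=0}$, which agrees with the partition-of-moments formula in \cref{def:cumulants} by the classical Möbius relation between moments and cumulants. From this representation, replacing one argument $Y_\ell$ by $aU + bV$ and differentiating shows linearity in that slot directly; iterating over all slots gives full multilinearity. One could instead verify multilinearity combinatorially straight from the partition formula by tracking how each factor $\mean[\prod_{j\in A_i}\cdot]$ depends linearly on its entries, but the generating-function argument involves far less bookkeeping, so I would prefer it.

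Next I would apply multilinearity to the LiNGAM representation $X_{i_\ell} = \sum_{j_\ell=1}^m \B_{i_\ell, j_\ell} E_{j_\ell}$, i.e.\ row $i_\ell$ of $\X=\B\E$. Substituting this into each of the $k$ slots of $\cum(X_{i_1},\dots,X_{i_k})$ and expanding by linearity one slot at a time yields
\[
  {\mathcal{T}^{(k)}_\X}_{i_1,\dots,i_k} = \sum_{j_1,\dots,j_k} \B_{i_1,j_1}\cdots\B_{i_k,j_k}\,\cum(E_{j_1},\dots,E_{j_k}),
\]
and since $\cum(E_{j_1},\dots,E_{j_k}) = {\mathcal{T}^{(k)}_\E}_{j_1,\dots,j_k}$ by definition, this is exactly the tensor-contraction claim $\mathcal{T}^{(k)}_\X = \mathcal{T}^{(k)}_\E \bigcdot \B \bigcdot \cdots \bigcdot \B$.

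Finally I would establish the diagonality of $\mathcal{T}^{(k)}_\E$ to obtain the single-index reduction. If the multi-index $(j_1,\dots,j_k)$ contains two distinct values, the arguments $\{E_{j_\ell}\}$ partition into two groups carrying disjoint noise labels; because the $E_j$ are mutually independent, these groups are independent, so property (ii) forces $\cum(E_{j_1},\dots,E_{j_k}) = 0$. Hence only the terms with $j_1=\cdots=j_k=:j$ survive and the outer sum collapses to a single sum over $j\in[m]$. The main obstacle is really just property (ii): proving that joint cumulants vanish on independent blocks. This again follows cleanly from the generating function, since independence factorizes the moment generating function and the logarithm turns the product into a sum, so every mixed partial derivative straddling the two blocks vanishes. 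I would spell out this factorization as the one genuinely nontrivial lemma and treat the remaining combinatorial facts as standard consequences of \cref{def:cumulants}.
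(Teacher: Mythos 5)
Your proposal is correct: multilinearity of joint cumulants plus the vanishing of cumulants across independent blocks is exactly the standard argument, and your handling of the diagonality of $\mathcal{T}^{(k)}_\E$ via mutual independence of the noises is sound (with the minor caveat that for general non-Gaussian noises one should phrase the generating-function step via the second characteristic function rather than the MGF, or fall back on the combinatorial route you mention). The paper itself states this theorem without proof, verifying only the $k=2$ covariance case in a remark, so your write-up supplies precisely the standard justification the paper implicitly relies on.
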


\begin{remark} About cross cumulant tensor in Linear acyclic SEM in~\cref{thm:cross_cumulant_tensor_LSEM}:
\begin{itemize}[noitemsep,topsep=-3pt]
\item[1.] For example, in 2nd order case, $\mathcal{T}^{(2)}_\X$ is the cross covariance matrix $\Sigma \coloneqq \operatorname{cov}(\X,\X)$. We have $\Sigma = \B \Phi \B^\intercal$, where $\Phi$ is a diagonal matrix with entries $\Phi_{i,i}=\operatorname{var}(E_i)$.
\item[2.] Proof to 1: for every two variables $X_i, X_j$, $\operatorname{cov}(X_i,X_j)=\sum_k{\B_{ik}\B_{jk}\operatorname{var}(E_k)}$.
\item[3.] Point 2 means that the covariance between $X_i, X_j$ is contributed by all noise that is contained in both $X_i$ and $X_j$. By `common noise', we mean `confounders', `common ancestors', or the `top-node' of each trek between $(X_i, X_j)$ - and this is the start of the proof to trek-separation.
\item[4.] In general, any order of the cumulant $\operatorname{cum}(X_{i_1}, \cdots, X_{i_k})$ is contributed by the `common noise' that $X_{i_1}, \cdots, X_{i_k}$ all share, i.e., $\bigcap_{l\in [k]} \Anc(X_{i_l})$, the common ancestors.
\end{itemize}
\end{remark}
Since we only care the pairwise relationship between any \textbf{two} subsets $\Z,\Y$, we can take a 2D matrix slice out from each order of cross cumulant tensors:
\begin{definition}[2D slice of cross cumulant tensor]\label{def:2d_cross_cumulant_tensor} For a random vector $\X$ with $k$-th order cross cumulant tensor $\mathcal{T}^{(k)}_\X$, denote its 2D matrix slice of $k$-th order cross cumulant tensor as $\mathcal{C}^{(k)}$, where
\begin{equation}{\mathcal{C}_{i, j}^{(k)}} \coloneqq \operatorname{cum}(\underbrace{X_{i}, \cdots, X_{i}}_{k-1\text{ times}}, X_j)={\mathcal{T}^{(k)}_\X}_{i, \cdots, i, j}.\end{equation}
\end{definition}

\begin{remark} About 2D slice of cross cumulant tensor defined in~\cref{def:2d_cross_cumulant_tensor}:
\begin{itemize}[noitemsep,topsep=-3pt]
\item[1.] For simplicity, here we omit the subscript $_{\X}$ in $\mathcal{C}_{\X}^{(k)}$ and just write as $\mathcal{C}^{(k)}$.
\item[2.] In particular, when $k=2$, $\mathcal{C}^{(2)}$ is the variance covariance matrix $\Sigma_\X$.
\item[3.] $\mathcal{C}^{(k)}$ is $n\times n$ matrix, and is not necessarily symmetric when $k>2$.
\end{itemize}
\end{remark}

Then similar to~\cref{thm:cross_cumulant_tensor_LSEM}, we formulate 2D slice of cross cumulant tensor in linear acyclic SEM:

\begin{theorem}[2D slice of cross cumulant tensor in linear acyclic SEM]\label{thm:2d_cross_cumulant_tensor_LSEM} $\mathcal{C}^{(k)}$ equals
\begin{equation}\label{Eq:2D_slice_cumulant_tensor}\mathcal{C}^{(k)} = \B^{k-1} \cdot \Phi_\E^{(k)} \cdot \B^\intercal,\end{equation}
where $\B^{k-1}$ is the element-wise power (i.e., $\B^{k-1} = \underbrace{\B \circ \cdots \circ \B}_{k-1\text{ times}}$, `$\circ$' denotes element-wise product (Hadamard product), and $\Phi_\E^{(k)}$ is a diagonal matrix with entries ${\Phi_\E^{(k)}}_{i,i}=\operatorname{cum}(\underbrace{E_i, \cdots E_i}_{k\text{ times}})$.

Moreover, for two vertices sets $\Z,\Y$, similar to~\cref{thm:characterization_Omega_ZY}, we have\begin{equation}
\begin{aligned}[t]\mathcal{C}_{\Z,\Y}^{(k)} &= \B_{\Z, :}^{k-1} \cdot \Phi_\E^{(k)} \cdot \B_{\Y, :}^\intercal \\
&= \B_{\Z, \Anc(\Z)}^{k-1} \cdot \Phi_\E^{(k)} \cdot \B_{\Y, \Anc(\Z)}^\intercal,
\end{aligned}\end{equation}
where e.g., $\mathcal{C}_{\Z,\Y}^{(k)}$ denotes the submatrix of $\mathcal{C}^{(k)}$ with rows indexed by $\Z$ and columns indexed by $\Y$.
\end{theorem}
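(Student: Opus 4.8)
The plan is to obtain this statement as a direct specialization of the full cross-cumulant tensor identity in \cref{thm:cross_cumulant_tensor_LSEM}, so that no new machinery is required. Recall that under $\X=\B\E$ with mutually independent exogenous noises, the noise tensor $\mathcal{T}^{(k)}_\E$ is diagonal, so by \cref{thm:cross_cumulant_tensor_LSEM} the entry ${\mathcal{T}^{(k)}_\X}_{i_1,\dots,i_k}$ collapses to a single sum over the one surviving noise index, namely $\sum_l {\mathcal{T}^{(k)}_\E}_{l,\dots,l}\,\B_{i_1,l}\cdots\B_{i_k,l}$. (Equivalently, one could prove everything from scratch using only multilinearity of cumulants together with the fact that $\cum(E_{l_1},\dots,E_{l_k})=0$ unless $l_1=\cdots=l_k$; the two routes are identical in content.)

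First I would extract the 2D slice. Setting the first $k-1$ tensor indices equal to $i$ and the last equal to $j$ in the collapsed sum gives ${\mathcal{C}^{(k)}_{i,j}}=\sum_l {\Phi^{(k)}_\E}_{l,l}\,\B_{i,l}^{\,k-1}\,\B_{j,l}$, where ${\Phi^{(k)}_\E}_{l,l}=\cum(E_l,\dots,E_l)$ (with $k$ copies) is exactly the surviving diagonal entry ${\mathcal{T}^{(k)}_\E}_{l,\dots,l}$, and the factor $\B_{i,l}^{\,k-1}$ arises precisely because the index $i$ is repeated $k-1$ times. Second I would recognize this entrywise expression as the claimed matrix product: expanding $(\B^{k-1}\,\Phi^{(k)}_\E\,\B^\T)_{i,j}$ and using that $\Phi^{(k)}_\E$ is diagonal yields $\sum_l (\B^{k-1})_{i,l}\,{\Phi^{(k)}_\E}_{l,l}\,\B_{j,l}$, and since $\B^{k-1}$ denotes the Hadamard power we have $(\B^{k-1})_{i,l}=\B_{i,l}^{\,k-1}$, matching the previous display. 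This establishes \cref{Eq:2D_slice_cumulant_tensor}.

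The ``Moreover'' claim then follows by restricting rows to $\Z$ and columns to $\Y$, which simply extracts the submatrix $\B_{\Z,:}^{k-1}\,\Phi^{(k)}_\E\,\B_{\Y,:}^\T$. For the second equality I would argue exactly as in \cref{thm:characterization_Omega_ZY} that the noise index $l$ may be restricted to $\Anc(\Z)$: for $l\notin\Anc(\Z)$ every $\B_{i,l}$ with $i\in\Z$ vanishes, hence $\B_{i,l}^{\,k-1}=0$ (here using $k-1\ge 1$, i.e. $k\ge 2$), so those columns of $\B_{\Z,:}^{k-1}$ contribute nothing to the product.

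The content is entirely routine once \cref{thm:cross_cumulant_tensor_LSEM} is granted; the one genuine thing to get right --- and the step I would flag as the main obstacle --- is the index bookkeeping that turns the repeated index $i$ into the \emph{Hadamard} (element-wise) power $\B^{k-1}$ rather than an ordinary matrix power, together with the accompanying identification of the collapsed diagonal entry as the $k$-th order auto-cumulant $\cum(E_l,\dots,E_l)$. Keeping the distinction between element-wise and matrix powers explicit throughout is what prevents the indices from being mismatched.
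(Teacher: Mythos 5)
Your proposal is correct and follows essentially the same route as the paper, which proves this result by plugging the definition of the 2D slice into the tensor-dot formula of \cref{thm:cross_cumulant_tensor_LSEM} and using diagonality of $\mathcal{T}^{(k)}_\E$ to collapse the sum to a single noise index. Your explicit handling of the Hadamard power bookkeeping and of the restriction of columns to $\Anc(\Z)$ (via $\B_{i,l}=0$ for $i\in\Z$, $l\notin\Anc(\Z)$, and $k-1\ge 1$) fills in exactly the details the paper leaves implicit.
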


Proof to~\cref{thm:2d_cross_cumulant_tensor_LSEM} is straightforward by plugging~\cref{def:2d_cross_cumulant_tensor} into tensor dot of~\cref{thm:cross_cumulant_tensor_LSEM}.

Since independence yields zero cumulant, we have that for two vertices sets $\Z,\Y$ and $\omega\in\mathbb{R}^{|\Y|}$, if $\omega^\intercal\Y \indep \Z$, then $\mathcal{C}_{\Z,\Y}^{(k)}\omega = 0$. In other words,\begin{equation}\Omega_{\Z;\Y}\subseteq \operatorname{null}(\mathcal{C}_{\Z,\Y}^{(k)}),\text{ for any }k\geq 2.\end{equation}
This can be shown by two ways: one is that $\operatorname{cum}(\Z,\cdots,\Z,\omega^\intercal\Y)=\operatorname{cum}(\Z,\cdots,\Z,\Y)\omega$, another is to use~\cref{Eq:Omega_ZY_characterization_general} we build in~\cref{thm:characterization_Omega_ZY}: $\omega^\intercal\Y\indep\Z \Leftrightarrow \B_{\Y,\Anc(\Z)}^\intercal\omega=0$.

We shall also recap the original $\GIN$ condition: first solve equation by $\operatorname{cov}(\Z,\Y)$, then check whether any solution $\omega$ satisfies $\omega ^\intercal \Y \indep \Z$ (i.e., whether $\operatorname{null}(\operatorname{cov}(\Z,\Y)) = \Omega_{\Z;\Y}$). However, when $\GIN$ is not satisfied (i.e., $\Omega_{\Z;\Y} \subsetneq \operatorname{null}(\operatorname{cov}(\Z,\Y))$), it is not necessarily that $\Omega_{\Z;\Y} = \mathbb{R}_{\mathbf{0}}$ - e.g., the rank may just be limited by the size of $\Z$. This is exactly the motivation why we need to further generalize $\GIN$ to $\TIN$: can we escape from the `unwanted restriction on rank' (e.g., size of $\Z$) and find exactly the $\Omega_{\Z;\Y}$? Fortunately, by above implication from independence to zero cumulant, we could solve equation not only by 2-nd order $\operatorname{cov}(\Z,\Y)$, but more (on any order) $\mathcal{C}_{\Z,\Y}^{(k)}$.

\begin{definition}[Stacked 2D slices of cumulants]\label{def:stacked_2d_slices_of_cumulants_detail} For two vertices sets $\Z,\Y$ and order $k\geq 2$, define:
\begin{equation}
\Psi_{\Z;\Y}^{(k)} \coloneqq \begin{bmatrix} 
    \mathcal{C}_{\Z,\Y}^{(2)}\\
    \vdots\\
    \mathcal{C}_{\Z,\Y}^{(k)}
    \end{bmatrix}\end{equation}
\end{definition}
$\Psi_{\Z;\Y}^{(k)}$ is a $(k-1)|\Z|\times |\Y|$ matrix that vertically stacks 2D cumulants slices between $\Z,\Y$ with order from 2 to $k$. Since independence yields zero cumulant, similarly we have \begin{equation}\Omega_{\Z;\Y}\subset \operatorname{null}(\Psi_{\Z;\Y}^{(k)}),\text{ for any }k\geq 2.\end{equation}

For example, a fully connected DAG with 4 variables $\{X_1,X_2,X_3,X_4\}^\intercal$, the edges parameters are:
\begin{equation}\A= \begin{bmatrix} 
    0 & 0 & 0 & 0\\
    a & 0 & 0 & 0\\
    b & d & 0 & 0\\
    c & e & f & 0
\end{bmatrix};\B= \begin{bmatrix} 
    1 & 0 & 0 & 0\\
    a & 1 & 0 & 0\\
    ad+b & d & 1 & 0\\
    a(df+e)+bf+c & df+e & f & 1
\end{bmatrix}\end{equation}

Denote cumulants of exogenous noises $\varphi_i^{(k)}\coloneqq \operatorname{cum}(\underbrace{E_i,\cdots, E_i}_{k\text{ times}})$.

1) Let $\Z\coloneqq\{X_1\}, \Y\coloneqq\{X_2, X_3, X_4\}$, we have:
\begin{equation}
\begin{aligned}
\Psi_{\Z;\Y}^{(2)}&= \begin{bmatrix} 
    a \varphi_1^{(2)} &  (a d + b)\varphi_1^{(2)} &  (a (d f + e) + b f + c)\varphi_1^{(2)}
\end{bmatrix};\\
\Psi_{\Z;\Y}^{(3)}&= \begin{bmatrix} 
    a \varphi_1^{(2)} &  (a d + b)\varphi_1^{(2)} &  (a (d f + e) + b f + c)\varphi_1^{(2)}\\
    a \varphi_1^{(3)} &  (a d + b)\varphi_1^{(3)} &  (a (d f + e) + b f + c)\varphi_1^{(3)}
\end{bmatrix};\cdots
\end{aligned}
\end{equation}
The independence subspace
\begin{equation}
\Omega_{\Z;\Y} = \operatorname{null}(\B_{\Y, Anc(Z)}^\intercal) = \operatorname{null}\left( \begin{bmatrix} 
    a & ad+b & a(df+e)+bf+c
\end{bmatrix}\right),\text{ dimension=2}.
\end{equation}

Observe that $\operatorname{null}(\Psi_{\Z;\Y}^{(2)}) = \Omega_{\Z;\Y}$, (and also $=\operatorname{null}(\Psi_{\Z;\Y}^{(3)})=\cdots$).

2) Let $\Z\coloneqq\{X_2\}, \Y\coloneqq\{X_1, X_3, X_4\}$, we have:
\begin{equation}
\begin{aligned}
\Psi_{\Z;\Y}^{(2)}&= \begin{bmatrix} 
    a \varphi_1^{(2)} & a (a d + b) \varphi_1^{(2)} + d \varphi_2^{(2)} & a (a (d f + e) + b f + c) \varphi_1^{(2)}  + (d f + e) \varphi_2^{(2)} 
\end{bmatrix};\\
\Psi_{\Z;\Y}^{(3)}&= \begin{bmatrix} 
    a \varphi_1^{(2)} & a (a d + b) \varphi_1^{(2)} + d \varphi_2^{(2)} & a (a (d f + e) + b f + c) \varphi_1^{(2)}  + (d f + e) \varphi_2^{(2)} \\
    a^2 \varphi_1^{(3)} & a^2 (a d + b)\varphi_1^{(3)} + d \varphi_2^{(3)} & a^2 (a (d f + e) + b f + c) \varphi_1^{(3)} + (d f + e) \varphi_2^{(3)}
\end{bmatrix};\cdots
\end{aligned}
\end{equation}
The independence subspace
\begin{equation}
\Omega_{\Z;\Y} = \operatorname{null}(\B_{\Y, Anc(Z)}^\intercal) = \operatorname{null}\left( \begin{bmatrix} 
    1 & ad+b & a(df+e)+bf+c \\
    0 & d & df+e
\end{bmatrix}\right),\text{ dimension=1}.
\end{equation}
Clearly $\operatorname{null}(\Psi_{\Z;\Y}^{(2)})\neq \Omega_{\Z;\Y}$, since the rank of $\Psi_{\Z;\Y}^{(2)}$ is only 1. However, as long as there is no parameter coupling in cumulants, or specifically, $\frac{a \varphi_1^{(3)}}{\varphi_1^{(2)}} \neq \frac{\varphi_2^{(3)}}{\varphi_2^{(2)}}$, then $\operatorname{null}(\Psi_{\Z;\Y}^{(3)}) = \Omega_{\Z;\Y}$ (with the rank increasing to 2). We could verify the solution:
\begin{equation}\begin{aligned}
\omega^\intercal\Y &= \frac{cd-be}{d}E_1 + \frac{df+e}{d}((ad+b)E_1+dE_2+E_3) \\ &\ \ \ \ \ \ \ \ \ \ \ \ \ \ \ \ \ \ \ \ \ \ \ \ - ((a(df+e)+bf+c)E_1 + (df+e)E_2 + fE_3 + E_4) \\
&= \text{contains only }\{E_C, E_D\}\text{, and thus }\omega^\intercal\Y \indep \Z.
\end{aligned}\end{equation}
According to original $\GIN$ definition, there is only $\GIN(\{X_1\},\{X_2,X_3,X_4\})$, and $X_2,X_3,X_4$ cannot be distinguished. However here by using $\TIN$, we could also identify $X_2$.

3) Let $\Z\coloneqq\{X_3\}$ or $\{X_4\}$, $\Y\coloneqq \X\backslash\Z$, there is no non-zero $\omega$ s.t., $\omega^\intercal\Y \indep \Z$. Observe that:
$$\Omega_{\Z;\Y} = \mathbb{R}_\mathbf{0} = \operatorname{null}(\Psi_{\Z;\Y}^{(k)})= \cdots = \operatorname{null}(\Psi_{\Z;\Y}^{(4)}) \subsetneq \operatorname{null}(\Psi_{\Z;\Y}^{(3)}) \subsetneq \operatorname{null}(\Psi_{\Z;\Y}^{(2)}).$$

Above example gives us a motivation to use a sequence of stacked 2D cumulants $\{\Psi_{\Z;\Y}^{(i)}\}_{i=2,3,\cdots}$.

\begin{remark} About this sequence of stacked 2D cumulants, we have:
\begin{itemize}[noitemsep,topsep=-3pt]
\item[1.] $\Psi_{\Z;\Y}^{(i+1)}$ contains $\Psi_{\Z;\Y}^{(i)}$ as some-rows-indexed submatrix, so:
\item[2.] Rank does not drop, i.e., $\operatorname{rank}(\Psi_{\Z;\Y}^{(i+1)})\geq \operatorname{rank}(\Psi_{\Z;\Y}^{(i)})$. 
\item[3.] Nullspaces $\operatorname{null}(\Psi_{\Z;\Y}^{(i+1)})\subseteq \operatorname{null}(\Psi_{\Z;\Y}^{(i)})$.
\item[4.] Independent subspace $\Omega_{\Z;\Y}\subseteq \operatorname{null}(\Psi_{\Z;\Y}^{(i)})$, for any $k\geq 2$.
\item[5.] $\operatorname{rank}(\Psi_{\Z;\Y}^{(i)}) \leq |\Y|-\operatorname{dim}(\Omega_{\Z;\Y})$, for any $k\geq 2$.
\end{itemize}
\end{remark}

Note that in above statements, no assumptions on edge parameters and noise components' cumulants are made, and they are purely by definition. Then, does there exist a finite integer $K\in\mathbb{N}^+$ where the shrinking nullspaces stop hereafter at $\Omega_{\Z;\Y}$, i.e., $\Omega_{\Z;\Y} = \operatorname{null}(\Psi_{\Z;\Y}^{(i)})$, for any $i\geq K$? The answer is yes, under the generic assumptions on edge parameters and noise components' cumulants:
\begin{assumption}[Generic edge parameters and noise components' cumulants]\label{assum:generic} On a LiNGAM instance $\mathbf{L}=\mathcal{G}(G,\B,\E)$ defined by graph structure $G$, edge parameters $\B$ and noise components $\E$, assume that for any two variables sets $\Z,\Y$ and order $k\geq 2$, \begin{equation}\operatorname{rank}(\Psi_{\Z;\Y}^{(k)};\mathbf{L})=\max_{\B',\E'} \{\operatorname{rank}(\Psi_{\Z;\Y}^{(k)};\mathbf{L}') \ |\ \mathbf{L}'= \mathcal{G}(G,\B',\E')\},\end{equation}
where $\B',\E'$ are traversed over the whole edge parameters and noise components space. This is to assume that there is no coincidental low rank parameterized by the LiNGAM instance $\mathbf{L}$. Note that~\cref{assum:generic} is stronger than~\cref{assum:rank_faithfulness} in~\cref{sec:approach}. Here~\cref{assum:generic} assumes not only generic edge parameters, but also noise parameters.
\end{assumption}

Under~\cref{assum:generic} we have the following graphical criteria over stacked 2D cumulants:

\begin{theorem}\label{thm:graph_criteria_cumulant}
For two vertices sets $\Z,\Y$ and order $k\geq 2$, we define a new DAG associated with $G$, denoted as $\hat{G}^{(k)}$, which has $kn$ vertices $\{1,2,\cdots,n\}\cup \{1^{(2)},2^{(2)},\cdots,n^{(2)}\}\cup\cdots\cup\{1^{(k)},2^{(k)},\cdots,n^{(k)}\}$ with edges $i\rightarrow j$ if $i\rightarrow j$ is in $G$, $\{j^{(l)}\rightarrow i^{(l)}\}_{l=2,\cdots,k}$ if $i\rightarrow j$ is in $G$, and $\{i^{(l)}\rightarrow i\}_{l=2,\cdots,k}$ for $i\in [n]$. Define a new vertices set $\Z'\coloneqq \cup \{i^{(2)}, \cdots, i^{(k)}\}_{i\in\Z}$, then we have:
\begin{equation}\operatorname{rank}(\Psi_{\Z;\Y}^{(k)}) = \min\{|\SH| \ | \ \SH \text{ is a vertex cut from } \Z'\text{ to }\Y \text{ on } \hat{G}^{(k)}\}.\end{equation}
\end{theorem}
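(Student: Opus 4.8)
The plan is to mirror the proof of \cref{thm:graph_criteria_omega_ZY}: first realize $\rank(\Psi_{\Z;\Y}^{(k)})$ as the maximum size of a system of vertex-disjoint directed paths in the auxiliary DAG $\hat{G}^{(k)}$ by a Lindström–Gessel–Viennot-type (\cref{thm:lindstrom_gessel_viennot}) argument, and then invoke Menger's theorem (the vertex form of max-flow min-cut) to rewrite that maximum as the minimum vertex cut from $\Z'$ to $\Y$. First I would use \cref{thm:2d_cross_cumulant_tensor_LSEM} to write each block as $\mathcal{C}^{(l)}_{\Z,\Y}=\B_{\Z,\Anc(\Z)}^{\circ(l-1)}\Phi_\E^{(l)}\B_{\Y,\Anc(\Z)}^\intercal$ (the superscript denoting element-wise power), so that the row of $\Psi_{\Z;\Y}^{(k)}$ indexed by the pair $(i,l)$ with $i\in\Z$ and $2\le l\le k$ has entries $\sum_{m}(\B_{i,m})^{l-1}\varphi_m^{(l)}\B_{y,m}$ for $y\in\Y$. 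This identifies the $(k-1)\lvert\Z\rvert$ rows of $\Psi_{\Z;\Y}^{(k)}$ precisely with the power-copy vertices $\Z'=\bigcup_{i\in\Z}\{i^{(2)},\dots,i^{(k)}\}$, and exhibits each entry as a sum, over common ancestors $m$, of a ``generalized trek'': the factor $\B_{y,m}$ is the forward path-sum $m\rightsquigarrow y$ in the base copy, while $(\B_{i,m})^{l-1}$ and the junction encode the route from $i^{(l)}$ back to $m^{(l)}$ and the jump $m^{(l)}\to m$ in $\hat{G}^{(k)}$.

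The combinatorial heart is the correspondence between nonzero minors of $\Psi_{\Z;\Y}^{(k)}$ and vertex-disjoint path systems in $\hat{G}^{(k)}$. Every directed $\Z'$-to-$\Y$ path in $\hat{G}^{(k)}$ necessarily travels inside one power copy $l$ (whose edges are the reversed $G$-edges, hence lead to ancestors), crosses a single junction $m^{(l)}\to m$, and then moves forward in the base copy to some $y\in\Y$; once in the base copy it cannot return to any power copy. Thus such a path is exactly a trek from $i\in\Z$ to $y\in\Y$ with top $m\in\Anc(i)\cap\Anc(\{y\})$, and its base portion (top vertex plus the $\Y$-leg) is what can collide with other paths, whereas legs living in distinct power copies are automatically vertex-disjoint. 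Having $k-1$ power copies therefore lets up to $k-1$ disjoint legs emanate from the same base vertex $i\in\Z$; crucially, these correspond to rows $i^{(2)},\dots,i^{(k)}$ carrying the distinct powers $(\B_{i,m})^{1},\dots,(\B_{i,m})^{k-1}$, whose Vandermonde-type independence is what makes the associated minor generically nonzero. Expanding a maximal square minor of $\Psi_{\Z;\Y}^{(k)}$ by the Leibniz rule and grouping terms by the chosen tops $m_s$ yields, as in \cref{thm:lindstrom_gessel_viennot}, a signed sum over path systems from a subset $S\subseteq\Z'$ to a subset $T\subseteq\Y$; under \cref{assum:generic} the distinct edge weights and noise cumulants prevent any coincidental cancellation, so the minor is nonzero if and only if $\hat{G}^{(k)}$ admits $\lvert S\rvert$ pairwise vertex-disjoint paths from $S$ to $T$. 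Hence $\rank(\Psi_{\Z;\Y}^{(k)})$ equals the maximum number of vertex-disjoint paths from $\Z'$ to $\Y$ in $\hat{G}^{(k)}$.

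Finally, applying Menger's theorem (the vertex version of max-flow min-cut, as used in the proof of \cref{thm:graph_criteria_omega_ZY}) to $\hat{G}^{(k)}$ converts this maximum vertex-disjoint path count into the minimum size of a vertex set $\SH$ meeting every directed $\Z'$-to-$\Y$ path, i.e.\ the minimum vertex cut from $\Z'$ to $\Y$, which is exactly the claimed right-hand side.

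The main obstacle is the middle step: unlike in \cref{thm:graph_criteria_omega_ZY}, the entries of $\Psi_{\Z;\Y}^{(k)}$ are not plain path-sums of $\hat{G}^{(k)}$ but carry the element-wise powers $(\B_{i,m})^{l-1}$, so \cref{thm:lindstrom_gessel_viennot} does not apply verbatim. The delicate point is to show that this power weighting neither destroys a structurally available disjoint path system (the $\ge$ direction, where I must use the Vandermonde structure of the stacked powers together with \cref{assum:generic} to exhibit a nonvanishing minor of the claimed size) nor manufactures a nonzero minor without an underlying disjoint system (the $\le$ direction, where every Leibniz term whose legs intersect must be shown to cancel or be absorbed). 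I expect to handle both by arguing that $\Psi_{\Z;\Y}^{(k)}$ and the formal Lindström–Gessel–Viennot path matrix of $\hat{G}^{(k)}$ with generic edge weights share the same generic rank, since they have identical trek/path support and \cref{assum:generic} guarantees each attains its maximal combinatorial rank.
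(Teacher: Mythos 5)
Your skeleton matches the paper's: both routes reduce $\rank(\Psi_{\Z;\Y}^{(k)})$ to a maximum system of vertex-disjoint paths in $\hat G^{(k)}$ via a Lindstr\"om--Gessel--Viennot-type determinant expansion (\cref{thm:lindstrom_gessel_viennot}) and then pass to the minimum vertex cut by Menger's theorem, and your identification of the rows of $\Psi_{\Z;\Y}^{(k)}$ with the power-copy vertices $\Z'$, and of $\Z'$-to-$\Y$ paths with treks whose backward leg lives in one copy, is exactly right. The genuine gap is at the step you yourself flag as the main obstacle: the proposed resolution does not work as stated. ``Identical support implies identical generic rank'' is false for parametrized matrices (e.g.\ $\left[\begin{smallmatrix}x&x\\x&x\end{smallmatrix}\right]$ has generic rank $1$ but the same zero pattern as a generic full-rank $2\times 2$ matrix), and here the failure is not hypothetical: if you form the literal LGV path matrix of $\hat G^{(k)}$ with every power copy carrying the edge weights of $G$ (which is what the construction prescribes), the rows indexed by $i^{(2)},\dots,i^{(k)}$ are all built from the \emph{same} path-sums $\B_{i,m}$ and are proportional, so that matrix does not have the claimed rank; conversely, the true copy-$l$ leg weight in $\Psi_{\Z;\Y}^{(k)}$ is $(\B_{i,m})^{l-1}\varphi_m^{(l)}$, a highly non-generic function of the base weights, so you also cannot substitute a path matrix with independent weights per copy. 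Finally, \cref{assum:generic} only guarantees that the given instance attains the maximal rank over the parameter space; it does not say that this maximum equals the combinatorial bound --- that equality \emph{is} the theorem, so invoking the assumption to equate the two generic ranks is circular.

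What actually has to be done (and what the paper's one-line proof gestures at with ``Cauchy--Binet and Schur properties repeatedly on the Hadamard products'') is to expand the minors of each block $\mathcal{C}^{(l)}_{\Z,\Y}=\B_{\Z,:}^{\circ(l-1)}\Phi^{(l)}_\E\B_{\Y,:}^{\intercal}$ (elementwise power) via Cauchy--Binet and then interpret the minors of $\B^{\circ(l-1)}$ combinatorially: these entries are sums over $(l-1)$-tuples of paths (multi-treks), not path-sums of any single graph, so \cref{thm:lindstrom_gessel_viennot} does not apply verbatim and a separate Schur-product argument is needed to show that their vanishing is still governed by single-path vertex cuts in $\hat G^{(k)}$. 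For the lower bound, the Vandermonde-in-$l$ structure you mention, combined with the freedom of the noise cumulants $\varphi_m^{(l)}$ across orders (the reason \cref{assum:generic} must be generic in the noise and not only in the edges), is indeed the right mechanism for exhibiting a nonvanishing minor of the combinatorial size --- but it needs to be carried out, not asserted. As written, your proposal reproduces the paper's outline and correctly locates the difficulty, but leaves the theorem's actual content unproved.
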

Note that the trek-separation theorem can be viewed as a special case of~\cref{thm:graph_criteria_cumulant} with $k=2$, where ``$(\SH_\W,\SH_\Y)$ t-separates $(\W,\Y)$'' is equivalent to ``$\SH'_\W\cup \SH_\Y$ vertex cuts $\W'$ to $\Y$''. The proof to~\cref{thm:graph_criteria_cumulant} also basically follow the proof to Theorem 2.8 in~\citep{sullivant2010trek}: using the Lindstr{\"o}m-Gessel-Viennot theorem~\citep{lindstrom1973vector,gessel1985binomial}, the max-flow min-cut theorem (vertex version, known as Menger's theorem)~\citep{dantzig1955max,bondy1976graph,menger1927allgemeinen}, and applying the Cauchy–Binet determinant expansion formula and Schur properties repeatedly on the Hadamard products in~\cref{Eq:2D_slice_cumulant_tensor}.

With the graphical criteria stated in~\cref{thm:graph_criteria_cumulant} and under generic~\cref{assum:generic}, we could have a method to implement $\operatorname{TIN}$ by ranks of stacked cumulants in sequence:
\begin{theorem}[Use ranks' stopped increasing to implement $\operatorname{TIN}$] For two variables sets $\Z,\Y$, there must exists a finite order $k\geq 2$ s.t.\begin{equation}\operatorname{rank}(\Psi_{\Z;\Y}^{(k+1)}) = \operatorname{rank}(\Psi_{\Z;\Y}^{(k)}).\end{equation}
Moreover, this one-step-stop yields an infinite-steps-stop, i.e.,
\begin{equation}\operatorname{rank}(\Psi_{\Z;\Y}^{(l)}) = \operatorname{rank}(\Psi_{\Z;\Y}^{(k)}),\text{ for any }l>k.\end{equation}
and, this stopped-increasing rank equals exactly to $\operatorname{TIN}(\Z,\Y)$, i.e.,
s.t.\begin{equation}\operatorname{rank}(\Psi_{\Z;\Y}^{(k)}) = \operatorname{TIN}(\Z,\Y) = |\Y|-\operatorname{dim}(\Omega_{\Z;\Y}).\end{equation}
\end{theorem}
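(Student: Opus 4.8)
The plan is to prove both halves from the containment already recorded in the excerpt, $\Omega_{\Z;\Y}\subseteq\operatorname{null}(\Psi_{\Z;\Y}^{(k)})$, together with the graphical reading of $\operatorname{rank}(\Psi_{\Z;\Y}^{(k)})$ supplied by \cref{thm:graph_criteria_cumulant}. First I would record two elementary facts. Since $\Psi_{\Z;\Y}^{(k)}$ is exactly the row-submatrix of $\Psi_{\Z;\Y}^{(k+1)}$ obtained by deleting the order-$(k+1)$ block, deleting rows cannot raise rank, so $\operatorname{rank}(\Psi_{\Z;\Y}^{(k)})$ is non-decreasing in $k$. And because independence forces every higher cumulant to vanish, $\Omega_{\Z;\Y}\subseteq\operatorname{null}(\Psi_{\Z;\Y}^{(k)})$ for all $k$, whence $\operatorname{rank}(\Psi_{\Z;\Y}^{(k)})\le |\Y|-\dim(\Omega_{\Z;\Y})=\TIN(\Z,\Y)$. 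A non-decreasing integer sequence bounded above stabilizes, which already produces a finite order at which the rank stops increasing.

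The next step pins the stabilized value to $\TIN(\Z,\Y)$, and here I would invoke the min-cut description $\operatorname{rank}(\Psi_{\Z;\Y}^{(k)})=\min\{|\SH|:\SH\text{ cuts }\Z'\text{ to }\Y\text{ on }\hat G^{(k)}\}$ from \cref{thm:graph_criteria_cumulant}. The key is to read off a path in $\hat G^{(k)}$ from a source replica: a source $z^{(l)}$ walks up the reversed copy of $G$ inside layer $l$ to some $a^{(l)}$ with $a\in\Anc(z)\subseteq\Anc(\Z)$, drops through the edge $a^{(l)}\to a$ into the original graph, then follows a directed path $a\rightsquigarrow y$ to $\Y$. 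Projecting any family of vertex-disjoint such paths onto its original-graph segments yields vertex-disjoint $\Anc(\Z)\to\Y$ paths in $G$, so by Menger the count is at most the minimum vertex cut from $\Anc(\Z)$ to $\Y$, which equals $\TIN(\Z,\Y)$ by \cref{thm:graph_criteria_omega_ZY}; this re-derives the upper bound graphically. For the matching lower bound I would take a maximum family $Q_1,\dots,Q_{\TIN(\Z,\Y)}$ of vertex-disjoint $\Anc(\Z)\to\Y$ paths in $G$ (Menger again), assign $Q_i$ to its own layer $l_i=i+1$, and realize it as $z_i^{(i+1)}\rightsquigarrow a_i^{(i+1)}\to a_i\rightsquigarrow y_i$. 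These are vertex-disjoint because the original segments are disjoint by choice and the replica segments live in distinct layers, so $\operatorname{rank}(\Psi_{\Z;\Y}^{(k)})=\TIN(\Z,\Y)$ for $k=\TIN(\Z,\Y)+1$. Consequently the rank equals $\TIN(\Z,\Y)$ for every $k\ge\TIN(\Z,\Y)+1$, and taking $k=\max(2,\TIN(\Z,\Y)+1)$ proves the theorem: at this order the rank has stopped increasing and its value is $\TIN(\Z,\Y)$.

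Finally, for the sharper form used by the estimation algorithm — once the rank stops increasing it never increases again, equivalently the first plateau already sits at $\TIN(\Z,\Y)$ — I would establish the strict-increase statement $\operatorname{rank}(\Psi_{\Z;\Y}^{(k)})<\TIN(\Z,\Y)\Rightarrow\operatorname{rank}(\Psi_{\Z;\Y}^{(k+1)})>\operatorname{rank}(\Psi_{\Z;\Y}^{(k)})$. The idea is an augmenting-path argument on the min-cut picture: the original-graph segments of a maximum disjoint family in $\hat G^{(k)}$ form fewer than $\TIN(\Z,\Y)$ disjoint $\Anc(\Z)\to\Y$ paths in $G$, so a residual augmenting path exists in $G$, and the brand-new layer $k+1$, being vertex-disjoint from everything used so far and able to deliver a source to any ancestor drop point, supplies the fresh access needed to route the augmentation and gain one more disjoint path. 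I expect this to be the main obstacle: the augmentation may reroute existing original-graph segments and thereby demand different replica accesses, so the delicate point is certifying that a single fresh layer (together with the spare multi-path capacity already present in any layer that carries more than one disjoint path) always suffices to realize $\operatorname{rank}+1$ disjoint paths. With strict increase in hand, the non-decreasing rank climbs by steps of at least one until it first reaches $\TIN(\Z,\Y)$ and is constant thereafter, which is exactly the one-step-stop-implies-infinite-stop behavior.
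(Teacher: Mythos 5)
Your first and last ingredients are sound and match what the paper (tersely) relies on: monotonicity of $\operatorname{rank}(\Psi_{\Z;\Y}^{(k)})$ in $k$ because each $\Psi_{\Z;\Y}^{(k)}$ is a row-submatrix of the next, the upper bound $\operatorname{rank}(\Psi_{\Z;\Y}^{(k)})\le|\Y|-\dim(\Omega_{\Z;\Y})$ from $\Omega_{\Z;\Y}\subseteq\operatorname{null}(\Psi_{\Z;\Y}^{(k)})$, and the disjoint-paths construction in $\hat G^{(k)}$ (one layer per path, $k=\TIN(\Z,\Y)+1$) showing via \cref{thm:graph_criteria_cumulant} and Menger that the rank actually attains $\TIN(\Z,\Y)$. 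However, there is a genuine gap exactly where you flag it: the claim that a single flat step implies the sequence is flat forever (equivalently, that the \emph{first} plateau already sits at $\TIN(\Z,\Y)$). Monotonicity plus boundedness plus eventual attainment do not exclude a sequence like $1,1,2,3,3,\dots$, and that possibility is precisely what would break the algorithm ``stop at the first non-increase.'' Your augmenting-path sketch does not close this: the Menger augmentation reroutes the original-graph segments into a new family of $r+1$ disjoint $\Anc(\Z)\to\Y$ paths whose access requirements in the replica layers are uncontrolled, and you give no argument that the $k$ available layers (rather than, say, $r+1$ of them) can always host disjoint accesses for the rerouted family.

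The missing idea is structural rather than flow-theoretic. Decompose any vertex cut of $\hat G^{(k)}$ separating $\Z'$ from $\Y$ as $\SH_0\cup\bigcup_{l=2}^{k}\SH_l$, with $\SH_0$ in the original copy and $\SH_l$ in layer $l$. Tracing a path $z^{(l)}\rightsquigarrow v^{(l)}\to v\rightsquigarrow y$ back to the trek $(v\rightsquigarrow z,\,v\rightsquigarrow y)$ in $G$ shows that the cut is valid iff, for every layer $l$, the pair $(\SH_l,\SH_0)$ t-separates $(\Z,\Y)$ in $G$. These per-layer conditions are identical and independent given $\SH_0$, so the minimum cut equals $\min_{\SH_0}\bigl[|\SH_0|+(k-1)\,h(\SH_0)\bigr]$, where $h(\SH_0)$ is the smallest $|\SH_\Z|$ with $(\SH_\Z,\SH_0)$ t-separating $(\Z,\Y)$. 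As a pointwise minimum of affine functions of $k$ this quantity is concave in $k$; a non-decreasing concave integer sequence with one flat step is constant thereafter, and its limiting value is forced onto minimizers with $h(\SH_0)=0$, i.e.\ onto vertex cuts from $\Anc(\Z)$ to $\Y$, giving $\TIN(\Z,\Y)$ by \cref{thm:graph_criteria_omega_ZY}. This yields all three assertions at once (under \cref{assum:generic}, which your argument implicitly needs whenever it converts ranks into cuts). The paper itself does not spell this step out either---it simply asserts the theorem after \cref{thm:graph_criteria_cumulant}---but your write-up, as it stands, leaves the ``no premature plateau'' claim unproved.
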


The original $\GIN$ condition using only covariance matrix can be viewed as a special case, which could be implemented as ``$\operatorname{rank}(\Psi_{\Z;\Y}^{(2)}) = \operatorname{rank}(\Psi_{\Z;\Y}^{(3)})$''.

Note that independence test is not used in this method. We could also use independence tests to test whether $\operatorname{null}(\Psi_{\Z;\Y}^{(k)})$
is equal to $\OmegaZY$, just like the 2-steps method in~\cref{sec:two_steps_estimation}. Independence yields zero cumulants, and also yields independence among functions of variables. Hence in term of solving equations system, $\operatorname{null}(\Psi_{\Z;\Y}^{(k)})$ and $\cov(f(\Z),\Y)\w = \mathbf{0}$ are both correct. However, the latter does not have additional graphical criteria as~\cref{thm:graph_criteria_cumulant}. Empirically, the latter performs better, since higher order cumulants yield higher order exponential, which is sensitive to outliers.

\subsection{For \texorpdfstring{$\operatorname{TIN-2steps}$}{TEXT}, \texorpdfstring{$\operatorname{TIN-subsets}$}{TEXT}, and \texorpdfstring{$\operatorname{TIN-ISA}$}{TEXT}}
Implementation details for these three methods can be referred in~\cref{app:implementation_details}. Specifically, $\operatorname{TIN-ISA}$ directly follows the Independent Subspace Analysis (ISA) from the original paper~\citep{theis2006towards}.

\section{Discussions}\label{app:discussions}
\subsection{Details on Assumptions}\label{app:assumptions}
In this paper, except for the LiNGAM assumption for the causal model, we also give~\cref{assum:rank_faithfulness} in~\cref{sec:approach}:

\RANKFAITHFULNESS*

Roughly speaking, \cref{assum:rank_faithfulness} assumes there are no edge parameter couplings to produce coincidental low rank. Note that violation of \cref{assum:rank_faithfulness} is of Lebesgue measure 0, and LiNGAM is testable. Here we discuss more details on~\cref{assum:rank_faithfulness} by two examples of violation:

\begin{equation}\label{Eq:violation1}
    \begin{tikzpicture}[scale=0.6, line width=0.5pt, inner sep=0.2mm, shorten >=.1pt, shorten <=.1pt]
		\draw (0, 0) node(1)  {{\footnotesize\,$X_1$\,}};
		\draw (2, 0) node(2)  {{\footnotesize\,$X_2$\,}};
		\draw (4, 0) node(3)  {{\footnotesize\,$X_3$\,}};
		\draw[-arcsq] (1) -- (2) node[pos=0.5, above=2pt] {\scriptsize{$a$}}; 
		\draw[-arcsq] (2) -- (3) node[pos=0.5, above=2pt] {\scriptsize{$b$}}; 
		\draw[-arcsq] (1) to [bend right=25] (3);
		\node[below = of 2,xshift=0em,yshift=5.0ex] {\scriptsize{$c$}};
	\end{tikzpicture}
\end{equation}

\underline{\textit{Violation example 1}}: Consider the graph in~\cref{Eq:violation1}, if coincidentally the edge weights $c=-ab$, then the noise components $E_1$ will be cancelled from $X_3$, and marginally $X_1\indep X_3$. In this violation, graphically $\Anc(X_3)=\{X_1,X_2,X_3\}$, but the column indices of $\B_{X_3,:}$ with non-zero entries is just $\{X_2,X_3\}$.

\begin{equation}\label{Eq:violation2}
    \begin{tikzpicture}[anchor=base, baseline=6ex, scale=0.6, line width=0.5pt, inner sep=0.2mm, shorten >=.1pt, shorten <=.1pt]
        \draw (2, 3) node(1)  {{\footnotesize\,$X_1$\,}};
		\draw (2, 0) node(2)  {{\footnotesize\,$X_2$\,}};
		\draw (0, 1.5) node(3)  {{\footnotesize\,$X_3$\,}};
		\draw (4, 1.5) node(4)  {{\footnotesize\,$X_4$\,}};
		\draw[-arcsq] (2) -- (3) node[pos=0.5, below=2pt] {\scriptsize{$a$}}; 
		\draw[-arcsq] (2) -- (4) node[pos=0.5, below=2pt] {\scriptsize{$b$}}; 
		\draw[-arcsq] (1) -- (3) node[pos=0.5, above=2pt] {\scriptsize{$c$}}; 
		\draw[-arcsq] (1) -- (4) node[pos=0.5, above=2pt] {\scriptsize{$d$}}; 
		\draw[-arcsq] (3) -- (4) node[pos=0.5, above=2pt] {\scriptsize{$e$}}; 
	\end{tikzpicture}; \ 
	\A= \begin{bmatrix} 
    0 & 0 & 0 & 0\\
    0 & 0 & 0 & 0\\
    c & a & 0 & 0\\
    d & b & e & 0
\end{bmatrix}; \ \B= \begin{bmatrix} 
    1 & 0 & 0 & 0\\
    0 & 1 & 0 & 0\\
    c & a & 1 & 0\\
    ce+d & ae+b & e & 1
\end{bmatrix}
\end{equation}
\underline{\textit{Violation example 2}}: Consider the graph in~\cref{Eq:violation2}. Let $\Z\coloneqq\{X_1,X_2\}$ and $\Y\coloneqq \{X_3,X_4\}$, by the graphical criteria we have $\TIN(\Z,\Y)=2$, with the critical vertex cut $\SH_{\Z;\Y}^* = \{X_3,X_4\}$. Mathematically, $\B_{\Y;\operatorname{nzcol}(\B_{\Z,:})} = \begin{bmatrix}c & a\\ ce+d & ae+b \end{bmatrix}$, which has rank $2$ under generic parameters choice. However, if $bc=ad$, then coincidentally the rank will drop to $1$, and thus~\cref{assum:rank_faithfulness} is violated. Note that in this violation example, there is no noise cancelling (like violation example 1), i.e., here $\operatorname{nzcol}(\B_{\Z,:}$ is exactly $\Anc(\Z)$, but there is still coincidental low rank by parameter coupling.

Now we further discuss an example where~\cref{assum:rank_faithfulness} is satisfied (and thus is a valid case in this paper), but is not a valid case in the trek-separation paper~\citep{sullivant2010trek} or the $\GIN$ paper~\citep{xie2020generalized}:

\begin{equation}\label{Eq:violation3}
    \begin{tikzpicture}[anchor=base, baseline=6ex, scale=0.6, line width=0.5pt, inner sep=0.2mm, shorten >=.1pt, shorten <=.1pt]
        \draw (0, 2) node(1)  {{\footnotesize\,$X_1$\,}};
		\draw (1, 0.7) node(2)  {{\footnotesize\,$X_2$\,}};
		\draw (2,2) node(3)  {{\footnotesize\,$X_3$\,}};
		\draw (4, 2) node(4)  {{\footnotesize\,$X_4$\,}};
		\draw[-arcsq] (1) -- (2) node[pos=0.4, left=2pt] {\scriptsize{$1$}};
		\draw[-arcsq] (1) -- (3) node[pos=0.5, above=2pt] {\scriptsize{$2$}}; 
		\draw[-arcsq] (2) -- (3) node[pos=0.2, right=2.0pt] {\scriptsize{$-1$}}; 
		\draw[-arcsq] (3) -- (4) node[pos=0.4, above=2pt] {\scriptsize{$1$}}; 
	\end{tikzpicture}; \ 
	\A= \begin{bmatrix} 
    0 & 0 & 0 & 0\\
    1 & 0 & 0 & 0\\
    2 & -1 & 0 & 0\\
    0 & 0 & 1 & 0
\end{bmatrix}; \ \B= \begin{bmatrix} 
    1 & 0 & 0 & 0\\
    1 & 1 & 0 & 0\\
    1 & -1 & 1 & 0\\
    1 & -1 & 1 & 1
\end{bmatrix}
\end{equation}
\underline{\textit{Satisfaction example 3}}: Consider the graph in~\cref{Eq:violation2}. For every pair of $\Z,\Y$, there is no coincidental low rank in $\B_{\Y,\Anc(\Z)}$. Hence, \cref{assum:rank_faithfulness} is satisfied. E.g., let $\Z\coloneqq \{X_2\}, \Y\coloneqq \{X_3,X_4\}$, by the graphical criteria $\TIN(\Z,\Y)=1$ (with $\SH_{\Z;\Y}^* = \{X_3\}$), and $\B_{\Y,\Anc(\Z)}$ is also of rank $1$. However, if we carefully choose noise components' parameters so that the variance of exogenous noise $E_1$ and $E_2$ are equal ($\var(E_1) = \var(E_2)$), then the variance-covariance matrix $\cov(\{X_2\}, \{X_3,X_4\})$ would be $[0 \quad 0]$ (coincidentally dropped to rank $1$). This coincidental low rank is due to noise parameters, and will not affect our proposed method in this paper, because we directly find $\OmegaZY$. However, e.g., in $\GIN$ where $\w$ is characterized by 2nd-order variance-covariance matrix, by solving equation here, any $w\in\Real^2$ is a solution. Then, not every linear combination of $X_3$ and $X_4$ is independent to $X_2$, so $\GIN$ will output `$\GIN(\Z,\Y)$ violated' in this case, though according to the graphical criteria, $\GIN(\Z,\Y)$ is satisfied here.

\subsection{More than Ordered Group Decomposition can be Identified}\label{app:more_than_ordered_group_decomp}
In this paper, we use the $\TIN$ condition to identify the ordered group decomposition of $\Gtilde$ in the measurement error model. Specifically, we only use a special type of $\TIN$, one-and-others (\cref{lem:egin_one_and_rest}). However, actually by using the $\TIN$ condition over more general pairs of $\Z,\Y$, more information of $\Gtilde$ can be identified.

For example, in the chain structure (\cref{fig:chain}) and the fully connected DAG (\cref{fig:fully_connected}), the ordered group decomposition are both $\{\Xt_1\}\rightarrow \{\Xt_2\}\rightarrow \cdots \rightarrow\{\Xt_{n-2}\}\rightarrow\{\Xt_{n-1}, \Xt_n\}$. However, the two can actually be distinguished: In the fully connected DAG, $\operatorname{TIN}(\{X_2\}, \{X_3,\cdots,X_n\})=2$, while in the chain structure, $\operatorname{TIN}(\{X_2\}, \{X_3,\cdots,X_n\})=1$. Even under a same pair of $\Z,\Y$, the $\w$ degeneration may be different. E.g., $\TIN(\{X_2\}, \{X_1,X_3,X_4,\cdots\}) = 2$ in both graphs. However, in the chain structure, $\w$ is degenerated on the index $X_1$ (i.e., the linear combination of $\Y$ cannot include $X_1$. If $\omega_1 X_1 + \omega_3 X_3 + \omega_4 X_4 + \cdots$ is independent to $X_2$, then $\omega_1$ must be zero), while there is no degeneration of $\w$ in the fully connected DAG.

Generally speaking, our final objective is to identify an ``equivalence class'' of $\Gtilde$ w.r.t. the $\TIN$ condition. We have talked about the concept of ``unidentifiable'' in~\cref{sec:motivation}. Here, two graphs (either non-isomorphic or isomorphic but with labelling permutation) are unidentifiable w.r.t. the $\TIN$ condition, if and only if for any two pairs $\Z,\Y$, $\TIN(\Z,\Y)$ are same (with same degeneration).

About ``equivalence class'', we already knew some features that an equivalence class should possess, e.g., a variable is naturally unidentifiable with its pure leaf child in $\tilde{G}$ (see~\cref{def:pure_leaf_child}). Apparently, there are more such features to be discovered. Here are some of the examples:

\begin{example}[Equivalence class for the chain structure]
Consider a chain structure with $5$ nodes $\tilde{X}_1\rightarrow \cdots \rightarrow \tilde{X}_5$, and the following graphs with $5$ nodes:

\begin{enumerate}[noitemsep,topsep=-3pt]
\item $5$ edges: $\tilde{X}_1\rightarrow \cdots \rightarrow \tilde{X}_5$, with an additional $\tilde{X}_3\rightarrow\tilde{X}_5$.
\item $5$ edges: $\tilde{X}_1\rightarrow \cdots \rightarrow \tilde{X}_5$, with an additional $\tilde{X}_2\rightarrow\tilde{X}_4$.
\item $5$ edges: $\tilde{X}_1\rightarrow \cdots \rightarrow \tilde{X}_5$, with an additional $\tilde{X}_1\rightarrow\tilde{X}_3$.
\item $6$ edges: $\tilde{X}_1\rightarrow \cdots \rightarrow \tilde{X}_5$, with additional $\tilde{X}_1\rightarrow\tilde{X}_3$ and $\tilde{X}_3\rightarrow\tilde{X}_5$.
\end{enumerate}

For these five non-isomorphic graphs, with two equivalent permutations of each (swap the labeling of $\tilde{X}_4$ and $\tilde{X}_5$) - these $10$ graphs form an equivalence class. One might be curious: what if a graph with one more edge, i.e.,

\begin{enumerate}[noitemsep,topsep=-3pt]
\item[5.] $7$ edges: edges: $\tilde{X}_1\rightarrow \cdots \rightarrow \tilde{X}_5$, with additional $\tilde{X}_1\rightarrow\tilde{X}_3$, $\tilde{X}_2\rightarrow\tilde{X}_4$ and $\tilde{X}_3\rightarrow\tilde{X}_5$.
\end{enumerate}

However, this graph is no longer in the equivalence class. For example, $\operatorname{TIN}(X_3, X_{4,5})=1$ for the chain structure (and its equivalence class), while $\operatorname{TIN}(X_3, X_{4,5})=2$ for this graph.
\end{example}

\begin{example}[An equivalence class with one unique graph] Consider a $\tilde{G}$ with $5$ nodes and $7$ edges: $\tilde{X}_1\rightarrow \{\tilde{X}_2, \tilde{X}_3, \tilde{X}_4\}$, $\tilde{X}_2\rightarrow\{\tilde{X}_3, \tilde{X}_5\}$, and $\tilde{X}_3\rightarrow\{\tilde{X}_4, \tilde{X}_5\}$: surprisingly, its equivalence class contains only one graph, itself. I.e., by $\TIN$ conditions this structure should be uniquely recovered.
\end{example}

With the equivalence class, the identifiability result could be improved, and constrained O-ICA may be further applied to identify a final graph. It would be an interesting future work to characterize the ``equivalence class'' w.r.t. $\TIN$, and then design an algorithm to identify it.

\subsection{More than Dimension of \texorpdfstring{$\OmegaZY$}{TEXT}: Parameters}\label{app:more_than_dimension}
Currently we only care about the \textit{dimension} of the independent subspace $\Omega_{\Z;\Y}$, but not the exact \textit{parameters}. If we have obtained exactly the $\Omega_{\Z;\Y}$, we could write its basis matrix $M_{\Omega_{\Z;\Y}}$ in shape $|\Y| \times \operatorname{dim}(\Omega_{\Z;\Y})$, with each column vector being a basis. Then, the subspace spanned by row vectors of $\B_{\Y, \Anc(\Z)}$, which reflects edge parameters, is exactly the left nullspace of $M_{\Omega_{\Z;\Y}}$.

The degeneration of $\w$ we discussed in~\cref{app:more_than_dimension,graph_criteria_degeneration} is actually a special case of recovering information from $\OmegaZY$ parameters. For edge parameters, it means that rank of $\B_{\Y, \Anc(\Z)}$ will drop one if deleting the respective degenerated columns in $\Y$. More general exploitation of $\OmegaZY$ parameters is an interesting future work.

\subsection{More Possible Solutions for Estimation of \texorpdfstring{$\OmegaZY$}{TEXT}}\label{app:more_possible_solution_estimate_OmegaZY}
In~\cref{sec:estimation} we propose four methods to estimate $\OmegaZY$: tackling down to subsets of $\Y$ (\cref{sec:tackle_down_Y_subsets}), constrained independent subspace analysis (ISA) (\cref{sec:ISA_for_estimation}), stacked cumulants’ ranks stopped increasing (\cref{sec:stacked_cumulants_ranks_stop}), and $\TIN$ in two steps: solve equations, and then test for independence (\cref{sec:two_steps_estimation}). Generally, reliable estimation of $\OmegaZY$ can be formulated as an orthogonal research problem, and we believe that there exists more solutions.

For example, if we only care about the dimension of $\Omega_{\Z;\Y}$, the following heuristic method might help. The intuition is that, uniformly sample infinite many random points on the surface of a unit sphere (centered on origin point) at $\mathbb{R}^n$, denote $d^{(k)}$ the average distance from these points to a subspace in $\mathbb{R}^n$ with dimension $k$ ($0\leq k \leq n$). Then this average distance is monotonic over $k$: $d^{(k_1)}<d^{(k_2)}$ if and only if $k_1>k_2$. For example, on an 2D circle, $d^{(0)}=1$ (to center; radius), $d^{(1)}=2/\pi$ (to diameter), and $d^{(2)}=0$ (already on 2D); on a 3D sphere surface, $d^{(0)}=1$ (to center; radius), $d^{(1)}=\pi/4$ (to diameter), $d^{(2)}=1/2$ (to diameter plane), and $d^{(3)}=0$ (already on 3D).

If we assume the independence tests return a bool value (independent or not), then this method will not help, because generally, the measure of $\OmegaZY$ relative to $\Real^{|\Y|}$ is always zero. However, if we assume that, for a unit vector $\omega\in\mathbb{R}^{|\Y|}$, there exists a monotonic relationship between the independence strength of $\operatorname{Ind}(\omega^\intercal\Y; \Z)$ (e.g., mutual information) and the distance to the subspace $\operatorname{dist}(\omega; \Omega_{\Z;\Y})$, then we could have a non-parametric method to recover $\Gtilde$: for each variable $X_i$, uniformly sample many $\{\omega_l\}_{l=1,\cdots}$ from $\mathbb{R}^{(n-1)}$ and calculate the average independence $\operatorname{avg}_{l}{\operatorname{Ind}(\omega_l^\intercal [\X\backslash X_i]; X_i)}$, then sort $X_i$ by their respective average independence (i.e., dimensions of there respective $\Omega_{\Z;\Y}$) to get an estimation of the group ordering.

\subsection{\newcontent{What if Causal Sufficiency is Not Satisfied in \texorpdfstring{$\Gtilde$}{TEXT}?}}\label{app:causal_sufficiency_violated}

In this paper we assumed causal sufficiency relative to $\Xtilde$. Though it is reasonable to assume causal sufficiency in this context (which, to the best of our knowledge, is indeed a common assumption in the current literature of causal discovery with measurement error), this assumption itself, is a strong one and is not testable. Thus, it would be interesting to investigate the case where causal sufficiency is violated (in a sense of ``latents of latents''): Will $\TIN$-based method still output a correct ordering? If not, by which correction rules or algorithm relaxations can the identifiability be still partially preserved? We leave this as an interesting future work. For now, we try to provide some hints from examples (where we still use the~\cref{lem:egin_one_and_rest}-based method in this paper):

\begin{example}[A still (partially) identifiable case]
Consider a chain structure $\tilde{X}_1\rightarrow \tilde{X}_2 \rightarrow \cdots \rightarrow \tilde{X}_n$ (or similarly, a fully connected DAG) with a common hidden confounder $\tilde{L}$ pointing to them all: $\tilde{L} \rightarrow \{\tilde{X}_i\}_{i=1}^n$. If $\tilde{L}$ is not measured and only measurements $\mathbf{X}=\{X_1,\cdots,X_n\}$ are available, we have now: $\texttt{ord}(X_1)=\operatorname{TIN}(X_1, \mathbf{X}\backslash X_1)=2$, $\texttt{ord}(X_2)=3$, $\cdots$, $\texttt{ord}(X_{n-3})=n-2$, and $\texttt{ord}(X_{n-2})=\texttt{ord}(X_{n-1})=\texttt{ord}(X_{n})=n-1$. We shall see that: 1) The causal ordering of all but the last 3 variables is identifiable. While without $\tilde{L}$ (our previous result), this identifiability result is all but the last 2 variables (see~\cref{eg:review_egin_graphical_criteria}), and 2) the existence of hidden (root) confounder(s) will also be reported, since there is no root (with $\texttt{ord}=1$) found across measurements $\mathbf{X}$.
\end{example}

\begin{example}[A no-longer identifiable case]
Consider a simple fork $\tilde{X}_2\leftarrow \tilde{X}_1 \rightarrow \tilde{X}_3$, with a hidden confounder $\tilde{L}$: $\tilde{L}\rightarrow \tilde{X}_1$ and $\tilde{L}\rightarrow \tilde{X}_2$. Then, $\texttt{ord}(X_1)=\operatorname{TIN}(X_1, \mathbf{X}\backslash X_1)=2$, $\texttt{ord}(X_2)=1$, and $\texttt{ord}(X_3)=2$. Sorting by $\texttt{ord}$, we have the group decomposition as $\{\tilde{X}_2\}\rightarrow \{\tilde{X}_1, \tilde{X}_3\}$, while this is incorrect: there exists directed edge(s) from later groups to earlier groups, $\tilde{X}_1 \rightarrow \tilde{X}_2$.
\end{example}

\subsection{\newcontent{What if some Measurements are Caused by Multiple Latent Variables?}}
In this paper, we consider the measurement error model, where each measurement is caused by only one latent variable. For $\GIN$, it can generally handle the cases where measurements are caused by multiple latent variables, as long as each latent variable has enough pure indicators. Interestingly however, we find that this may also be relaxed for our case (where there are not enough pure indicators), and our $\TIN$-based method may still work (in identifying the correct group ordering). See below for some simple examples:

Consider a $3$-nodes chain structure $\tilde{A}\rightarrow \tilde{B} \rightarrow \tilde{C}$, and their respective measurements $A,B,C$. We have the ordered group decomposition $\{\tilde{A}\} \rightarrow \{\tilde{B}, \tilde{C}\}$, with $\texttt{ord}$ being $1$ and $2$. Then, what if we add an edge from a latent variable to another measured variable? There are $3\times 2=6$ ways of adding an edge. Surprisingly, among these $6$ ways, there are $5$ which preserves exactly the same $\TIN$ results over $A,B,C$. The only one difference is by adding $\tilde{C}\rightarrow A$, where $\TIN(A,BC)=2$, instead of $1$. It would be interesting to generalize this observation: What if more nodes? What if more edges?

\section{Implementation and Evaluation}\label{app:sec_eval_all}
\vspace{-1em}
\subsection{Implementation Details}\label{app:implementation_details}
In this section we provide the information required to reproduce our results reported in the main text. We also commit to making our implementations of $\TIN$ public.
\vspace{-0.5em}
\paragraph{Simulation setup} In simulation we consider specifically two cases: fully connected DAG (\cref{fig:fully_connected}) and chain structure (\cref{fig:chain}), of which the ordered group decomposition are both $\Xt_1\veryshortarrow\cdotsshort\veryshortarrow\Xt_{n-2},\Xt_{n-1,n}$. We consider $\Gtilde$ with the number of vertices $n=3,\cdotsshort,10$. Edges weights (i.e., the nonzero entries of matrix $\A$) are drawn uniformly from $[-0.9,-0.5]\cup[0.5,0.9]$. Exogenous noises $\Etilde$ are sampled from uniform $\cup [0,1]$ to the power of $c$, $c\sim\cup[5,7]$, and measurement errors are sampled from Gaussian $\mathcal{N}(0,1)$ to the power of $c$, $c\sim\cup[2,4]$. Sample size is $5,000$. Observations are generated by $X_i=\Xt_i+E_i$. To show the effect of measurement error, we simulate with noise-to-signal ratio $\operatorname{NSR}\coloneqq \var(E_i)/\var(\Xt_i)$ in $\{0.5,1,2,3,4\}$. On each configuration (under a graph type, measurement noise scaling, and the number of vertices), $50$ random graphs are generated for repeated experiments.
\vspace{-0.5em}
\paragraph{PC} We use the implementation from the \texttt{causal-learn} package\footnote{https://github.com/cmu-phil/causal-learn/blob/main/causallearn/search/ConstraintBased/PC.py}. Kernel-based conditional independence test~\citep{zhang2012kernel} is used. For speed consideration, datasets are downsampled to $1,000$ on PC runs. The significance level \texttt{alpha} is set to $0.05$. to~\cref{def:ordered_group_decomposition}.

\paragraph{GES} We use the implementation from the \texttt{causal-learn} package\footnote{https://github.com/cmu-phil/causal-learn/blob/main/causallearn/search/ScoreBased/GES.py}. The score used is \texttt{local-BIC-score}~\citep{schwarz1978estimating}.

\paragraph{Direct-LiNGAM and ICA-LiNGAM} We use the implementation from the \texttt{lingam} package\footnote{https://github.com/cdt15/lingam}. Note that for Direct-LiNGAM, actually the method used is based on pairwise likelihood ratios~\citep{hyvarinen2013pairwise}.

\paragraph{CAMME-OICA} We use the implementation from \texttt{LFOICA}\footnote{https://github.com/dingchenwei/Likelihood-free\_OICA} (Likelihood-Free Overcomplete ICA). It estimates the mixing matrix by first transforming random noise into components, and then mimic the mixing procedure from components to noise with MMD score as a metric.

Below we give details on implementations of $\TIN$. Specifically,

\paragraph{Independence test} We use the \texttt{HSIC} (Hilbert-Schmidt independence criterion) test~\citep{gretton2007kernel} with the implementation from \texttt{lingam} package~\footnote{https://github.com/cdt15/lingam/blob/master/lingam/hsic.py}. The kernel width is set to $0.1$ times the standard deviation of the data samples. The significance level \texttt{alpha} of p-value is set to $0.05$. Note that when the noise-to-signal ratio is large (e.g. $>3$), usually observed variables are already `independent enough' (i.e., with p-value given by HSIC test on raw data samples already $>0.05$). In this case, we use the difference of $\frac{1000*\text{severity}}{\text{sample size}}$ between $\Z;\Y$ and $\Z;\wTY$ to show how much independence is \textit{`gained'} by linear transformation. The threshold for this criterion is set to $0.5$.
 
\paragraph{TIN-ISA} We implement the constrained ISA where the de-mixing matrix is masked to only update the lower-right $|\Y|\times |\Y|$ block $\mathbf{W}_{\Y\Y}$, with upper-left $|\Z|\times |\Z|$ block fixed as the identity and elsewhere fixed as zero. We follow~\citep{pham1997blind} for the estimation of conditional score function. Independence between $\Z$ and $\wTY$ for each row of $\mathbf{W}_{\Y\Y}$ is then tested by HSIC test, as is described aboce.

\paragraph{TIN-rank} Numerical rank of a 2D matrix is calculated by SVD (singular value decomposition), with tolerance $\epsilon$ set to $0.005$. Singular values below threshold $T$ are considered zero, where $T=\epsilon*\operatorname{max}(S)*\max(M,N)$. $S$ is all singular values, and $M,N$ are shape of the 2D matrix. According to $\cref{thm:rank_stopped_increasing}$, we use the rank where stacked 2D slices of cumulants stops increasing rank as the output of $\TIN$.

\paragraph{TIN-2steps} To solve euqations system $\{\cov(f(\Z),\Y)\w = \mathbf{0}\}$, functions $f$ contain: $\Z$, $\Z^2$, $\Z^3$, $\vert \Z \vert$, $e^{\Z}$, $\log(|\Z|)$, $\sin(\Z)$, $\cos(\Z)$, $\operatorname{sigmoid}(\Z)$, $\tanh(\Z)$. Nullspace is calculated by SVD, while we do not set a hard threshold of singular value to determine its space (like TIN-rank). Instead, we test HSIC between $\Z$ and $\wTY$ for each $\w$ in the $|\Y| \times |\Y|$ unitary matrix $\mathbf{V}$, and count the number of independence achieved.

\paragraph{TIN-subsets} The core to find the existence of transformed independence is similar to TIN-2steps. Then, for the part of traversing over $\Y$'s subsets, ``all $\Y'$ ...'' and ``exists a $\Y'$ ...'' are characterized by $90\%$ and $10\%$ percentile of the independence statistics (e.g., p-value of HSIC test) respectively.

\paragraph{Noise synthesis} Edges weights (i.e., the nonzero entries of matrix $\A$) are drawn uniformly from $[-0.9,-0.5]\cup[0.5,0.9]$. Exogenous noises $\Etilde$ of the latent variables are sampled from uniform $\cup [0,1]$ to the power of $c$, $c\sim\cup[5,7]$, and measurement errors are sampled from Gaussian $\mathcal{N}(0,1)$ to the power of $c$, $c\sim\cup[2,4]$. Sample size is $5,000$. Below we show a synthetic dataset with $\Gtilde$ being a fully connected DAG with $n=7$, and the noise-to-signal ratio being $3$ (\cref{fig:plot_panels_correlation,fig:plot_noise_data}):

\begin{figure}[h]
\centering
\includegraphics[width=0.7\linewidth, ]{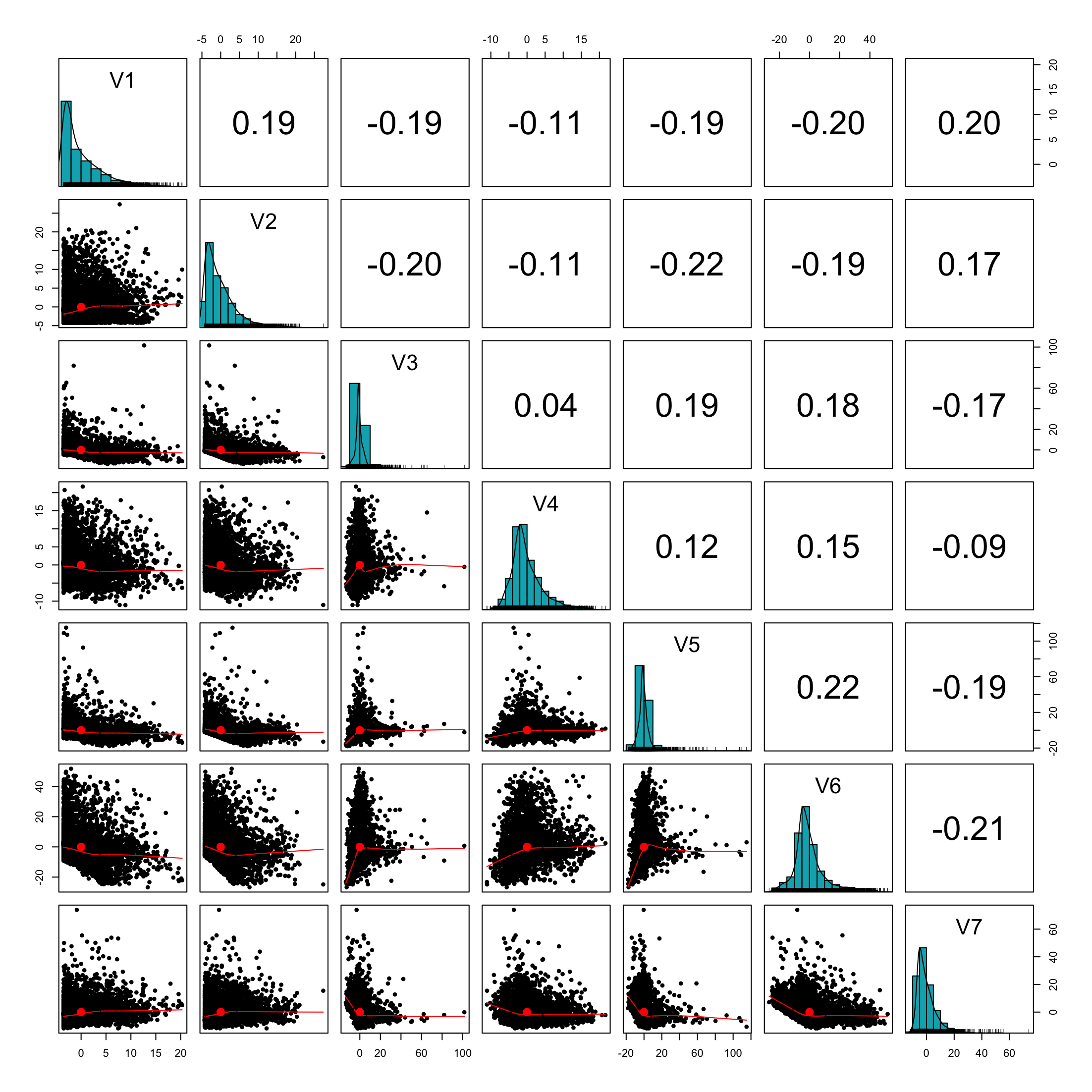}
\caption{The scatter plot matrix for seven observed variables in $\X$. From the first column we could see that, though $\Xt_1$ is a root variable in $\Gtilde$, regressing neither of $\{X_2,\cdots, X_7\}$ on $X_1$ will make the regression residual independent to the regressor $X_1$, due to the presence of measurement error.}
\label{fig:plot_panels_correlation}
\end{figure}

\begin{figure}[h]
\centering
\includegraphics[width=1.\linewidth, ]{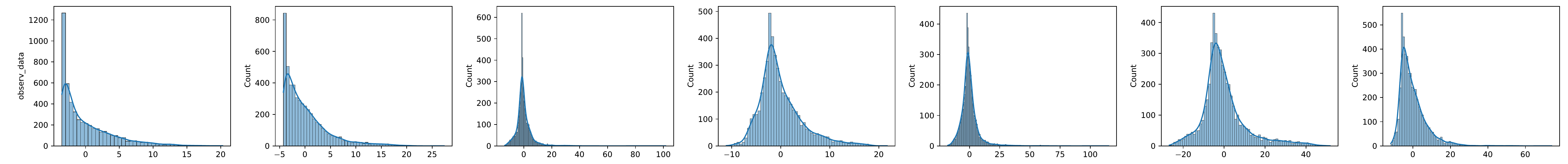}
\includegraphics[width=1.\linewidth, ]{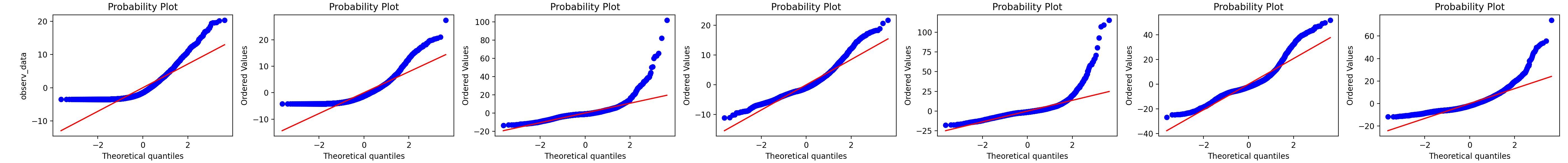}
\caption{The histogram plot and Q-Q plot to Gaussian distribution of seven observed variables in $\X$. We could see the non-Gaussianity of data.}
\label{fig:plot_noise_data}
\end{figure}

\subsection{Evaluation Details}\label{app:evaluation_details}
To evaluate the output group ordering, we use \textit{Kendall tau distance}~\citep{kendall1938new} to the ground-truth (in range $[0,1]$, the lower the better). Kendall tau distance counts the number of pairwise disagreements between two orderings. Specifically, for two variables $i,j$ and a grouped ordering $\tau$, we define:

\begin{equation}\label{Eq:cmp_func}
    \texttt{cmp}(i,j,\tau) = \begin{cases}
      1     & \text{ in } \tau, i \text{ is in an earlier group than } j\\
      0     & \text{ in } \tau, i \text{ is in a same group with } j\\
      -1    & \text{ in } \tau, i \text{ is in a later group than } j\\
   \end{cases}
\end{equation}

Then, for the $n$ variables $[n]$ and two ordered group decompositions $\tau_1 , \tau_2$ on them, the Kendall tau distance is defined as:
\begin{equation}\label{Eq:kt_dist}
    \texttt{ktdist}(\tau_1, \tau_2) = \frac{2}{n(n-1)} \sum_{i,j\in[n] , \ i < j} {\operatorname{sign} (\texttt{cmp}(i,j,\tau_1) \neq \texttt{cmp}(i,j,\tau_2))}
\end{equation}

For algorithms returning DAG/PDAG, its ordering is first extracted according to~\cref{def:ordered_group_decomposition}. Specifically, for PDAG, a vertex's ancestors is defined as all vertices that has mixed paths (no directed edges backward) to it.

For intuition, here we give some typical examples: if the true ordering is $\{\{X_1\}, \{X_2\}, \{X_3\}, \{X_4, X_5\}\}$, the following group orderings have the respective distances:
\begin{enumerate}
    \item $\{\{X_5\}, \{X_4\}, \{X_3\}, \{X_2, X_1\}\}$: $1.0$;
    \item $\{\{X_5\}, \{X_4, X_3, X_2, X_1\}\}$: $1.0$;
    \item $\{\{X_1, X_2, X_3, X_4, X_5\}\}$: $0.9$;
    \item $\{\{X_2, X_4\}, \{X_1, X_3, X_5\}\}$: $0.8$;
    \item $\{\{X_1, X_4\}, \{X_2, X_3, X_5\}\}$: $0.7$;
    \item $\{\{X_1, X_2, X_4\}, \{X_3, X_5\}\}$: $0.6$;
    \item $\{\{X_2\}, \{X_1, X_4\}, \{X_3, X_5\}\}$: $0.5$;
    \item $\{\{X_1, X_3\}, \{X_2, X_4, X_5\}\}$: $0.4$;
    \item $\{\{X_1, X_2\}, \{X_3, X_4, X_5\}\}$: $0.3$;
    \item $\{\{X_1, X_2\}, \{X_3\}, \{X_4, X_5\}\}$: $0.1$;
\end{enumerate}

\section{Detailed Elaboration on Examples}\label{app:examples}
To explain the $\TIN$ condition's definition, characterization, and graphical criteria, in this section we provide some step-by-step derivation of typical examples.

\subsection{Motivation Examples}
We first give an example of $\GIN$ to show our motivation (\cref{sec:motivation}):
\begin{example}[$\GIN$ on chain structure~\cref{fig:chain}]\label{app:eg_gin_on_chain_details}
Let $\Z\coloneqq\{X_1\}, \Y\coloneqq\{X_2,X_3,X_4,X_5\}$. Calculate $\cov(X_1,X_2)=\cov(\Tilde{X}_1 + E_1,\Tilde{X}_2 + E_2) = \cov(\Tilde{X}_1, \Tilde{X}_2) = \cov(\Tilde{X}_1, a\Tilde{X}_1 + \Tilde{E}_2) = a\var(\Tilde{X}_1)$. Similarly, we get the covariance matrix
\begin{equation} \label{Eq:chain_1_2345_cov}
\cov(\Z,\Y) = \begin{bmatrix} 
    a\var(\Tilde{X}_1) & ab\var(\Tilde{X}_1) & abc\var(\Tilde{X}_1) & abcd\var(\Tilde{X}_1)
\end{bmatrix}
\end{equation}
Solve the linear homogeneous equations $\cov(\Z,\Y)\w=\mathbf{0}$, we have $\w = [-bx-bcy-bcdz,x,y,z]^\T$, $x,y,z\in\Real$. Plug $\w$ into $\wTY$:
\begin{equation} \label{Eq:chain_gin_1_2345_wTY}
\begin{aligned}
\omega^\intercal\Y &= (-\cancel{bx}-\cancel{bcy}-\cancel{bcdz})(\cancel{\Tilde{X}_2}+E_2) + x(b\cancel{\Tilde{X}_2}+\Tilde{E}_3+E_3) \\
& \ \ \ \ \ + y(c(b\cancel{\Tilde{X}_2}+\Tilde{E}_3)+\Tilde{E}_4+E_4) + z (d(c(b\cancel{\Tilde{X}_2}+\Tilde{E}_3)+\Tilde{E}_4)+\Tilde{E}_5+E_5) \\
&= \text{does not contain }\{\Tilde{E}_1, \Tilde{E}_2, E_1\}\text{, thus }\omega^\intercal\Y \indep X_1\text{, by the }\darmois\text{~\citep{kagan1973characterization}}.
\end{aligned}\end{equation}
By above we have $\GIN(\{X_1\},\{X_2,\cdots,X_n\})$ satisfied. 
\end{example}

\subsection{Examples of Ordered Group Decomposition}
In~\cref{def:ordered_group_decomposition} we define the ordered group decomposition of a graph. Actually there is slight difference between~\cref{def:ordered_group_decomposition} and Definition 2 in~\citep{zhang2018causal}. According to~\cref{def:ordered_group_decomposition}, when the graph has only one subroots at each step, the two definitions are the same. However, when there are multiple subroots at some step, \cref{def:ordered_group_decomposition} takes all such subroots (and their pure leaf children) as a new group, while in~\citep{zhang2018causal}, each group has one and only one non-leaf node, and thus only one subroot (and its pure leaf children) is taken (and removed from graph) as a new group, which yields multiple ordered group decompositions. Here we only return one ordered group decomposition mainly for simplicity. To obtain the multiple ordered group decompositions defined in~\citep{zhang2018causal} from our result, we could do the following: for each pair of $X_i, X_j$ in a new group with multiple variables, test $\TIN$ with $\Z$ being $X_i$ and $\Y$ being $X_j$ and all variables in the previous groups, to see whether $X_i, X_j$ are in a same \textit{cluster}. Below we give an example to show this slight difference between two definitions:

\begin{example}[Ordered group decomposition with multiple subroots]\label{app:multiple_subroots_ordered_group_decomp}
Consider the graph $D\leftarrow A\rightarrow C\leftarrow B$. The ordered group decomposition we defined in~\cref{def:ordered_group_decomposition} is $\{A,B,D\}\rightarrow\{C\}$, while Definition 2 in~\citep{zhang2018causal} will give two ordered group decompositions: $\{A,D\}\rightarrow\{B,C\}$, and $\{B\}\rightarrow\{A,C,D\}$.
\end{example}

\subsection{A Concrete Example of Using $\TIN$ on Fully Connected DAG}\label{app:example_fullyC}
Consider a fully connected DAG (\cref{fig:fully_connected}) with 4 variables $\{X_1,X_2,X_3,X_4\}^\intercal$ (suppose we have access directly to the measurement-error-free variables and directly test $\TIN$ over them), the edges parameters are:
\begin{equation}
    \A= \begin{bmatrix} 
    0 & 0 & 0 & 0\\
    a & 0 & 0 & 0\\
    b & d & 0 & 0\\
    c & e & f & 0
\end{bmatrix};\B= \begin{bmatrix} 
    1 & 0 & 0 & 0\\
    a & 1 & 0 & 0\\
    ad+b & d & 1 & 0\\
    a(df+e)+bf+c & df+e & f & 1
\end{bmatrix}\end{equation}

\textbf{1)} Let $\Z\coloneqq\{X_1\}, \Y\coloneqq\{X_2, X_3, X_4\}$, we have the independent subspace:
\begin{equation}
\Omega_{\Z;\Y} = \operatorname{null}(\B_{\Y, \Anc(Z)}^\intercal) = \operatorname{null}\left( \begin{bmatrix} 
    a & ad+b & a(df+e)+bf+c
\end{bmatrix}\right),\text{ dimension=2}.
\end{equation}
Two basis of $\Omega_{\Z;\Y}$ are:\begin{equation}
\begin{aligned}
\omega_1 &= \begin{bmatrix} -\frac{b+ad}{a} & 1 & 0\end{bmatrix}^\intercal, \\ \omega_2 &= \begin{bmatrix} -\frac{c+ae+bf+adf}{a} & 0 & 1\end{bmatrix}^\intercal
\end{aligned}
\end{equation}
For any $\omega = k_1\omega_1 + k_2\omega_2, k_1,k_2\in \mathbb{R}$, $\omega^\intercal \Y$ does not contain noise $E_1$, so $\omega^\intercal \Y \indep \Z$. We have $\operatorname{TIN}(\Z,\Y) = |\Y| - \operatorname{dim}({\Omega_{\Z;\Y}}) = 3-2=1$. Graphically, the minimum vertex cut from $\Anc(\{X_1\}) = \{X_1\}$ to $\{X_2, X_3, X_4\}$ is $\{X_1\}$, with size 1. And, $|\Anc(X_1)|=1$.\\

\textbf{2)} Let $\Z\coloneqq\{X_2\}, \Y\coloneqq\{X_1, X_3, X_4\}$, we have the independence subspace:
\begin{equation}
\Omega_{\Z;\Y} = \operatorname{null}(\B_{\Y, \Anc(Z)}^\intercal) = \operatorname{null}\left( \begin{bmatrix} 
    1 & ad+b & a(df+e)+bf+c \\
    0 & d & df+e
\end{bmatrix}\right),\text{ dimension=1}.
\end{equation}
One basis of $\Omega_{\Z;\Y}$ is:\begin{equation}\begin{aligned}
\omega_1 = \begin{bmatrix} \frac{cd-be}{d} & \frac{df+e}{d} & -1\end{bmatrix}^\intercal
\end{aligned}\end{equation}
For any $\omega = k_1\omega_1, k_1\in \mathbb{R}$, $\omega^\intercal \Y$ does not contain noise $E_1, E_2$, so $\omega^\intercal \Y \indep \Z$. We have $\operatorname{TIN}(\Z,\Y) = |\Y| - \operatorname{dim}({\Omega_{\Z;\Y}}) = 3-1=2$. Graphically, the minimum vertex cut from $\Anc(\{X_2\}) = \{X_1, X_2\}$ to $\{X_1, X_3, X_4\}$ is $\{X_1, X_2\}$, with size 2. And, $|\Anc(X_2)|=2$.\\

\textbf{3)} Let $\Z\coloneqq\{X_3\}, \Y\coloneqq\{X_1, X_2, X_4\}$, we have the independence subspace:
\begin{equation}
\Omega_{\Z;\Y} = \operatorname{null}(\B_{\Y, \Anc(Z)}^\intercal) = \operatorname{null}\left( \begin{bmatrix} 
    1 & a & a(df+e)+bf+c \\
    0 & 1 & df+e \\
    0 & 0 & f
\end{bmatrix}\right),\text{ dimension=0}.
\end{equation}
$B_{\Y, \Anc(Z)}^\intercal$ is full column rank, so that there exists no non-zero $\omega$ s.t. $\omega^\intercal\Y\indep\Z$, i.e., $\Omega_{\Z;\Y}$ contains only origin point $\mathbb{R}_\mathbf{0}$, with dimension 0.  We have $\operatorname{TIN}(\Z,\Y) = |\Y| - \operatorname{dim}({\Omega_{\Z;\Y}}) = 3-0=3$. Graphically, the minimum vertex cut from $\Anc(\{X_3\}) = \{X_1, X_2, X_3\}$ to $\{X_1, X_2, X_4\}$ is $\{X_1, X_2, X_3\}$ or $\{X_1, X_2, X_4\}$, with size 3. And, $|\Anc(X_3)|=3$.\\

\textbf{4)} Let $\Z\coloneqq\{X_4\}, \Y\coloneqq\{X_1, X_2, X_3\}$, we have the independence subspace:
\begin{equation}
\Omega_{\Z;\Y} = \operatorname{null}(\B_{\Y, \Anc(Z)}^\intercal) = \operatorname{null}\left( \begin{bmatrix} 
    1 & a & ad+b \\
    0 & 1 & d \\
    0 & 0 & 1 \\
    0 & 0 & 0
\end{bmatrix}\right),\text{ dimension=0}.
\end{equation}
$B_{\Y, \Anc(Z)}^\intercal$ is full column rank, so that there exists no non-zero $\omega$ s.t. $\omega^\intercal\Y\indep\Z$, i.e., $\Omega_{\Z;\Y}$ contains only origin point $\mathbb{R}_\mathbf{0}$, with dimension 0.  We have $\operatorname{TIN}(\Z,\Y) = |\Y| - \operatorname{dim}({\Omega_{\Z;\Y}}) = 3-0=3$. Graphically, the minimum vertex cut from $\Anc(\{X_4\}) = \{X_1, X_2, X_3, X_4\}$ to $\{X_1, X_2, X_3\}$ is $\{X_1, X_2, X_3\}$, with size 3. And, $|\Anc(X_4)|=4$. Since $X_4$ is a leaf node, $\operatorname{TIN}(\Z,\Y)=4-1=3$.

By above, we obtain the ordered group decomposition $\{\{X_1\}, \{X_2\}, \{X_3, X_4\}\}$.

\subsection{A Concrete Example of Using $\TIN$ on the Chain Structure}
To align with~\cref{app:example_fullyC}, here we consider a chain structure with 4 variables $\{X_1,X_2,X_3,X_4\}^\intercal$. The edges parameters are:
\begin{equation}\A= \begin{bmatrix} 
    0 & 0 & 0 & 0\\
    a & 0 & 0 & 0\\
    0 & b & 0 & 0\\
    0 & 0 & c & 0
\end{bmatrix};\B= \begin{bmatrix} 
    1 & 0 & 0 & 0\\
    a & 1 & 0 & 0\\
    ab & b & 1 & 0\\
    abc & bc & c & 1
\end{bmatrix}\end{equation}

\textbf{1)} Let $\Z\coloneqq\{X_1\}, \Y\coloneqq\{X_2, X_3, X_4\}$, we have the independent subspace:
\begin{equation}
\Omega_{\Z;\Y} = \operatorname{null}(\B_{\Y, \Anc(Z)}^\intercal) = \operatorname{null}\left( \begin{bmatrix} 
    a & ab & abc
\end{bmatrix}\right),\text{ dimension=2}.
\end{equation}
Two basis of $\Omega_{\Z;\Y}$ are:\begin{equation}
\begin{aligned}[t]
\omega_1 &= \begin{bmatrix} b & -1 & 0\end{bmatrix}^\intercal, \\ \omega_2 &= \begin{bmatrix} bc & 0 & 1\end{bmatrix}^\intercal
\end{aligned}
\end{equation}
We have $\operatorname{TIN}(\Z,\Y) = |\Y| - \operatorname{dim}({\Omega_{\Z;\Y}}) = 3-2=1$. Analysis is similar to that of~\cref{app:example_fullyC}.\\

\textbf{2)} Let $\Z\coloneqq\{X_2\}, \Y\coloneqq\{X_1, X_3, X_4\}$, we have the independence subspace:
\begin{equation}
\Omega_{\Z;\Y} = \operatorname{null}(\B_{\Y, \Anc(Z)}^\intercal) = \operatorname{null}\left( \begin{bmatrix} 
    1 & ab & abc \\
    0 & b & bc
\end{bmatrix}\right),\text{ dimension=1}.
\end{equation}
One basis of $\Omega_{\Z;\Y}$ is:
\begin{equation}
\begin{aligned}
\omega_1 &= \begin{bmatrix} 0 & c & -1\end{bmatrix}^\intercal
\end{aligned}
\end{equation}
We have $\operatorname{TIN}(\Z,\Y) = |\Y| - \operatorname{dim}({\Omega_{\Z;\Y}}) = 3-1=2$. Analysis is similar to that of~\cref{app:example_fullyC}.\\

\textbf{3)} Let $\Z\coloneqq\{X_3\}, \Y\coloneqq\{X_1, X_2, X_4\}$, we have the independence subspace:
\begin{equation}
\Omega_{\Z;\Y} = \operatorname{null}(\B_{\Y, \Anc(Z)}^\intercal) = \operatorname{null}\left( \begin{bmatrix} 
    1 & a & abc \\
    0 & 1 & bc \\
    0 & 0 & c
\end{bmatrix}\right),\text{ dimension=0}.
\end{equation}
We have $\operatorname{TIN}(\Z,\Y) = |\Y| - \operatorname{dim}({\Omega_{\Z;\Y}}) = 3-0=3$. Analysis is similar to that of~\cref{app:example_fullyC}.\\

\textbf{4)} Let $\Z\coloneqq\{X_4\}, \Y\coloneqq\{X_1, X_2, X_3\}$, we have the independence subspace:
\begin{equation}
\Omega_{\Z;\Y} = \operatorname{null}(\B_{\Y, \Anc(Z)}^\intercal) = \operatorname{null}\left( \begin{bmatrix} 
    1 & a & ab \\
    0 & 1 & b \\
    0 & 0 & 1 \\
    0 & 0 & 0
\end{bmatrix}\right),\text{ dimension=0}.
\end{equation}
We have $\operatorname{TIN}(\Z,\Y) = |\Y| - \operatorname{dim}({\Omega_{\Z;\Y}}) = 3-0=3$. Analysis is similar to that of~\cref{app:example_fullyC}.\\

By above, we get the group ordering $\{\{X_1\}, \{X_2\}, \{X_3, X_4\}\}$. Recall the chain structure with triangular head example, we could distinguish it from the chain structure with ordered group decomposition $\{\{X_1\}, \{X_2, X_3, X_4\}\}$.

\subsection{\newcontent{Experiments on another Real-world Dataset: Teacher Burnout}}\label{app:experiments_on_teacher_burnout}
Except for Sach's dataset discussed in~\cref{sec:real_world_data}, we also conduct experiments on another real-world dataset, Teacher Burnout~\citep{byrne2013structural}. It is from a sociology survey conducted by Barbara Byrne to investigate the influence on the three facets (emotional exhaustion, depersonalization, and personal accomplishment) of full-time elementary teachers' burnout from factors including: organizational (role ambiguity, role conflict, work overload, classroom climate, decision making, superior support, peer support) and personality (self-esteem, external locus of control) variables. Please see chapter six of~\citep{byrne2013structural} for more details about the dataset (Page 161), and the structure (Page 191).

While in the raw dataset, each (latent/target) variable has more than one measurements/indicators, in this experiment we pick only one measurement for each to demonstrate the measurement error situation. Specifically, we pick ten variables (according to the ten latent variables in Figure 6.10 of~\citep{byrne2013structural}): RA$_1$, RC$_1$, CC$_1$, DM$_1$, SS$_1$, SE$_1$, ELC$_1$, EE$_1$, DP$_1$, and PA$_1$. Though for a thorough study of the dataset, one could try other combinations of measurements, e.g., RA$_2$, RC$_3$, ..., in this experiment we only study one combination as above for illustration.

According to Figure 6.10 of~\citep{byrne2013structural}, the ordered group decomposition of the ground-truth underlying causal graph is $\{$RA, RC, CC, DM, SS$\}\veryshortarrow$ $\{$SE, ELC$\}\veryshortarrow$ $\{$EE$\}\veryshortarrow$ $\{$DP, PA$\}$. Result given by $\operatorname{TIN-subsets}$ is $\{$DM, SE$\}\veryshortarrow$ $\{$CC, SS$\}\veryshortarrow$ $\{$RA, ELC$\}\veryshortarrow$ $\{$RC, EE, DP$\}\veryshortarrow$ $\{$PA$\}$ (with the one-over-others $\TIN$ being $5,6,7,8,9$ respectively). This is similar to Byrne's conclusion (the true ordering) according to the domain knowledge. For example, 1) the three facets of burnout (emotional exhaustion, depersonalization, and personal accomplishment) are caused by other factors and are at the end of the ordered groups, 2) decision making, classroom climates and superior support are root causes (in the first two groups), and 3) self-esteem and role conflict influences external locus of control. Interestingly, some of the ordering inconsistent with the ground-truth might also be reasonable to some extent. For example, 1) self-esteem is among the first group (though should be in the second), maybe because it is ``root-like'': it is only caused by two root causes and causes another four variables, 2) decision making and superior support are in the first and second groups respectively (though should both be in the first, as two root causes for self-esteem), maybe because there exists difference in their impact on others, and 3) role ambiguity is in the third group (though should be in the first), maybe because that though it is a root, it has only one child, personal accomplishment, which is a leaf node in the graph; the same may applies to role conflict: though being a root, it is even among the second to last group, which is also echoed by other methods.

Here is an overview of the distance scores and ordered groups returned by all methods:
\begin{enumerate}[noitemsep,topsep=-3pt]
\item $\operatorname{TIN-2steps}$: 0.49, $\{$CC, SE, ELC$\}\veryshortarrow$ $\{$RA, DM, SS$\}\veryshortarrow$ $\{$EE, PA$\}\veryshortarrow$ $\{$RC, DP$\}$.
\item $\operatorname{TIN-subsets}$: 0.47, $\{$DM, SE$\}\veryshortarrow$ $\{$CC, SS$\}\veryshortarrow$ $\{$RA, ELC$\}\veryshortarrow$ $\{$RC, EE, DP$\}\veryshortarrow$ $\{$PA$\}$.
\item $\operatorname{TIN-ISA}$: 0.56, $\{$RA, DM, SE$\}\veryshortarrow$ $\{$SS, ELC$\}\veryshortarrow$ $\{$RC, CC, EE, DP, PA$\}$.
\item $\operatorname{TIN-rank}$: 0.60, $\{$CC, SE$\}\veryshortarrow$ $\{$RA, RC, DM, ELC, DP$\}\veryshortarrow$ $\{$SS, EE, PA$\}$.
\item ICA-LiNGAM: 0.56, $\{$CC$\}\veryshortarrow$ $\{$SE$\}\veryshortarrow$ $\{$ELC$\}\veryshortarrow$ $\{$RA$\}\veryshortarrow$ $\{$SS$\}\veryshortarrow$ $\{$EE$\}\veryshortarrow$ $\{$RC, PA$\}\veryshortarrow$ $\{$DM, DP$\}$.
\item Direct-LiNGAM: 0.73, $\{$DP$\}\veryshortarrow$ $\{$SE$\}\veryshortarrow$ $\{$SS$\}\veryshortarrow$ $\{$RA$\}\veryshortarrow$ $\{$RC, PA$\}\veryshortarrow$ $\{$CC$\}\veryshortarrow$ $\{$EE$\}\veryshortarrow$ $\{$DM, ELC$\}$.
\item CAMME-OICA: 0.78, $\{$CC, SS, EE, DP, PA$\}\veryshortarrow$ $\{$RA, RC, DM, SE, ELC$\}$.
\item PC: 0.73, $\{$RA, RC, CC, DM, SS, SE, ELC, EE, DP, PA$\}$.
\item GES: 0.73, $\{$CC, DM, EE, DP$\}\veryshortarrow$ $\{$RA$\}\veryshortarrow$ $\{$SS, SE, PA$\}\veryshortarrow$ $\{$RC, ELC$\}$.
\item NOTEARS: 0.82, $\{$CC, SE, ELC, PA$\}\veryshortarrow$ $\{$RA, EE, DP$\}\veryshortarrow$ $\{$RC$\}\veryshortarrow$ $\{$DM, SS$\}$.
\item SCORE: 0.76, $\{$SE$\}\veryshortarrow$ $\{$SS$\}\veryshortarrow$ $\{$RA, CC, DM, ELC, EE, DP, PA$\}\veryshortarrow$ $\{$RC$\}$.
\end{enumerate}

\end{document}